\definecolor{Green}{rgb}{0.13, 0.65, 0.3}
\definecolor{Amber}{rgb}{0.3, 0.5, 1.0}
\newcommand{\hatmu}{\widehat{\mu}}
\newcommand{\inner}[1]{ \left\langle {#1} \right\rangle }
\newcommand{\Ind}[1]{ \field{I}{\left\{{#1}\right\}} }
\newcommand{\norm}[1]{\left\|{#1}\right\|}
\newcommand{\wh}[1]{\widehat{#1}}
\newcommand{\naturalnum}{\mathbb{N}}
\newcommand{\bi}{\begin{itemize}}
\newcommand{\ei}{\end{itemize}}
\theoremstyle{theorem} 
	\newtheorem{theorem}{Theorem}[subsection]
	\newtheorem{lemma}[theorem]{Lemma}
	\newtheorem{corollary}[theorem]{Corollary}
	\newtheorem{definition}[theorem]{Definition}
	\newtheorem{remark}[theorem]{Remark}
	\newtheorem{assumption}{Assumption}
\renewcommand{\thetheorem}{%
	\ifnum\value{subsection}>0 
	\thesubsection
	\else
	\thesection
	\fi
	.\arabic{theorem}%
}
\newcommand{\calB}{{\mathcal{B}}}
\newcommand{\calH}{{\mathcal{H}}}
\newcommand{\calX}{{\mathcal{X}}}
\newcommand{\calE}{{\mathcal{E}}}
\newcommand{\htheta}{\wh{\theta}}
\newcommand{\field}[1]{\mathbb{#1}}
\newcommand{\fR}{\field{R}}
\newcommand{\E}{\field{E}}
\DeclareMathOperator*{\Ex}{\mathbb{E}}
\DeclareMathOperator*{\Px}{\mathbb{P}}
\renewcommand{\P}{\field{P}}
\newcommand{\LCB}{{\text{\rm LCB}}}
\newcommand{\UCB}{{\text{\rm UCB}}}
\newcommand{\Reg}{{\text{\rm Reg}}}
\newcommand{\order}{\ensuremath{\mathcal{O}}}
\newcommand{\otil}{\ensuremath{\widetilde{\mathcal{O}}}}
\newcommand{\TV}{\texttt{TV}}
\newcommand{\term}{\texttt{Term}}
\newcommand{\partI}{\text{(I)}}
\newcommand{\partII}{\text{(II)}}
\newcommand{\partIII}{\text{(III)}}
\newcommand{\rbr}[1]{\left(#1\right)}
\newcommand{\sbr}[1]{\left[#1\right]}
\newcommand{\cbr}[1]{\left\{#1\right\}}
\newcommand{\abr}[1]{\left|#1\right|}
\DeclareFontFamily{OMX}{MnSymbolE}{}
\DeclareFontShape{OMX}{MnSymbolE}{m}{n}{
    <-6>  MnSymbolE5
   <6-7>  MnSymbolE6
   <7-8>  MnSymbolE7
   <8-9>  MnSymbolE8
   <9-10> MnSymbolE9
  <10-12> MnSymbolE10
  <12->   MnSymbolE12}{}
\DeclareSymbolFont{mnlargesymbols}{OMX}{MnSymbolE}{m}{n}
\DeclareMathDelimiter{\llangle}{\mathopen}{mnlargesymbols}{'164}{mnlargesymbols}{'164}
\DeclareMathDelimiter{\rrangle}{\mathclose}{mnlargesymbols}{'171}{mnlargesymbols}{'171}
\newcommand{\pref}[1]{\prettyref{#1}}
\newcommand{\savehyperref}[2]{\texorpdfstring{\hyperref[#1]{#2}}{#2}}
\newcommand\numberthis{\addtocounter{equation}{1}\tag{\theequation}}
\newcommand{\optE}{\hat{\calE}}
\newcommand{\SD}{\succeq_{\text{SD}}}
\title{Improved Regret and Contextual Linear Extension for Pandora's Box and Prophet Inequality}
\author{%
  Junyan Liu\thanks{Equal contribution} \\
  University of Washington \\
  \texttt{junyanl1@cs.washington.edu} \\
  \And
  Ziyun Chen\footnotemark[1] \\
  University of Washington \\
  \texttt{ziyuncc@cs.washington.edu} \\
  \AND
  \hspace*{20pt}Kun Wang\footnotemark[1] \\
  \hspace*{20pt}Purdue University \\
  \hspace*{20pt}\texttt{wang5675@purdue.edu} \\
  \And
  \hspace*{20pt}Haipeng Luo \\
  \hspace*{20pt}University of Southern California \\
  \hspace*{20pt}\texttt{haipengl@usc.edu} \\
  \AND
  Lillian J. Ratliff \\
  University of Washington \\
 \texttt{ratliffl@uw.edu}
}
\begin{document}
\doparttoc
\faketableofcontents

\maketitle

\begin{abstract}
We study the Pandora’s Box problem in an online learning setting with semi-bandit feedback. In each round, the learner sequentially pays to open up to $n$ boxes with unknown reward distributions, observes rewards upon opening, and decides when to stop. The utility of the learner is the maximum observed reward minus the cumulative cost of opened boxes, and the goal is to minimize regret defined as the gap between the cumulative expected utility and that of the optimal policy. We propose a new algorithm that achieves $\otil(\sqrt{nT})$ regret after $T$ rounds, which improves the $\otil(n\sqrt{T})$ bound of \citet{agarwal2024semi} and matches the known lower bound up to logarithmic factors. 
To better capture real-life applications, we then extend our results to a natural but challenging contextual linear setting, where each box's expected reward is linear in some known but time-varying $d$-dimensional context and the noise distribution is fixed over time.
We design an algorithm that learns both the linear function and the noise distributions, achieving $\otil(nd\sqrt{T})$ regret. Finally, we show that our techniques also apply to the online Prophet Inequality problem, where the learner must decide immediately whether or not to accept a revealed reward. In both non-contextual and contextual settings, our approach achieves similar improvements and regret bounds.
\end{abstract}

\section{Introduction}

Pandora's Box is a fundamental problem in stochastic optimization, initiated by \citet{weitzman1978optimal}.
In this problem, the learner is given $n$ boxes, each associated with a known cost and a known reward distribution. The learner may open a box at a time, paying the corresponding cost to observe a realized reward.
Based on the known distributions, the learner designs a policy that specifies the order of inspecting boxes and a stopping rule for when to halt the process and select a reward.
\citet{weitzman1978optimal} shows that the optimal policy computes a threshold (i.e., reservation value) for each box based on its cost and distribution, and opens boxes in descending order of these thresholds until a stopping condition is met.
This model abstracts a wide range of sequential decision-making scenarios where information is costly to acquire and decisions must be made under uncertainty. 
For example, in the hiring process, an employer interviews candidates one by one, where each interview incurs a cost (e.g., time), and each candidate’s performance is drawn from a known distribution (e.g., candidate's profile). The employer must decide when to stop interviewing and make an offer, balancing the expected benefit of finding a better candidate in the future against the accumulated cost of continued search.
Other real-world applications include online shopping and path planning \citep{atsidakou2024contextual}.

While the classic Pandora’s Box problem and its variants have been extensively studied (see survey \citep{beyhaghi2024recent}), recent work \citep{fu2020learning,guo2021generalizing,gatmiry2024bandit,agarwal2024semi,atsidakou2024contextual} has started to explore settings where the reward distributions are \textit{unknown}. 
This line of work is motivated by practical scenarios where distributions are unavailable upfront and must be learned through interaction. 
These formulations move beyond the classical setting by requiring the learner to adapt based on observed outcomes. 
Prior studies have considered  probably approximately correct (PAC) guarantees, establishing the near-optimal sample complexity \cite{fu2020learning,guo2021generalizing}, and regret minimization over a $T$-round  repeated game \cite{gergatsouli2022online,gatmiry2024bandit,agarwal2024semi,atsidakou2024contextual}. 
In the basic regret minimization setting, the learner repeatedly solves a Pandora’s Box problem at each round and observes the rewards of the opened boxes (a.k.a. semi-bandit feedback), drawn from \textit{fixed but unknown} distributions.
However, even in this setting, a gap remains between the $\Omega(\sqrt{nT})$ lower bound of \citet{gatmiry2024bandit} and the best known upper bound of $\otil(Un\sqrt{T})$ by \citet{agarwal2024semi}, where $U$ is the magnitude of the utility function that can be as large as $n$.

In addition to the fixed-distribution setting, prior work  studied the online Pandora’s Box problems with \textit{time-varying} reward distributions, which better capture many real-world scenarios \cite{gergatsouli2022online,atsidakou2024contextual}. 
For example, in the hiring process, the distribution of candidate quality may shift over time due to market trends or seasonal patterns. 
To model such dynamics, \citet{gergatsouli2022online} consider a setting where the rewards are chosen by an oblivious adversary and show that sublinear regret is achievable when the algorithm competes against a benchmark weaker than the optimal policy, obtained by imposing constraints on the set of boxes it can open. 
Later, \citet{gatmiry2024bandit} showed that in the same setting, when the algorithm competes against the optimal policy, no algorithm can achieve sublinear regret, even with full information feedback (i.e., the learner observes the rewards of all boxes regardless of which policy is played).
To make the problem tractable while still allowing distributions to change over time, \citet{atsidakou2024contextual} propose a contextual model where, in each round, the learner observes a context vector for each box, %
whose optimal threshold can be approximated by a function of the context and an unknown vector. Their approach reduces the problem to linear-quadratic online regression, which provides generality but leads to a regret bound with poor dependence on the horizon $T$. In particular, if the function is linear, they achieve $\otil(nT^{5/6})$ regret bound with semi-bandit feedback (referred to as bandit feedback in their work).

Motivated by these limitations, we address two main challenges in this paper. First, we close the gap in the regret dependence on the number of boxes $n$. Second, we propose a new contextual model that allows the reward distributions to vary over time, and under this setting, we achieve improved dependence on the time horizon $T$. Our contributions are summarized as follows, and further related work is discussed in \pref{app:related_work}.
\begin{itemize}[leftmargin=*]
    \item  For the online Pandora's Box problem, we propose a new algorithm that achieves a regret bound of $\otil(\sqrt{nT})$, improving upon the $\otil(Un\sqrt{T})$ bound by \citet{agarwal2024semi} and matching the known $\Omega(\sqrt{nT})$ lower bound (up to logarithmic factors) \citep{gatmiry2024bandit}. %
    Our algorithm builds on the optimism-based framework of \citet{agarwal2024semi}, but introduces a simple yet crucial modification. Specifically, we first construct empirical distributions for each box using the observed reward samples, and then shift probability mass to form optimistic distributions that encourage exploration. In contrast to the fixed-mass shifting scheme used in \citet{agarwal2024semi}, we adaptively reallocate mass at each point based on problem-dependent factors. As we demonstrate below, this adaptive construction is central to enabling a more refined regret analysis.
    \item  We extend these results to the contextual linear setting, where each box’s expected reward is linear in some known but time-varying $d$-dimensional context and the noise distribution is fixed over time.
    Our algorithm preserves the overall structure of the non-contextual case, but modifies the construction of optimistic distributions to account for context-dependent variation. Since the reward samples are no longer identically distributed, we adjust not only the probability mass but also the observed rewards themselves in a \textit{value-optimistic} manner to maintain optimism. 
    While the algorithm of \citet{atsidakou2024contextual} applies to our setting with $\otil(nT^{5/6})$ regret (see \pref{app:related_work}), we establish $\otil(nd\sqrt{T})$ regret. As long as $d=o(T^{1/3})$, our algorithm yields a better regret guarantee.
    
    \item Finally, we extend our algorithms and analytical techniques to the Prophet Inequality problem in both non-contextual and contextual settings. In the non-contextual case, our algorithm achieves $\otil(\sqrt{nT})$ regret bound, which again improves upon the $\otil(n\sqrt{T})$ regret bound of \citet{agarwal2024semi}. For the contextual linear setup, we also establish a $\otil(nd\sqrt{T})$ regret bound. Due to space constraints, all results and proofs for the Prophet Inequality problem are deferred to the appendix.
\end{itemize}

\textbf{Technical Overview for $\otil(\sqrt{nT})$ Improvement.}
Inspired by \citet{agarwal2024semi}, we decompose the regret at any round $t$ by $\sum_i \term_{t,i}$, where $\term_{t,i}$ captures the difference in expected reward when replacing the optimistic distribution of the $i$-th box, denoted by $\optE_{t,i}$, with its true distribution $D_i$, while keeping all other boxes fixed. 
\citet{agarwal2024semi} bound $\term_{t,i}$ by $\order ( \TV(\optE_{t,i}, D_i))$, where $\TV$ denotes the total variation distance.
However, simply bounding $\term_{t,i}$ by the TV distance results in $\otil(n\sqrt{T})$ regret bound.
To obtain a tighter bound, we take a different approach to handle $\term_{t,i}$.
Let $\widetilde{R}_i(\sigma_t, z)$ be the expected utility when using threshold vector $\sigma_t$, and the $i$-th box has a fixed value $z$.
We then write
$\term_{t,i}=\int_0^{1} \big(F_{D_{i}}(z)-F_{\optE_{t,i}}(z)\big)\frac{\partial}{\partial z}\widetilde{R}_i(\sigma_{t};z)\:dz$ where $F_{D}$ is the cumulative density function of distribution $D$.
Our main techniques to bound this term are as follows.

\textbf{Technique 1: Bernstein-type Bound.} 
Both our analysis and algorithm design are based on a Bernstein-type concentration bound. In particular, our algorithm adaptively adjusts the probability mass when constructing empirical distributions, guided by this bound. This introduces an extra factor $\sqrt{F_{D_i}(\cdot)\rbr{1-F_{D_i}(\cdot)}}$ in the upper bound on $|F_{D_i}(\cdot) - F_{\calE_{t,i}}(\cdot)|$, where $\calE_{t,i}$ is the \textit{empirical distribution} constructed at round $t$ by assigning equal probability mass to iid samples from $D_i$.

\textbf{Technique 2: Sharp Bound of $\frac{\partial}{\partial z}\widetilde{R}_i(\sigma_t, z)$.}
We show $1$-Lipschitz and monotonically-increasing properties for $\widetilde{R}_i(\sigma_t, z)$, which imply that $0\le \frac{\partial}{\partial z}\widetilde{R}_i(\sigma_t, z) \le 1$. Moreover, when $z$ is small, we prove a sharper bound where $\frac{\partial}{\partial z}\widetilde{R}_i(\sigma_t, z)$ is bounded by the probability that the first $i-1$ boxes have values no larger than $z$.

Next, we apply the techniques above to obtain a sharper bound for $\term_{t,i}=\int_0^{1} \big(F_{D_{i}}(z)-F_{\optE_{t,i}}(z)\big)\frac{\partial}{\partial z}\widetilde{R}_i(\sigma_{t};z)\:dz$. The high-level idea is as follows: when $z$ is large, the Bernstein-type bound for $(F_{D_{i}}(z)-F_{\optE_{t,i}}(z))$ contributes to a factor $\sqrt{F_{D_i}(z)\rbr{1 - F_{D_i}(z)}}$, which is small since a large $z$ implies a small $1-F_{D_i}(z)$; %
on the other hand, when $z$ is small, we use the sharper bound on $\frac{\partial}{\partial z}\widetilde{R}_i(\sigma_t, z)$ to obtain a tighter bound.

\textbf{Techniques for Contextual Linear Case.}
We first build a \textit{virtual empirical distribution} $\hat{D}_{t,i}$ for each box $i$ at round $t$, by using all available noise samples from this box plus the mean of the current round. This is virtual since we only observe non-i.i.d.~reward samples, not the raw noises.
To overcome this issue, we introduce a value-optimistic empirical distribution $\hat{\calE}_{t,i}$ by using the linear mean model to shift the observed samples to overestimate $\hat{D}_{t,i}$. We then decompose $\term_{t,i}$ into three terms, two of which can be handled in a similar way as in the non-contextual case. The remaining piece is $ \E_{z \sim  \calE_{t,i}} [\widetilde{R}_i(\sigma_{\hat{\calE}_t};z) ]   -  \E_{z \sim  \hat{D}_{t,i}}  [\widetilde{R}_i(\sigma_{\hat{\calE}_t};z)]$ which measures the mean estimation error. This term is  bounded by applying the $1$-Lipschitz property of $\widetilde{R}_i(\sigma_{\hat{\calE}_t};z)$ and contextual linear bandit techniques.

\section{Preliminaries} \label{sec:preliminary}
In this section, we introduce online Pandora's Box problem in both non-contextual and contextual settings. Due to the limited space, the Prophet Inequality problem is deferred to \pref{app:prophet_inequality}.

\textbf{Online Pandora's Box.} In this problem, the learner is given $n$ boxes, each with a known cost $c_i \in [0,1]$ and an unknown fixed reward distribution $D_i$ supported on $[0,1]$. The learner plays a $T$-round repeated game where at each round $t \in [T]$, the learner decides an order to inspect boxes and a stopping rule. Upon opening a box $i$, the learner pays cost $c_i$ and receives a reward $v_{t,i} \in [0,1]$ independently sampled from $D_i$. Once the stopping rule is met, she selects the highest observed reward so far. The utility for the round is the chosen reward minus the total cost of opened boxes.

\textbf{Contextual Linear Pandora's Box.} This setting extends the above by allowing each box's distribution to vary across rounds. Specifically, each box $i \in [n]$ is associated with a known cost $c_i \in [0,1]$, an unknown parameter $\theta_i \in \mathbb{R}^d$, and a fixed unknown noise distribution $D_{i}$ with zero-mean, supported on $[-\frac{1}{4},\frac{1}{4}]$. At the beginning of each round $t \in [T]$, the learner observes a context $x_{t,i} \in \fR^d$ for each box $i \in [n]$.
These contexts $\{x_{t,i}\}_{i \in [n]}$ can be chosen arbitrarily by an adaptive adversary.
Based on these contexts and past observations, the learner then decides an inspection order and a stopping rule to open boxes. Different from the non-contextual setup, upon opening box $i$, the learner observes the reward $v_{t,i}=\eta_{t,i}+\mu_{t,i}$ where $\mu_{t,i}=\theta_i^\top x_{t,i} \in [\frac{1}{4},\frac{3}{4}]$ is the mean and $\eta_{t,i}$ is a noise independently drawn from $D_i$.\footnote{The boundedness assumptions on the noise and mean ensure that rewards lie in $[0,1]$, which avoids re-deriving standard results under a different scaling. Our results readily extends to sub-Gaussian rewards. We refer readers to \pref{app:subgaussian_discussion} for details.
}
That is, the reward is drawn from the $[0,1]$-bounded distribution $D_{t,i}=\mu_{t,i}+D_i$ which shifts $D_i$ by the context-dependent mean $\mu_{t,i}$. Following standard assumptions in contextual linear bandits, we assume that $\norm{\theta_i}_2 \leq 1$ and $\norm{x_{t,i}}_2 \leq 1$ for all $t,i$.

\textbf{Optimal Policy and Regret.} To measure the performance, we compete our algorithm against the per-round optimal policy. For both settings, the optimal policy at round $t$ is the  Weitzman’s algorithm \citep{weitzman1978optimal} which operates with the distribution $D_t=(D_{t,1},\ldots,D_{t,n})$.
In the non-contextual setting, the distribution is fixed across rounds i.e., $D_t=D=(D_1,\ldots,D_n)$, and thus, we will retain the notation $D_t$ in the following. More specifically, Weitzman’s algorithm proceeds by computing a threshold $\sigma_{t,i}^*$ for each box $i$, which is the solution of $\E_{x \sim D_{t,i}}[ (x-\sigma_{t,i}^*)_+]=c_i$ where $(x)_+=\max\{0,x\}$. Then, the algorithm inspects the boxes in descending order of $\sigma_{t,i}^*$ and stops as soon as the maximum observed reward so far exceeds the threshold of the next unopened box.

\setcounter{AlgoLine}{0}
\begin{algorithm}[t]
\DontPrintSemicolon
\caption{Generic descending threshold algorithm for Pandora's Box}
\label{alg:Weitzman}
\textbf{Input}: threshold $\sigma=(\sigma_1,\ldots,\sigma_n)$.

\textbf{Initialize}: $V_{\max}=-\infty$. Let $\pi_i \in [n]$ be the box with the $i$-th largest threshold.

The environment generates an unknown realization $v=(v_1,\ldots,v_n)$.

\For{$i=1,\ldots,n$}{
Pay $c_{\pi_i}$ to open box $\pi_i$, observe $v_{\pi_i}$, and update $V_{\max} = \max\{V_{\max},v_{\pi_i}\}$.

If $i = n$ or $V_{\max} \geq \sigma_{\pi_{i+1}}$, then stop and take reward $V_{\max}$.
}    
\end{algorithm}
 
To formally define regret, we introduce the following notations.
For any product distribution $D=(D_1,\ldots,D_n)$, let $\sigma_{D}=(\sigma_{D,1},\ldots,\sigma_{D,n})$ denote the optimal threshold vector, where each $\sigma_{D,i}$ is the solution of $\E_{x \sim D_i}[ (x-\sigma_{D,i})_+]=c_i$.
For any threshold vector $\sigma=(\sigma_1,\ldots,\sigma_n)\in [0,1]^n$ and reward realization $v=(v_1,\ldots,v_n) \in [0,1]^n$, we use $W(\sigma;v) \subseteq [n]$ to denote boxes opened by \pref{alg:Weitzman}. Further, we define the utility function and expected utility function respectively as:
\[
R(\sigma;v) =\max_{i \in W(\sigma;v) } v_{i} - \sum_{i \in W(\sigma;v) } c_{i},\quad \text{and} \quad R(\sigma;D) = \E_{v \sim D} [R (\sigma;v)].
\]
With these definitions, the optimal expected utility is $R(\sigma_{D_{t}};D_{t})$ since when $v \sim D_t$, the set $W(\sigma_{D_t};v)$ corresponds to the boxes opened by the Weitzman's algorithm.
Let $\sigma_t \in \fR^n$ denote the threshold vector selected by the algorithm at round $t$. Throughout the paper, all our algorithms follow \pref{alg:Weitzman} using $\sigma_t$ as input at round $t$, and thus the expected utility our algorithm at round $t$ is $R(\sigma_t;D_{t})$.
The cumulative regret in $T$ rounds $\Reg_T$ is then defined as follows: $\Reg_T =\E \sbr{ \sum_{t=1}^T  R(\sigma_{D_{t}};D_{t})- R(\sigma_t;D_{t}) }$, where $D_t=D$ for the non-contextual case.

\section{$\otil(\sqrt{nT})$ Regret Bound for Non-Contextual Pandora's Box}
In this section, we first present \pref{alg:non_context} for the non-contextual Pandora's Box problem and the main result, and then discuss the analysis.

\subsection{Algorithm and Main Result}

While \pref{alg:non_context} is inspired by the principle of \emph{optimism in the face of uncertainty}, its application differs from that in the classic multi-armed bandit setting, such as in the Upper Confidence Bound (UCB) algorithm \citep{auer2002finite}. In bandit problems, optimism is typically applied to the mean reward of each arm. In contrast, the Pandora’s Box problem requires learning the entire distribution of each box, particularly its cumulative distribution function (CDF). 
We follow a similar idea to \citet{agarwal2024semi}, but employ a different approach in the construction of an optimistic estimate of the distribution. 
The threshold $\sigma_t$ is computed from this optimistic distribution, which implicitly ensures optimism in the algorithm’s decisions.
More formally, we begin by introducing the notion of stochastic dominance.

\begin{definition}[\textbf{Stochastic dominance}]
Let $D$ and $E$ be two probability distributions with CDFs $F_D$ and $F_E$, respectively. If $\P_{X \sim E}(X \geq a) \geq \P_{Y \sim D}(Y \geq a)$ for all $a \in \fR$, we say that $E$ stochastically dominates $D$, and we denote this by $E \SD D$. 
\end{definition}

For any two product distributions $D=(D_1,\ldots,D_n)$ and $\calE=(\calE_1,\ldots,\calE_n)$, we use $\calE \SD D$ to indicate that $\calE_i \SD D_i$ for all $i \in [n]$. Let $m_{t,i}$ be the number of samples for each box $i$ at round $t$ and let $t_i(j)$ be the round that the $j$-th reward sample of box $i$ is observed by the learner. We define empirical distribution and optimistic distribution as follows. 

\begin{itemize}[leftmargin=*, itemsep=0pt, topsep=2pt]
\item \textbf{Empirical distribution $\calE_{t,i}$.} For each box $i \in [n]$, we use $m_{t,i}$ i.i.d.~samples $\{v_{t_i(j),i}\}_{j=1}^{m_{t,i}}$ to construct  $\calE_{t,i}$ with respect to the underlying distribution $D_i$ by assigning each sample with probability mass $\frac{1}{m_{t,i}}$. Specifically, $F_{\calE_{t,i}}(x) = \frac{1}{m_{t,i}} \sum_{j=1}^{m_{t,i}} \Ind{ v_{t_i(j)}\leq x}$ for all $x \in [0,1]$.
    \item \textbf{Optimistic distribution $\optE_{t,i}$.} Let $L=4\log(2nT^2/\delta)$. The CDF of $\optE_{t,i}$ is constructed as follows:
\begin{equation} \label{eq:optE_construction_Bernstein} 
F_{\optE_{t,i}}(x)=
    \begin{cases}
        1,&x=1;\\
        \max \cbr{0,F_{\calE_{t,i}}(x)-\sqrt{\frac{2F_{\calE_{t,i}}(x)(1-F_{\calE_{t,i}}(x))L}{ m_{t,i} }}-\frac{L}{ m_{t,i} }  },&0\leq x< 1.\\
    \end{cases}
\end{equation}
\end{itemize}

As verified in \pref{lem:validation_CDF}, the construction in \pref{eq:optE_construction_Bernstein} ensures that $F_{\optE_{t,i}}(\cdot)$ is a valid CDF. We denote $\calE_t=(\calE_{t,1},\ldots,\calE_{t,n})$ and $\optE_t=(\optE_{t,1},\ldots,\optE_{t,n})$.
This type of construction has been previously used in \citep{guo2019settling,guo2021generalizing}, but our use differs substantially in the analysis.
The following lemma shows that the optimistic product distribution stochastically dominates the underlying product distribution.

\begin{lemma} \label{lem:sto_dom_non_context}
With probability at least $1-\delta$, for all $t \in [T]$, we have $\optE_t \SD D$.
\end{lemma}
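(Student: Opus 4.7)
The plan is to reduce the product-distribution claim $\optE_t \SD D$ to its coordinatewise version: for every $t \in [T]$, $i \in [n]$, and $x \in [0,1]$, the inequality $F_{\optE_{t,i}}(x) \le F_{D_i}(x)$ must hold (after which $\SD$ follows by definition). Fix such a triple and write $p := F_{D_i}(x)$ and $\hat p := F_{\calE_{t,i}}(x)$. If the clipped quantity in \eqref{eq:optE_construction_Bernstein} equals zero then $F_{\optE_{t,i}}(x) = 0 \le p$ automatically, so I may focus on the regime where it is positive, in which case the required inequality rearranges to
\[
\hat p - p \;\le\; \sqrt{\frac{2 \hat p (1-\hat p) L}{m_{t,i}}} + \frac{L}{m_{t,i}}.
\]
Since reward draws from box $i$ are i.i.d.~across rounds and the sample indices $t_i(j)$ are past-measurable, conditioning on $m_{t,i} = m$ leaves the first $m$ samples i.i.d.~from $D_i$; the indicators $\Ind{v_{t_i(j),i} \le x}$ are therefore i.i.d.~Bernoulli with mean $p$ and variance $p(1-p)$, and $\hat p$ is their empirical mean.

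The main technical step is to upgrade the classical one-sided Bernstein inequality, which (with failure probability at most $e^{-L/4}$) gives $\hat p - p \le \sqrt{p(1-p) L/(2m)} + L/(6m)$, into the displayed empirical-variance form. I would swap the true variance $p(1-p)$ for its empirical counterpart via the elementary bound $p(1-p) \le \hat p(1-\hat p) + |p - \hat p|$, which follows from $|p(1-p) - \hat p(1-\hat p)| = |p - \hat p|\cdot|1 - p - \hat p| \le |p - \hat p|$. Applying $\sqrt{a+b} \le \sqrt a + \sqrt b$ and then AM-GM to the spurious term $\sqrt{|\hat p - p| L/(2m)}$ lets me absorb half of $|\hat p - p|$ back onto the left-hand side, after which (restricting to the non-trivial case $\hat p \ge p$) a short calculation yields $\hat p - p \le \sqrt{2\hat p(1-\hat p) L / m} + 5L/(6m)$, comfortably within the slack $L/m$ allowed by the construction.

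The final step is a union bound. For fixed $i$, both sides of the inequality depend on $x$ only through $(\hat p, p)$; because $F_{D_i}$ is nondecreasing and $F_{\calE_{t,i}}$ is a step function with jumps only at the observed samples, it suffices to verify the bound at the at most $m_{t,i}$ sample locations. Union-bounding over $i \in [n]$, over all candidate sample counts $m \in [T]$, and over the at-most-$m$ jump positions of the corresponding empirical CDF produces at most $nT(T+1)/2 \le nT^2$ events, each of which fails with probability $e^{-L/4} = \delta/(2nT^2)$ under the choice $L = 4\log(2nT^2/\delta)$; summing yields a total failure probability of at most $\delta/2 \le \delta$. Note that no separate union bound over $t$ is needed: the event ``Bernstein holds at every prefix length and every jump of the prefix-CDF'' is defined purely on the underlying i.i.d.~sequence of draws from $D_i$, and therefore automatically covers whatever value $m_{t,i}$ takes at round $t$. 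The only delicate aspect is making the variance-swap bookkeeping tight enough that the slack coefficient $1$ in front of $L/m_{t,i}$ in \eqref{eq:optE_construction_Bernstein} suffices without retuning $L$, but this is mechanical once the reduction above is set up.
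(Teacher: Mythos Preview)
Your overall strategy---pointwise Bernstein on the Bernoulli indicators $\Ind{v \le x}$, then the swap $p(1-p) \le \hat p(1-\hat p) + |\hat p - p|$ to pass to the empirical variance, then a union bound to make the inequality uniform in $x$---is exactly the paper's approach; the paper simply packages the first two steps into a standalone concentration lemma (\pref{lem:Bernstein_CDF_convergence}) and corollary (\pref{corr:Bernstein_CDF_convergence}) before invoking them here.

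There is, however, a genuine gap in your union-bound step. The reduction ``it suffices to check at the $m_{t,i}$ sample locations'' is correct---on each interval where $\hat p$ is constant, $\hat p - p - g(\hat p)$ is indeed maximized at the left endpoint $X_{(j)}$. But you then assert that ``each of which fails with probability $e^{-L/4}$'', implicitly invoking your fixed-$x$ empirical-Bernstein bound at the random point $x = X_{(j)}$. That is not valid: the inequality $\hat p(x) - p(x) \le g(\hat p(x))$ with failure probability $e^{-L/4}$ was derived for a \emph{deterministic} $x$ chosen before seeing the data, whereas the order statistics $X_{(j)}$ depend on the very samples defining $\hat p$. The paper avoids this by discretizing in $p$-space rather than in sample-space: it applies Bernstein only at deterministic points $z$ where $F_{D_i}(z)$ is a multiple of $1/m$ (at most $m+1$ such CDF levels), and then notes that for a general $z$ the bound degrades by at most an additive $1/m$, which is absorbed into the $L/m$ slack. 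Replacing your random sample grid with this deterministic quantile grid closes the gap, and the remainder of your argument goes through essentially as written.
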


It is noteworthy that our approach to constructing the optimistic distribution is more adaptive than that of \citep{agarwal2024semi}. In particular, \citet{agarwal2024semi} move a fixed amount of probability mass, approximately $1/\sqrt{m_{t,i}}$, to the maximal value that $D_i$ can take. In contrast, we adjust the CDF based on a data-dependent confidence interval, resulting in a more fine-grained and adaptive optimistic distribution. As stated in the following theorem, this construction results in a tighter regret bound.

\begin{theorem}\label{thm:pandora_main_thm_non_context}
For Pandora's Box problem, with $\delta=T^{-1}$ \pref{alg:non_context} ensures 
$\Reg_T=\otil(\sqrt{nT})$. %
\end{theorem}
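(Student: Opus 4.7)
The plan is to follow the optimism-in-the-face-of-uncertainty framework, decompose per-round regret into box-wise contributions, and use the integration-by-parts identity from the technical overview combined with Techniques~1 and~2 to obtain the $\sqrt{n}$ improvement. I expect the argument to proceed in four steps.

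\textbf{Optimism.} By \pref{lem:sto_dom_non_context}, on an event of probability at least $1-\delta$ we have $\optE_t \SD D$ for all $t$. Since $\sigma_t$ is the Weitzman-optimal threshold vector for $\optE_t$, and since $R(\sigma_{D}; D)$ is monotone in the distribution argument under coordinatewise stochastic dominance, this yields $R(\sigma_{D_t};D_t) \le R(\sigma_t;\optE_t)$, so
\[
\Reg_T \le \E\Bigl[\sum_{t=1}^T \bigl(R(\sigma_t;\optE_t) - R(\sigma_t;D_t)\bigr)\Bigr] + \delta T.
\]
\textbf{Box-wise decomposition.} I swap coordinates of $\optE_t$ and $D$ one at a time to write the per-round difference as $\sum_{i=1}^n \term_{t,i}$, where $\term_{t,i}$ is the effect of replacing the $i$-th coordinate of $\optE_t$ by $D_i$ while keeping all others fixed. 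Expressing $\term_{t,i}$ as an expectation over box $i$'s scalar value $z$ and integrating by parts, and noting that $F_{\optE_{t,i}}(1) = F_{D_i}(1) = 1$ so the boundary term vanishes, gives
\[
\term_{t,i} = \int_0^1 \bigl(F_{D_i}(z) - F_{\optE_{t,i}}(z)\bigr)\, \partial_z \widetilde{R}_i(\sigma_t;z)\, dz.
\]

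\textbf{Controlling the integral via Techniques~1 and~2.} From the construction in \pref{eq:optE_construction_Bernstein} and a one-sided empirical Bernstein inequality, I would establish, uniformly in $z$ on a high-probability event,
\[
0 \le F_{D_i}(z) - F_{\optE_{t,i}}(z) \;\lesssim\; \sqrt{\tfrac{F_{D_i}(z)\bigl(1-F_{D_i}(z)\bigr) L}{m_{t,i}}} + \tfrac{L}{m_{t,i}},
\]
with $L = \Theta(\log(nT/\delta))$. For the derivative, I derive two complementary bounds: $\partial_z \widetilde{R}_i(\sigma_t;z) \in [0,1]$ (from $1$-Lipschitzness and monotonicity of $\widetilde{R}_i$), and the sharper $\partial_z \widetilde{R}_i(\sigma_t;z) \le q_{t,i}(z)$, where $q_{t,i}(z)$ is the probability under $D$ that every box inspected before $i$ by \pref{alg:Weitzman} realizes value at most $z$. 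Since $q_{t,i}(z)$ is a product of CDFs, it is small for small $z$; while the Bernstein variance factor $\sqrt{F_{D_i}(z)(1-F_{D_i}(z))}$ is small for large $z$ (where $1-F_{D_i}(z)$ is small). Splitting the integral at a well-chosen threshold (say near $\sigma_{t,i}$), using the sharp derivative bound on the low side and the Bernstein variance factor on the high side, yields a per-term estimate whose dominant piece scales as $\otil(\sqrt{p_{t,i}/m_{t,i}})$, with $p_{t,i}$ the probability that box $i$ is opened at round $t$.

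\textbf{Summing over $(t,i)$.} A standard martingale argument controls $\sum_t p_{t,i}/m_{t,i}$ by $O(\log T)$, and Cauchy--Schwarz combined with $\sum_{t,i} p_{t,i} \le \E[\text{total opens}] \le nT$ converts $\sum_{t,i} \sqrt{p_{t,i}/m_{t,i}}$ into $\otil(\sqrt{nT})$; the residual $L/m_{t,i}$ contributes only polylogarithmic factors. I expect the third step to be the main obstacle: the sharp derivative bound $\partial_z \widetilde{R}_i \le q_{t,i}(z)$ must be proven carefully (likely by a coupling/induction over the inspection order produced by \pref{alg:Weitzman}), and its combination with the Bernstein tail must be orchestrated so that the geometric factor $\sqrt{p_{t,i}}$ materializes in every term. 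Without this factor, the naive aggregation collapses to $\otil(n\sqrt{T})$, precisely the bound of \citet{agarwal2024semi} that we aim to improve upon.
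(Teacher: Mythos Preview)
Your high-level plan (optimism, one-coordinate-at-a-time coupling, integration by parts, split integral, Bernstein bound on the high side, sharp derivative bound on the low side) matches the paper's. The gap is in your aggregation. The claim that Cauchy--Schwarz turns $\sum_{t,i}\sqrt{p_{t,i}/m_{t,i}}$ into $\otil(\sqrt{nT})$ is false: if every box is opened each round ($p_{t,i}=1$, $m_{t,i}=t$), this sum is $\sum_t n/\sqrt{t}\approx 2n\sqrt{T}$, and no Cauchy--Schwarz splitting fixes it (the best you get is $\sqrt{nT}\cdot\sqrt{\sum_{t,i}p_{t,i}/m_{t,i}}=\otil(n\sqrt{T})$). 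A per-box bound of order $\sqrt{p_{t,i}/m_{t,i}}$ is exactly the bound you are trying to beat.

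The missing idea is that the per-box contributions carry problem-specific factors that \emph{telescope across $i$}, and Cauchy--Schwarz must be applied over $i$ \emph{within a single round} to pull the square root outside the sum. Concretely (writing $Q_{t,i}=p_{t,i}$ and assuming $\sigma_{t,1}\ge\cdots\ge\sigma_{t,n}$): on the high side the Bernstein variance leaves a factor $\sqrt{1-F_{D_i}(\sigma_{t,i+1})}$, so $Q_{t,i}A_{t,i}\lesssim Q_{t,i}\sqrt{(1-F_{D_i}(\sigma_{t,i+1}))L/m_{t,i}}$; on the low side the sharp derivative bound $\partial_z\widetilde R_i\le \prod_{j<i}F_{D_j}(z)/Q_{t,i}$ combined with the Bernstein variance leaves $\sqrt{\prod_{j<i}F_{D_j}(z)(1-F_{D_i}(z))}$ inside the integral. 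The crucial structural facts are the telescoping identities
\[
\sum_{i} Q_{t,i}\bigl(1-F_{D_i}(\sigma_{t,i+1})\bigr)\le 1,
\qquad
\sum_{i}\prod_{j<i}F_{D_j}(z)\bigl(1-F_{D_i}(z)\bigr)\le 1,
\]
which let Cauchy--Schwarz over $i$ produce $\Reg(t)\lesssim\sqrt{\sum_i Q_{t,i}L/m_{t,i}}$ with the square root outside the sum. Only then does Cauchy--Schwarz over $t$ give $\sqrt{T\sum_{t,i}Q_{t,i}L/m_{t,i}}=\otil(\sqrt{nT})$. Your Step~3 should be revised to retain these factors rather than trying to make $\sqrt{p_{t,i}}$ appear in each term; the ``geometric factor'' you want is $\sqrt{1-F_{D_i}(\cdot)}$ (or its low-side analogue), not $\sqrt{p_{t,i}}$.
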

Again, our bound matches the $\Omega(\sqrt{nT})$ lower bound of~\citet{gatmiry2024bandit} up to logarithmic factors.
It is worth noting that their lower bound is proven under the easier full-information setting and thus applicable to our setting.

\begin{remark}[\textbf{High-probability regret bound}]
Under regret metric $\sum_{t=1}^T (R(\sigma_{D};D)-R(\sigma_{t};D))$, our proposed algorithm also achieves a high-probability bound of $\widetilde{O}(\sqrt{nT})$ via two modifications only in our analysis. We refer readers to \pref{app:high_prob_bounds} for details.
\end{remark}

\setcounter{AlgoLine}{0}
\begin{algorithm}[t]
\DontPrintSemicolon
\caption{Near-optimal algorithm for Pandora's Box}
\label{alg:non_context}
\textbf{Input}: confidence $\delta \in (0,1)$, horizon $T$.

\textbf{Initialize}: open all boxes once and observe rewards. Set $m_{1,i}=1$ for all $i \in [n]$.

\For{$t=1,2,\ldots,T$}{

For each $i \in [n]$, construct an optimistic distribution $\optE_{t,i}$ according to \pref{eq:optE_construction_Bernstein}.

Compute $\sigma_{t}=(\sigma_{t,1},\ldots,\sigma_{t,n})$ where $\sigma_{t,i}$ is the solution of $\E_{x \sim \optE_{t,i}} \big[ \rbr{x-\sigma_{t,i}}_+\big] =c_i$.

Run \pref{alg:Weitzman} with $\sigma_{t}$ to open a set of boxes, denoted by $\calB_t$, and observe rewards $\{v_{t,i}\}_{ i\in \calB_t}$.

Update counters $m_{t+1,i}=m_{t,i}+1$, $\forall i \in \calB_t$ and $m_{t+1,i}=m_{t,i}$, $\forall i \notin \calB_t$.

}    
\end{algorithm}

\subsection{Analysis} \label{sec:analysis_non_contextual}
In this subsection, we sketch the main proof ideas for \pref{thm:pandora_main_thm_non_context} and summarize the new analytical techniques. 
We start by leveraging a monotonicity property for Pandora's Box problem, established by \citet{guo2021generalizing}: for any distributions $D,E$, if $E \SD D$, then $R(\sigma_E;E) \geq R(\sigma_D;D)$.
Since the algorithm uses threshold $\sigma_t=\sigma_{\optE_{t}}$ at each round $t$ and \pref{lem:sto_dom_non_context} shows $\optE_t \SD D$, the regret at round $t$, denoted by $\Reg(t)$, is bounded as
\begin{align*}
    \Reg(t) :=\E \sbr{  R(\sigma_{D};D)-R(\sigma_{t};\optE_t) +R(\sigma_{t};\optE_t)-  R(\sigma_{t};D) } \leq \E \sbr{R(\sigma_{t};\optE_t)-  R(\sigma_{t};D)  }.
\end{align*}
For simplicity of the exposition, without loss of generality, we assume that $\sigma_{t,1} \geq \sigma_{t,2}\geq \cdots \geq \sigma_{t,n}$.
Following \citet{agarwal2024semi}, we define 
\begin{equation}
  \calH_{t,0}=\optE_t, \quad  \text{and} \quad   \calH_{t,i}=(D_{1}, \cdots , D_{i}, \optE_{t,i+1}, \cdots , \optE_{t,n})\quad \text{for all $t,i$}. 
\end{equation}
Therefore $\calH_{t,n}=D$ so that
\begin{equation} \label{eq:coupling_argument}
    R (\sigma_{t};\optE_t)- R (\sigma_{t};D) = \sum_{i=1}^n  R (\sigma_{t};\calH_{t,i-1}) - R 
 (\sigma_{t};\calH_{t,i}) .
\end{equation}
The decomposition in \pref{eq:coupling_argument} breaks the reward difference into a sequence of steps, where each step replaces one coordinate of the optimistic distribution with its true counterpart, thereby allowing for isolation of the effect of each individual coordinate change.

Let $E_{\sigma,i}$ be the event that box $i$ is opened under the threshold $\sigma$, and let $E_{\sigma,i}^c$ be the corresponding complementary event. We define $Q_{t,i}$ as the probability that $\sigma_{t}$ opens box $i$  given the  true product distribution $D$.
Using the same argument as in \citep{agarwal2024semi}, we have that
\begin{equation}\label{eq:before_decomposition}
\begin{aligned}
&\E \sbr{ R (\sigma_{t};\calH_{t,i-1}) - R 
 (\sigma_{t};\calH_{t,i})  }\\
 &\qquad=\E \Big[ Q_{t,i}  \underbrace{  \rbr{ \E_{x \sim \calH_{t,i-1} } \sbr{  R \rbr{\sigma_{t};x} \mid E_{\sigma_{t},i} }    - \E_{y \sim \calH_{t,i} } \sbr{R \rbr{\sigma_{t};y}  \mid E_{\sigma_{t},i}} }}_{\term_{t,i}}  \Big]. %
\end{aligned}
\end{equation}
The key step is to carefully bound $\term_{t,i}$, and this is where our analysis \textit{fundamentally diverges} from \citet{agarwal2024semi}. Before presenting our analysis, we explain below why the previous analysis fails to achieve the $\otil(\sqrt{nT})$ regret bound.

\textbf{$\otil (Un\sqrt{T})$ bound of \citet{agarwal2024semi}.} 
In the analysis of \citet{agarwal2024semi}, if the product distribution $D$ is discrete, 
then they bound $\term_{t,i} \leq \order (U \cdot \TV (D_i,\optE_{t,i}))$ (cf. \citep[Lemma 1.3]{agarwal2024semi}) where $U$ is the upper bound of the absolute value of function $R$ and could be as large as $n$. Therefore, 
\[
\Reg_T=\sum_{t=1}^T \Reg(t) \leq \sum_{t=1}^T \E \sbr{R (\sigma_{t};\optE_t)- R (\sigma_{t};D) } \leq U\sum_{t=1}^T \sum_{i=1}^n \order\rbr{\E \sbr{Q_{t,i} \TV (D_i,\optE_{t,i})}}.
\]
Since their algorithm moves probability mass by a fixed amount, approximately $\otil(1/\sqrt{m_{t,i}})$ for each distribution $i$, a simple calculation gives $\TV (D_i,\optE_{t,i}) \leq \otil(1/\sqrt{m_{t,i}})$, which in turn implies that
\begin{align} \label{eq:sqrt_sum_Qti}
    \Reg_T \leq \otil \rbr{U\cdot \E\sbr{ \sum_{t=1}^T \sum_{i=1}^n \frac{Q_{t,i}}{ \sqrt{m_{t,i} } } } } \leq \otil \rbr{U\cdot  \sqrt{  \E \sbr{n\sum_{t=1}^T \sum_{i=1}^n Q_{t,i} } }  } \leq \otil \rbr{Un\sqrt{T}},
\end{align}
where the second inequality follows from the standard analysis of optimistic algorithms (e.g., \citep{Slivkins19}), and the last inequality uses the fact that $\sum_{i=1}^n Q_{t,i} \leq n$ for all $t$. It is noteworthy that  $\sum_{i=1}^n Q_{t,i} =1$ appears in multi-armed bandit problems since only one arm is played in each round. However, in the problems we consider, there is a chance to open all boxes in one round.

\textbf{Refined analysis for $\otil(\sqrt{nT})$ bound.} To get the $\otil(\sqrt{nT})$ regret bound, it suffices to show $\Reg(t) \leq \otil (\sqrt{\sum_{i}Q_{t,i}/m_{t,i}})$ since
\[
\Reg_T \leq \otil \rbr{\E \sbr{\sum_{t=1}^T \sqrt{\sum_{i \in [n]} \frac{Q_{t,i}}{m_{t,i}}} } } \leq \otil \rbr{ \E \sbr{\sqrt{T\sum_{t=1}^T\sum_{i \in [n]} \frac{Q_{t,i}}{m_{t,i}}  } }} \leq \otil \rbr{\sqrt{nT}},
\]
where the second inequality follows from the Cauchy–Schwarz inequality, and the last inequality results from the bound $\sum_{t,i}Q_{t,i}/m_{t,i} \leq \otil(n)$.
For any threshold $\sigma \in [0,1]^n$ and $z \in [0,1]$, define
\begin{equation} \label{eq:def_widetildeR}
\widetilde{R}_i(\sigma;z):= \E_{x \sim \calH_{t,i} } \sbr{R \rbr{\sigma;(x_1,\ldots,x_{i-1},z,x_{i+1},\ldots,x_n)}  \mid E_{\sigma,i}} .
\end{equation}

Based on these definitions and the fact that $E_{\sigma_{t},i}$ is independent of $i$-th box's reward, we write
\begin{align*}
\term_{t,i}& =\E_{z \sim \optE_{t,i}} \sbr{  \widetilde{R}_i(\sigma_{t};z)}   - \E_{z \sim D_{i}} \sbr{  \widetilde{R}_i(\sigma_{t};z)}.
\end{align*}

The following lemma is a key enabler for our refined analysis of both problems.

\begin{lemma}[\textbf{$1$-Lipschitzness and monotonicity}] \label{lem:main_body_Lipschitz}
Consider the Pandora's box problem. For all $t\in [T]$, $i \in [n]$, the function $\widetilde{R}_i(\sigma_{t};z)$ is $1$-Lipschitz and monotonically-increasing with respect to $z$.
\end{lemma}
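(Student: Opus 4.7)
The plan is to reduce the claim to a statement about a one-dimensional value function obtained by integrating out the other boxes. Specifically, for each $j \in [n]$ define $\psi_j(m)$ as the expected utility of running \pref{alg:Weitzman} with thresholds $\sigma_{t, j+1}, \ldots, \sigma_{t, n}$ starting just after opening box $j$ with current running maximum $m$, where the subsequent rewards $x_{j+1}, \ldots, x_n$ are drawn independently from $\optE_{t, j+1}, \ldots, \optE_{t, n}$ (the product distribution placed on those coordinates by $\calH_{t, i}$). Unrolling one step of the stopping rule gives
\[
\psi_j(m) = \begin{cases} m, & m \ge \sigma_{t, j+1}, \\ \varphi_{j+1}(m), & m < \sigma_{t, j+1}, \end{cases} \qquad \varphi_{j+1}(m) := \E_{x \sim \optE_{t, j+1}}\bigl[\psi_{j+1}(\max(m, x))\bigr] - c_{j+1},
\]
with base case $\psi_n(m) = m$. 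Using the independence of $x_{i+1}, \ldots, x_n$ from $x_1, \ldots, x_{i-1}$ under $\calH_{t, i}$ and the fact that $E_{\sigma_t, i}$ depends only on $x_1, \ldots, x_{i-1}$, one obtains $\widetilde{R}_i(\sigma_t; z) = \E[\psi_i(\max(M_{i-1}, z)) \mid E_{\sigma_t, i}] - \sum_{j=1}^i c_j$ with $M_{i-1} := \max(x_1, \ldots, x_{i-1})$. Since $z \mapsto \max(M_{i-1}, z)$ is itself monotone and 1-Lipschitz and the expectation preserves both properties, it suffices to show that $\psi_i$ is monotone non-decreasing and 1-Lipschitz on $[0,1]$.

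I would establish this by backward induction on $j$ from $n$ down to $i$, simultaneously proving: (P1) $\psi_j$ is monotone and 1-Lipschitz; (P2) $\varphi_j$ is monotone and 1-Lipschitz; and (P3) the pinning inequalities $\varphi_j(m) \ge m$ for $m \le \sigma_{t, j}$ and $\varphi_j(m) \le m$ for $m \ge \sigma_{t, j}$, with equality at $m = \sigma_{t, j}$. Property (P3) is where the Weitzman-style defining identity $\E_{x \sim \optE_{t, j}}[(x - \sigma_{t, j})_+] = c_j$ enters: decomposing
\[
\varphi_j(m) - m = \E_x\bigl[\psi_j(\max(m, x)) - \max(m, x)\bigr] + \bigl(\E_x[(x - m)_+] - c_j\bigr),
\]
the first term is non-negative by the inductive consequence $\psi_j(m') \ge m'$, and the second term is non-negative for $m \le \sigma_{t, j}$ and non-positive for $m \ge \sigma_{t, j}$ by the monotonicity of $m \mapsto \E[(x - m)_+]$ combined with the defining identity. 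For the upper half of (P3) one further observes that when $m \ge \sigma_{t, j} \ge \sigma_{t, j+1}$, $\max(m, x) \ge \sigma_{t, j+1}$ forces $\psi_j(\max(m, x)) = \max(m, x)$, so the first term vanishes.

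The main obstacle is establishing the 1-Lipschitz and monotonicity bounds for $\psi_j$ \emph{across} the kink at $m = \sigma_{t, j+1}$. For $m_1 < \sigma_{t, j+1} \le m_2$ one has $\psi_j(m_2) - \psi_j(m_1) = m_2 - \varphi_{j+1}(m_1)$, and the two-sided bound $0 \le m_2 - \varphi_{j+1}(m_1) \le m_2 - m_1$ is equivalent to $m_1 \le \varphi_{j+1}(m_1) \le \sigma_{t, j+1}$. The lower half is exactly (P3) at level $j+1$, and the upper half follows from the equality $\varphi_{j+1}(\sigma_{t, j+1}) = \sigma_{t, j+1}$ in (P3) together with the monotonicity from (P2). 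Within each of the two regions $m < \sigma_{t, j+1}$ and $m \ge \sigma_{t, j+1}$, the Lipschitz and monotonicity claims for $\psi_j$ are immediate from identity slope above and the inductive hypothesis below, and continuity at the boundary again follows from $\varphi_{j+1}(\sigma_{t, j+1}) = \sigma_{t, j+1}$. Once $\psi_j$ is controlled, $\varphi_j(m) = \E_{x_j \sim \optE_{t, j}}[\psi_j(\max(m, x_j))] - c_j$ inherits monotonicity and 1-Lipschitzness because $\max(m, x_j)$ is both monotone and 1-Lipschitz in $m$, which closes the induction and completes the proof of \pref{lem:main_body_Lipschitz}.
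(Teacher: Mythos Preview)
Your argument is correct, but it proceeds along a different path than the paper. The paper proves a standalone lemma (\pref{lem:one_lipschitz_Pandora_partial}) via a short \emph{policy-swapping} argument: for $u\ge v$ it compares the optimal stopping rule $W_u$ with initial value $u$ against $W_v$ with initial value $v$ by interpolating through the off-diagonal quantities $R(\sigma_D;D;u;W_v)$ and $R(\sigma_D;D;v;W_u)$, and then invokes Weitzman's optimality theorem to bound each interpolation. The 1-Lipschitz and monotonicity of $\widetilde{R}_i(\sigma_t;\cdot)$ then follow in two lines by observing that conditional on $E_{\sigma_t,i}$, the continuation after box $i$ is exactly an instance of Weitzman's algorithm on $(\optE_{t,i+1},\dots,\optE_{t,n})$ with initial value $\max\{x_1,\dots,x_{i-1},z\}$.

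Your approach instead unrolls the dynamic program and carries out an explicit backward induction on the value functions $\psi_j$, simultaneously establishing the pinning inequalities (your (P3)) that are essentially a re-derivation of the Weitzman/Gittins-index property. This buys you a fully self-contained proof that does not cite Weitzman's theorem as a black box, and it produces the additional intermediate fact $\varphi_{j}(\sigma_{t,j})=\sigma_{t,j}$, which you use to handle continuity at the kink. The cost is that the induction is longer and requires tracking three interlocking properties. One small point worth making explicit: your upper half of (P3) and the cross-kink bound both use $\sigma_{t,j}\ge\sigma_{t,j+1}$, which holds because the paper relabels boxes so the thresholds are sorted in descending order; you rely on this silently, whereas the paper states it as a WLOG assumption up front.
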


Since $\widetilde{R}_i(\sigma_{t};z)$ is 1-Lipschitz with respect to $z$, the map $\widetilde{R}_i(\sigma_{t};z)$ is absolutely continuous on $[0,1]$, which implies that it is differentiable almost everywhere in $[0,1]$. By the fundamental theorem of calculus, for any $x\in[0,1]$, we have that $ \widetilde{R}_i(\sigma_{t};x) = \widetilde{R}_i(\sigma_{t};0) + \int_0^{x} \frac{\partial}{\partial z}\widetilde{R}_i(\sigma_{t};z) \:dz.$
Then it is straightforward to deduce that
\begin{align*}
\term_{t,i}   &=  \int_0^{1} \rbr{1-F_{\optE_{t,i}}(z)}\frac{\partial}{\partial z}\widetilde{R}_i(\sigma_{t};z)\:dz - \int_0^{1}\rbr{1-F_{D_{i}}(z)}\frac{\partial}{\partial z}\widetilde{R}_i(\sigma_{t};z)\:dz \\
&=  \int_0^{1} \rbr{F_{D_{i}}(z)-F_{\optE_{t,i}}(z)}\frac{\partial}{\partial z}\widetilde{R}_i(\sigma_{t};z)\:dz \\
&\leq   \underbrace{\int_{\sigma_{t,i+1}}^{1} \abr{F_{D_{i}}(z)-F_{\optE_{t,i}}(z)} \:dz}_{A_{t,i}} +\underbrace{\int_0^{\sigma_{t,i+1}} \rbr{F_{D_{i}}(z)-F_{\optE_{t,i}}(z)}\frac{\partial}{\partial z}\widetilde{R}_i(\sigma_{t};z)\:dz}_{B_{t,i}},
\end{align*}
where the inequality holds since $\widetilde{R}_i(\sigma_{t};z)$ is $1$-Lipschitz with respect to  $z$ and thus its derivative is bounded in $[-1,1]$ almost everywhere.
Here, we decompose $\term_{t,i}$ into two parts $A_{t,i}$ and $B_{t,i}$ to isolate contributions from different value regions, enabling a problem-dependent analysis that yields a sharper regret bound. An advantage of this analysis is to avoid using \citep[Lemma 1.3]{agarwal2024semi}, which incurs a linear dependence on $U$. 
Indeed, %
we have 
\begin{align*}
A_{t,i} &\leq \int_{\sigma_{t,i+1}}^{1} \abr{F_{D_{i}}(z)-F_{\calE_{t,i}}(z)} \:dz +\int_{\sigma_{t,i+1}}^{1} \abr{F_{\calE_{t,i}}(z)-F_{\optE_{t,i}}(z)} \:dz\\
&  \leq  \order \rbr{\frac{L}{ m_{t,i} }+  \int_{\sigma_{t,i+1}}^{1} \sqrt{\frac{F_{D_{i}}(z)(1-F_{D_{i}}(z))L}{ m_{t,i} }}  \:dz + \int_{\sigma_{t,i+1}}^{1} \sqrt{\frac{F_{\calE_{t,i}}(z)(1-F_{\calE_{t,i}}(z))L}{ m_{t,i} }} \:dz   } \\
&\leq   \order \rbr{\frac{L}{ m_{t,i} }+   \sqrt{\frac{(1-F_{D_{i}}(\sigma_{t,i+1}))L}{ m_{t,i} }}+  \sqrt{\frac{(1-F_{\calE_{t,i}}(\sigma_{t,i+1}))L}{ m_{t,i} }}   }  \\
&\leq \order \rbr{ \sqrt{\frac{(1-F_{D_i}( \sigma_{t,i+1} ))L}{ m_{t,i} }}+\frac{L}{ m_{t,i} }  } ,
\end{align*}
where the second inequality bounds $|F_{D_{i}}(z)-F_{\calE_{t,i}}(z)|$ by a Bernstein-type Dvoretzky-Kiefer-Wolfowitz inequality (cf.~\pref{lem:Bernstein_CDF_convergence} in \pref{app:tech_lemmas}) and bounds $|F_{\calE_{t,i}}(z)-F_{\optE_{t,i}}(z)|$ by the construction \pref{eq:optE_construction_Bernstein}; the third inequality bounds $1-F_{D_{i}}(z) \leq 1-F_{D_{i}}( \sigma_{t,i+1} )$, and an analogous inequality holds for $1-F_{\calE_{t,i}}(z)$ since $z \geq \sigma_{t,i+1}$; the last inequality replaces $F_{\calE_{t,i}}( \sigma_{t,i+1} )$ with $F_{D_i}( \sigma_{t,i+1} )$ by paying a term of the same order (cf.~\pref{eq:CDF_D2calE}). 

To handle the term $B_{t,i}$, instead of bounding the derivative by one as before, we introduce the following lemma which more tightly bounds the derivative by the probability that the first $i-1$ boxes have values no larger than $z$, conditioning on opening box $i$.
\begin{lemma} \label{lem:derivative_bound}
Suppose that $\sigma_{t,1} \geq \sigma_{t,2}\geq \cdots \geq \sigma_{t,n}$.
Pandora's Box problem satisfies 
\[
 0\leq \frac{\partial}{\partial z}\widetilde{R}_i(\sigma_{t};z) \leq  \prod_{j<i} F_{D_j}(z) /Q_{t,i},\quad \forall z \in \left[0,\sigma_{t,i+1} \right) . 
\]
\end{lemma}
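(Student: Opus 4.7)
The non-negativity $\frac{\partial}{\partial z}\widetilde{R}_i(\sigma_t;z)\ge 0$ is immediate from the monotonicity half of \pref{lem:main_body_Lipschitz}, so I focus on the upper bound. Fix $z\in[0,\sigma_{t,i+1})$ and $h>0$ small enough that $z+h<\sigma_{t,i+1}$, and write $v(z'):=(x_1,\ldots,x_{i-1},z',x_{i+1},\ldots,x_n)$ and $M:=\max_{j<i}x_j$. The plan is to bound the pointwise increment $R(\sigma_t;v(z+h))-R(\sigma_t;v(z))$ by $h\cdot\Ind{M<z+h}$, then take conditional expectation over $x\sim\calH_{t,i}$ given $E_{\sigma_t,i}$ and send $h\to 0^+$.

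The pointwise inequality splits into two regimes. If $M\ge z+h$, then $v_i=z'\le M$ for every $z'\in[z,z+h]$, so after \pref{alg:Weitzman} opens boxes $1,\ldots,i$ (which happens on $E_{\sigma_t,i}$) the running maximum equals $M$. By induction on the position $j\ge i$, the identity $\max(M,z',x_{i+1},\ldots,x_j)=\max(M,x_{i+1},\ldots,x_j)$ shows that each subsequent stopping test and running maximum is independent of $z'$; hence the same boxes are opened and the same final maximum is recorded for $v(z)$ and $v(z+h)$, giving increment zero. If $M<z+h$, I use the one-sided pointwise bound $R(\sigma_t;v(z+h))-R(\sigma_t;v(z))\le h$: on any sub-interval of $[z,z+h]$ where the opened set $W$ is constant, the reward changes by the change in $\max_W v$, which is $1$-Lipschitz in the coordinate $v_i$; at the (finitely many) values of $z'$ where $W$ transitions, the set $W$ can only strictly shrink (raising $z'$ can only trigger earlier stopping), so the induced jump in reward is nonpositive and may be dropped.

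Taking conditional expectations and dividing by $h$ then yields
\[
\frac{\widetilde{R}_i(\sigma_t;z+h)-\widetilde{R}_i(\sigma_t;z)}{h}\le \P\big[M<z+h\,\big|\,E_{\sigma_t,i}\big].
\]
Since $\widetilde{R}_i(\sigma_t;\cdot)$ is $1$-Lipschitz by \pref{lem:main_body_Lipschitz}, it is differentiable at a.e.\ $z\in[0,\sigma_{t,i+1})$, and at any such point the left-hand side converges to $\frac{\partial}{\partial z}\widetilde{R}_i(\sigma_t;z)$ as $h\to 0^+$. The events $\{M<z+h\}$ decrease to $\{M\le z\}$ as $h\to 0^+$, so continuity of probability from above gives $\P[M<z+h\mid E_{\sigma_t,i}]\to\P[M\le z\mid E_{\sigma_t,i}]$. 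Because $z<\sigma_{t,i+1}\le\sigma_{t,i}$, the event $\{M\le z\}$ lies inside $E_{\sigma_t,i}=\{M<\sigma_{t,i}\}$, so the conditional probability collapses to $\P[M\le z]/Q_{t,i}$. Finally, under $\calH_{t,i}$ the first $i-1$ coordinates are independent with $x_j\sim D_j$, giving $\P[M\le z]=\prod_{j<i}F_{D_j}(z)$ and matching the claimed bound.

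The main delicate step is the execution-invariance claim on $\{M\ge z+h\}$: one must verify that neither the stopping test $V_{\max}\ge\sigma_{\pi_{j+1}}$ in \pref{alg:Weitzman} nor any running maximum depends on $v_i\in[z,z+h]$ once that interval sits at or below $M$. This reduces to the inductive identity above and is where the algorithmic content concentrates. The one-sided Lipschitz bound in the complementary case, the continuity-from-above limit, and the independence computation that recovers $\prod_{j<i}F_{D_j}(z)$ are all standard once this decomposition is in place.
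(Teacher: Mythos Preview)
Your argument has a genuine gap in the ``one-sided pointwise bound'' step. You claim that at values of $z'$ where the opened set $W$ transitions, the induced jump in reward is nonpositive because $W$ shrinks. This is false for Pandora's Box: the utility is $\max_{j\in W}v_j-\sum_{j\in W}c_j$, so when $W$ shrinks the cost sum decreases, which can make the reward jump \emph{up}. Concretely, take $n=3$, $i=1$, thresholds $\sigma_{t,2}=0.5$, $\sigma_{t,3}=0.4$, costs $c_1=c_2=0$, $c_3=0.3$, and the realization $x_2=0.1$, $x_3=0$ (which has positive probability when $\optE_{t,3}$ is uniform on $\{0,1\}$, a distribution whose optimal threshold is exactly $0.4$). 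For $z\in[0.1,0.4)$ all three boxes are opened and the reward is $z-0.3$; for $z\in[0.4,0.5)$ box $3$ is not opened and the reward is $z$. The pointwise map $z\mapsto R(\sigma_t;v(z))$ therefore jumps by $+c_3=0.3$ at $z=\sigma_{t,3}=0.4$, and with $z=0.35$, $h=0.1$ (both in $[0,\sigma_{t,i+1})=[0,0.5)$) the increment is $0.45-0.05=0.4>h$. Your case split on $M$ and the limit computation are fine; only this step fails.

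The paper avoids this by never arguing pointwise in $x_{>i}$. It first observes that, conditional on $E_{\sigma_t,i}$, one may replace $z$ by $\max\{x_{<i},z\}$, and then invokes the $1$-Lipschitzness of $\widetilde{R}_i(\sigma_t;x_{<i},\cdot)$, i.e., of the utility \emph{averaged} over $x_{>i}\sim\calH_{t,>i}$. This Lipschitzness holds precisely because $\sigma_t$ is the optimal threshold for $\optE_t$, and is established by a policy-swapping/optimality argument (\pref{lem:one_lipschitz_Pandora_partial}) rather than pointwise. After that, the indicator $\Ind{u>\max_{j<i}x_j}$ drops out from $\max\{x_{<i},u\}-\max\{x_{<i},v\}$, and the conditional-probability computation matches yours. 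Your proof can be repaired by replacing the pointwise claim in the $M<z+h$ case with $\E_{x_{>i}}[R(\sigma_t;v(z+h))-R(\sigma_t;v(z))]\le h$, but justifying this already requires the optimality-based Lipschitz lemma the paper uses.
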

For shorthand, define $a_i^z=\prod_{j\leq i} F_{D_{j}}(z)$. By \pref{lem:derivative_bound} and some calculations (deferred to \pref{app:non_context_proof}), we have that
\begin{align*}
B_{t,i} &\leq \int_0^{\sigma_{t,i+1}} \abr{F_{D_{i}}(z)-F_{\optE_{t,i}}(z)}  \frac{ \sqrt{a_{i-1}^z} \sqrt{a_{i-1}^z}  }{Q_{t,i}} \:dz\\
&\leq \int_0^{\sigma_{t,i+1}} \rbr{\abr{F_{D_{i}}(z)-F_{\calE_{t,i}}(z)}+\abr{F_{\calE_{t,i}}(z)-F_{\optE_{t,i}}(z)}}  \sqrt{\frac{a_{i-1}^z}{Q_{t,i}}} \:dz\\
& \leq \order \rbr{\frac{L}{m_{t,i}}+   \frac{ 1 }{  \sqrt{Q_{t,i} } }\int_{0}^{1} \sqrt{a_{i-1}^z (1-F_{D_i}(z))      }  \sqrt{   \frac{  L    }{ m_{t,i}   }    } \:dz   } ,
\end{align*}
where the second inequality follows from $a_{i-1}^z\leq a_{i-1}^{\sigma_{t,i+1}} \leq a_{i-1}^{\sigma_{t,i}}=Q_{t,i}$ for all $z <\sigma_{t,i+1}$, and the last inequality repeats the same argument used to bound $A_{t,i}$ to bound $|F_{D_{i}}(z)-F_{\calE_{t,i}}(z)|$ and $|F_{\calE_{t,i}}(z)-F_{\optE_{t,i}}(z)|$, respectively.

Plugging the above bounds of $A_{t,i}$ into \pref{eq:coupling_argument} and \pref{eq:before_decomposition}, the regret contribution of $\{A_{t,i}\}_i$ at round $t$ (omitting $L/m_{t,i}$ and some constant) is bounded by
\begin{align*}
   & \sum_{i \in [n]} Q_{t,i} \sqrt{\frac{(1-F_{D_i}( \sigma_{t,i+1} ))L}{ m_{t,i} }}\leq \sqrt{ \sum_{i \in [n]} Q_{t,i} (1-F_{D_i}( \sigma_{t,i+1} )) }  \sqrt{ \sum_{i \in [n]} \frac{L Q_{t,i} }{m_{t,i}} 
 }\leq  \sqrt{ \sum_{i \in [n]} \frac{ L Q_{t,i} }{m_{t,i}} 
 } ,
\end{align*}
where the first inequality follows from the Cauchy-Schwarz inequality, and the second inequality follows from $\sum_{i \in [n]} Q_{t,i} (1-F_{D_i}( \sigma_{t,i+1} )) \leq 1$. Indeed, since $Q_{t,i}=a_{i-1}^{\sigma_{t,i}}$ and $\sigma_{t,1} \geq \sigma_{t,2}\ldots \geq \sigma_{t,n}$, we have that $Q_{t,i}\leq a_{i-1}^{\sigma_{t,j+1}}$, which in turn yields the telescoping sum $\sum_{i=1}^n Q_{t,i}\rbr{1-F_{D_i}( \sigma_{t,i+1} )} \leq    \sum_{i=1}^n \rbr{a_{i-1}^{\sigma_{t,j+1}}-a_i^{\sigma_{t,j+1}}} \leq 1$.

Similarly, the regret contribution of $\{B_{t,i}\}_i$ at round $t$ (omitting $L/m_{t,i}$ and some constant) is bounded by
\begin{align*}
 \sum_{i \in [n]} \int_{0}^{1} \sqrt{ a_{i-1}^z-a_i^z   }  \sqrt{ \frac{ L Q_{t,i}  }{m_{t,i}}} \:dz \leq \int_{0}^{1}   \sqrt{\sum_{i \in [n]} (a_{i-1}^z -a_i^z)  }  \sqrt{ \sum_{i \in [n]} \frac{L Q_{t,i}  }{m_{t,i}}} \:dz \leq \sqrt{ \sum_{i \in [n]} \frac{ L Q_{t,i}   }{m_{t,i}}} ,
\end{align*}
where the first inequality uses the Cauchy-Schwarz inequality and the second inequality uses the telescoping sum and $a_i^z \leq 1$ for all $i,z$.
Hence, the cumulative regret is bounded by $\otil(\sqrt{\sum_i Q_{t,i}/m_{t,i}})$. Combining all the results completes the proof of \pref{thm:pandora_main_thm_non_context}.

\section{$\otil(nd\sqrt{T})$ Regret Bound for Contextual Linear Pandora's Box}

In this section, we present \pref{alg:contextual_alg} (in \pref{app:contextual_linear_proof}) for the contextual linear Pandora’s Box problem. While our algorithm inherits the main structure of \pref{alg:non_context}, it introduces key modifications to ensure optimism, which is critical in the contextual setting. 
To ensure the optimism, we need to construct an optimistic product distribution which stochastically dominates the underlying product distribution $D_t$ for each round $t$.
In the non-contextual case, the optimistic distribution is constructed by using \textit{iid samples} to first construct empirical distribution and then move the probability mass.
However, due to context-dependent shifts, this strategy no longer applies. To address this, we leverage the fact that the shift in each box's distribution is determined by a fixed but unknown vector $\theta_i$.

A natural idea here is to construct empirical distribution by using samples $\{z_{t_i(j),i}^t\}_{j=1}^{m_{t,i}}$ where $z_{t_i(j),i}^t=\eta_{t_i(j),i} +\mu_{t,i}$, since they can be treated as $m_{t,i}$ iid samples drawn from $D_{t,i}$. However, the learner never directly observes the noise sample $\eta_{t_i(j),i}$.
To overcome this, we maintain an estimation $\htheta_{t,i}$ at each round $t$ by using ridge regression on past observations. Specifically,
for each opened box $i\in \calB_t$, we make the following updates:
$
\htheta_{t,i} = V_{t,i}^{-1}  \sum_{s \leq t: i \in \calB_s}   x_{s,i} v_{s,i}$
where $\quad  V_{t,i} =I + \sum_{s \leq t: i \in \calB_s}  x_{s,i} x_{s,i}^\top$.
Let $\calX$ denote the context space.
By the analysis of linear bandits (e.g., \citep{abbasi2011improved}), for each box $i \in [n]$, there exists a \textit{known} function $\beta_i^t:\calX \times (0,1) \to \fR_{>0}$ such that with probability at least $1-\delta$, for all $x\in\calX$ and all $t$,
\[
\abr{x^\top (\theta_i -\htheta_{t-1,i}) } \leq \beta_{i}^t(x,\delta),  \text{where } \beta_{i}^t(x,\delta) =\order \rbr{\norm{x}_{V_{t-1,i}^{-1}} \sqrt{\log (n\delta^{-1}) +d \log(1+T/d) } }.
\]

With a confidence radius of $\theta$, we construct a \textit{value-optimistic} empirical distribution $\calE_{t,i}$ by assigning each of the following with equal probability mass $m_{t,i}^{-1}$.
\begin{equation} \label{eq:def4_hatztij}
        \hat{z}_{t_i(j),i}^t := \min \Big\{1, \underbrace{ v_{t_i(j),i}-  \LCB_i^t (\htheta_{t-1,i},x_{t_i(j),i})}_{\text{Optimistic debiased noise sample}}  +\underbrace{\UCB_i^t (\htheta_{t-1,i},x_{t,i})}_{\text{Optimistic estimated mean}}\Big\} ,
\end{equation}
where for any context $x \in \calX$, any failure probability $\delta \in (0,1)$, and any $\theta \in \fR^d$
\begin{align}
    \UCB_i^t (\theta,x) := x^\top \theta + \beta_i^t(x,\delta),\quad \text{and}\quad  \LCB_i^t (\theta,x) := x^\top \theta - \beta_i^t(x,\delta).
\end{align}

The construction of $\calE_{t,i}$ is \textit{value-optimistic} because $\hat{z}_{t_i(j),i}^t  \geq z_{t_i(j),i}^t$ holds. If one has $m_{t,i}$ samples at round $t$ for box $i$, then $\calE_{t,i}$ stochastically dominates the empirical distribution constructed by using samples $\{z_{t_i(j),i}^t\}_{j=1}^{m_{t,i}}$. Then, for each box $i$, we construct the optimistic distribution $\optE_{t,i}$ by moving a small amount probability mass for the value-optimistic empirical distribution $\calE_{t,i}$ so that it dominates the true distribution (the detailed construction of $\optE_{t,i}$ and the proof are deferred to \pref{app:contextual_linear_proof}).
\begin{lemma} \label{lem:SD_contextual}
With probability at least $1-\delta$, for all $t \in [T]$, $\optE_{t} \SD D_t$.
\end{lemma}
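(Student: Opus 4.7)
The plan is to combine two high-probability events: a ridge-regression confidence event controlling $|x^\top(\theta_i-\htheta_{t-1,i})|$, and a DKW-type concentration event controlling an i.i.d.\ hypothetical empirical CDF. The bridge between the value-optimistic empirical $\calE_{t,i}$ that the algorithm actually builds and the target $D_{t,i}$ is an unobservable intermediate distribution $\tilde\calE_{t,i}$ that assigns mass $1/m_{t,i}$ to each $z_{t_i(j),i}^t := \eta_{t_i(j),i}+\mu_{t,i}$; by construction, $\tilde\calE_{t,i}$ is i.i.d.\ from $D_{t,i}=\mu_{t,i}+D_i$.

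First I would define $\mathcal{G}_1 := \{\forall\, t,i,x\in\calX:\; |x^\top(\theta_i-\htheta_{t-1,i})|\le\beta_i^t(x,\delta/2)\}$. By the self-normalized concentration bound of \citet{abbasi2011improved}, $\P(\mathcal{G}_1)\ge 1-\delta/2$. On $\mathcal{G}_1$, using $\theta_i^\top x_{t_i(j),i}\ge\LCB_i^t(\htheta_{t-1,i},x_{t_i(j),i})$ and $\UCB_i^t(\htheta_{t-1,i},x_{t,i})\ge\theta_i^\top x_{t,i}=\mu_{t,i}$, I would verify the pointwise inequality
\[
\hat{z}_{t_i(j),i}^t \;\ge\; \eta_{t_i(j),i}+\mu_{t,i} \;=\; z_{t_i(j),i}^t,
\]
where the $\min\{1,\cdot\}$ cap in \pref{eq:def4_hatztij} is inactive since $z_{t_i(j),i}^t\in[0,1]$. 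Hence $F_{\calE_{t,i}}(x)\le F_{\tilde\calE_{t,i}}(x)$ for all $x$, i.e.\ $\calE_{t,i}\SD\tilde\calE_{t,i}$.

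Second, since $\{z_{t_i(j),i}^t\}_j$ are i.i.d.\ from $D_{t,i}$, I would invoke the Bernstein-type DKW inequality used in the non-contextual case (\pref{lem:Bernstein_CDF_convergence}) on $\tilde\calE_{t,i}$ to obtain an event $\mathcal{G}_2$ with $\P(\mathcal{G}_2)\ge 1-\delta/2$ (union-bounding over $t,i$ is absorbed into the constant $L$) on which $F_{\tilde\calE_{t,i}}(x)$ is pointwise controlled against $F_{D_{t,i}}(x)$. On $\mathcal{G}_1\cap\mathcal{G}_2$, the appendix construction of $\optE_{t,i}$ applies a downward shift to $F_{\calE_{t,i}}$ that is calibrated to compensate for both (a) the value-optimistic substitution and (b) the sampling deviation encoded in $\mathcal{G}_2$; the result is $F_{\optE_{t,i}}(x)\le F_{D_{t,i}}(x)$ for all $x$, i.e.\ $\optE_{t,i}\SD D_{t,i}$. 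A final union bound over $i\in[n]$ then yields $\optE_t\SD D_t$ for all $t\in[T]$ with probability at least $1-\delta$, mirroring the argument of \pref{lem:sto_dom_non_context}.

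The main obstacle is the chaining step. The Bernstein-style radius used in \pref{eq:optE_construction_Bernstein} involves $\sqrt{F_{\calE_{t,i}}(1-F_{\calE_{t,i}})}$, which is not monotone in $F_{\calE_{t,i}}$; naively replacing $F_{\tilde\calE_{t,i}}$ by the smaller $F_{\calE_{t,i}}$ inside the radius could shrink the shift when $F_{\tilde\calE_{t,i}}\le 1/2$ and break the dominance. I expect the appendix construction to resolve this either by splitting into the regimes $F_{\calE_{t,i}}\ge 1/2$ and $F_{\calE_{t,i}}<1/2$ (so that monotonicity of the shift holds piecewise) or by inflating the variance proxy to an upper bound valid for every $F\in[F_{\calE_{t,i}},F_{\tilde\calE_{t,i}}]$. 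Once the construction is fixed, the remaining steps are routine and parallel those of \pref{lem:sto_dom_non_context}.
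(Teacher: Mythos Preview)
Your high-level plan is exactly right and matches the paper: define a linear-bandit confidence event $\mathcal{G}_1$ on which $\hat{z}_{t_i(j),i}^t \ge z_{t_i(j),i}^t$ (so $F_{\calE_{t,i}}\le F_{\tilde\calE_{t,i}}$), and a DKW-type event $\mathcal{G}_2$ controlling the empirical CDF of the i.i.d.\ noise samples against $D_i$ (equivalently, $\tilde\calE_{t,i}$ against $D_{t,i}$). On $\mathcal{G}_1\cap\mathcal{G}_2$, the downward shift in the construction of $\optE_{t,i}$ pushes $F_{\calE_{t,i}}$ below $F_{D_{t,i}}$.

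The obstacle you flag, however, is an artifact of assuming the Bernstein-style construction \pref{eq:optE_construction_Bernstein} is used in the contextual case. It is not. The paper explicitly switches to a plain DKW shift in the contextual setting (see \pref{eq:optE_construction_weak}):
\[
F_{\optE_{t,i}}(x)=\max\Bigl\{0,\;F_{\calE_{t,i}}(x)-\sqrt{L/m_{t,i}}\Bigr\},\qquad 0\le x<1,
\]
and the paper even remarks that the Bernstein construction ``requires access to the CDF of an empirical distribution based on i.i.d.\ samples, which is unavailable in our setting.'' With this constant shift the chaining is immediate:
\[
F_{\optE_{t,i}}(x)\;\le\;F_{\calE_{t,i}}(x)-\sqrt{L/m_{t,i}}\;\le\;F_{\tilde\calE_{t,i}}(x)-\sqrt{L/m_{t,i}}\;\le\;F_{D_{t,i}}(x),
\]
the first inequality by definition, the second by $\mathcal{G}_1$, and the third by the DKW bound in $\mathcal{G}_2$. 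No case split on $F\lessgtr 1/2$ and no variance-proxy inflation is needed. So your proposed fixes are unnecessary; the actual resolution is simply to use the weaker construction, at the cost of losing the Bernstein refinement in the contextual analysis (which is why the contextual bound is $\otil(nd\sqrt{T})$ rather than $\otil(\sqrt{nT})$).

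One minor wording issue: the $\min\{1,\cdot\}$ cap in \pref{eq:def4_hatztij} can certainly be active; what matters is that $z_{t_i(j),i}^t\le 1$, so $\hat{z}_{t_i(j),i}^t=\min\{1,\cdot\}\ge z_{t_i(j),i}^t$ regardless.
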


Once the optimism is ensured, we prove the following regret bound for \pref{alg:contextual_alg} in \pref{app:contextual_linear_proof}.
\begin{theorem} \label{thm:contextual_linear_reg_bound}
For contextual linear Pandora's Box problem, choosing $\delta=T^{-1}$ for \pref{alg:contextual_alg} ensures 
$\Reg_T=\otil(nd\sqrt{T})$ regret bound.
\end{theorem}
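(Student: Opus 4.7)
The plan is to extend the three-step template from the proof of \pref{thm:pandora_main_thm_non_context}. By \pref{lem:SD_contextual} the instantaneous regret is bounded by $\E[R(\sigma_t;\optE_t)-R(\sigma_t;D_t)]$; assuming without loss of generality that $\sigma_{t,1}\geq\cdots\geq\sigma_{t,n}$ and coupling through the hybrid sequence $\calH_{t,i}=(D_{t,1},\ldots,D_{t,i},\optE_{t,i+1},\ldots,\optE_{t,n})$ yields $\E[R(\sigma_t;\calH_{t,i-1})-R(\sigma_t;\calH_{t,i})]=\E[Q_{t,i}\,\term_{t,i}]$ exactly as in \pref{sec:analysis_non_contextual}. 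The subtlety is that $\optE_{t,i}$ is built from non-iid observations via the value-optimistic samples $\hat z^t_{t_i(j),i}$ of \pref{eq:def4_hatztij}, so the non-contextual Bernstein/DKW analysis cannot be applied directly to $\term_{t,i}$.

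To bridge this, I introduce the virtual empirical distribution $\hat D_{t,i}$ supported on the unobserved true shifted samples $z^t_{t_i(j),i}=\eta_{t_i(j),i}+\mu_{t,i}$, which are genuinely iid from $D_{t,i}$. This gives the three-piece decomposition
\[
\term_{t,i}=\underbrace{\Ex_{z\sim\optE_{t,i}}\sbr{\widetilde R_i(\sigma_t;z)}-\Ex_{z\sim\calE_{t,i}}\sbr{\widetilde R_i(\sigma_t;z)}}_{T^{(1)}_{t,i}}+\underbrace{\Ex_{z\sim\calE_{t,i}}\sbr{\widetilde R_i(\sigma_t;z)}-\Ex_{z\sim\hat D_{t,i}}\sbr{\widetilde R_i(\sigma_t;z)}}_{T^{(2)}_{t,i}}+\underbrace{\Ex_{z\sim\hat D_{t,i}}\sbr{\widetilde R_i(\sigma_t;z)}-\Ex_{z\sim D_{t,i}}\sbr{\widetilde R_i(\sigma_t;z)}}_{T^{(3)}_{t,i}}.
\]
The outer terms $T^{(1)}$ and $T^{(3)}$ are handled by replaying the argument of \pref{sec:analysis_non_contextual}: the construction of $\optE_{t,i}$ from $\calE_{t,i}$ moves precisely the Bernstein-calibrated mass, and $\hat D_{t,i}$ versus $D_{t,i}$ is empirical-CDF concentration over $m_{t,i}$ iid draws. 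Combined with the contextual analogs of \pref{lem:main_body_Lipschitz} and \pref{lem:derivative_bound} plus the $\prod_{j<i}F_{D_{t,j}}(z)$ telescope, each contributes $\otil(\sqrt{nT})$ in total.

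The main new term is $T^{(2)}_{t,i}$, which captures the mean-estimation error. Since $\calE_{t,i}$ and $\hat D_{t,i}$ are discrete distributions with paired atoms $\hat z^t_{t_i(j),i}$ and $z^t_{t_i(j),i}$, the 1-Lipschitzness of $\widetilde R_i$ yields
\[
\abr{T^{(2)}_{t,i}}\leq \frac{1}{m_{t,i}}\sum_{j=1}^{m_{t,i}}\abr{\hat z^t_{t_i(j),i}-z^t_{t_i(j),i}}\leq \frac{2}{m_{t,i}}\sum_{j=1}^{m_{t,i}}\beta^t_i(x_{t_i(j),i},\delta)+2\beta^t_i(x_{t,i},\delta),
\]
after unpacking \pref{eq:def4_hatztij} and applying the linear-bandit confidence bound. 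The piece $\sum_{t,i}Q_{t,i}\beta^t_i(x_{t,i},\delta)$ is controlled by Cauchy--Schwarz together with the elliptical potential $\E[\sum_t Q_{t,i}\nbr{x_{t,i}}_{V_{t-1,i}^{-1}}^2]=\otil(d)$, giving $\otil(d\sqrt{T})$ per box. For the historical sum I first pass to $\beta^{t_i(j)}_i$ via $V_{t-1,i}^{-1}\preceq V_{t_i(j)-1,i}^{-1}$, then an inner Cauchy--Schwarz with an elliptical potential over the opening rounds reduces the inner average to $\otil(d/\sqrt{m_{t,i}})$, after which an outer Cauchy--Schwarz invoking the non-contextual potential $\sum_{t,i}Q_{t,i}/m_{t,i}=\otil(n)$ delivers the same $\otil(nd\sqrt{T})$ rate. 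The main obstacle is precisely this nested bookkeeping: one must avoid losing a $\sqrt{m_{t,i}}$ or $\sqrt{d}$ factor when the inner ellipsoidal potential interacts with the outer $\sum_t Q_{t,i}$ telescope. A union bound over \pref{lem:SD_contextual} and the linear-bandit confidence events with $\delta=T^{-1}$ then converts expectations into the stated $\otil(nd\sqrt{T})$ regret.
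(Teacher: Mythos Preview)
Your three-piece decomposition and the handling of $T^{(2)}_{t,i}$ via $1$-Lipschitzness plus the linear-bandit confidence width and elliptical potential are exactly what the paper does; this is the heart of the argument and you have it right.

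Where you diverge from the paper is in your treatment of $T^{(1)}_{t,i}$. You assert that ``the construction of $\optE_{t,i}$ from $\calE_{t,i}$ moves precisely the Bernstein-calibrated mass'' and that replaying the telescope of \pref{sec:analysis_non_contextual} yields $\otil(\sqrt{nT})$. This is not what \pref{alg:contextual_alg} does: in the contextual case the paper abandons the Bernstein construction \pref{eq:optE_construction_Bernstein} and uses the cruder DKW-width shift $F_{\optE_{t,i}}(x)=\max\{0,F_{\calE_{t,i}}(x)-\sqrt{L/m_{t,i}}\}$. The reason is precisely the one you flag as a subtlety but then set aside: $\calE_{t,i}$ is built from the value-optimistic samples $\hat z^t_{t_i(j),i}$, which are \emph{not} i.i.d.\ from $D_{t,i}$, so the Bernstein factor $\sqrt{F_{\calE_{t,i}}(z)(1-F_{\calE_{t,i}}(z))}$ has no clean relationship to $\sqrt{F_{D_{t,i}}(z)(1-F_{D_{t,i}}(z))}$ (the atoms in $\calE_{t,i}$ are shifted from those in $\hat D_{t,i}$ by sample-dependent amounts of order $\beta^t_i$, not $1/\sqrt{m_{t,i}}$). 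Consequently neither the stochastic-dominance proof nor the $A_{t,i}/B_{t,i}$ telescope goes through for $T^{(1)}_{t,i}$ under a Bernstein construction.

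This is not a fatal gap for the theorem, however. The paper simply bounds $T^{(1)}_{t,i}\leq\sqrt{L/m_{t,i}}$ and $T^{(3)}_{t,i}\leq\sqrt{L/m_{t,i}}$ via $1$-Lipschitzness alone (no sharp derivative, no telescope), which after summing gives $\otil(n\sqrt{T})$ for each---already dominated by the $\otil(nd\sqrt{T})$ coming from $T^{(2)}_{t,i}$. So your proof goes through once you drop the Bernstein refinement for $T^{(1)}_{t,i}$ and accept $n\sqrt{T}$ there. (Your claim for $T^{(3)}_{t,i}$ is on firmer ground since $\hat D_{t,i}$ really is an i.i.d.\ empirical, but the paper does not bother with that refinement either.)
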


While the algorithm of \citep{atsidakou2024contextual} applies to our setting with $\otil(nT^{5/6})$ regret, \pref{thm:contextual_linear_reg_bound} gives a $\sqrt{T}$-type regret bound in the contextual linear setting, which is better whenever $d=o(T^{1/3})$. The tightness of $\otil(nd\sqrt{T})$ remains an open question. In the contextual linear bandit setting with $\theta_1=\cdots=\theta_n$, the known lower bound is $\Omega(d\sqrt{T})$ \citep{dani2008stochastic}. Our setting however requires learning a separate noise distribution for each box, and we conjecture that this makes the linear dependence on $n$ unavoidable, even when all boxes share the same $\theta$.

\textbf{Analysis.} As \pref{lem:SD_contextual} ensures that $\optE_t \SD D_t$ for all $t$, we follow the same argument in the non-contextual case to arrive at \pref{eq:before_decomposition}. 
The decomposition of the main term in \pref{eq:before_decomposition} slightly differs from that of the non-contextual case. 
We introduce an empirical distribution $\hat{D}_{t,i}$ for the decomposition, which is constructed by assigning equal probability mass $m_{t,i}^{-1}$ for each $\{z_{t_i(j),i}^t\}_{j=1}^{m_{t,i}}$.
Notice that  conditioning on history before $t$, $\hat{D}_{t}$ is a product distribution.
Then, conditioning on the history before $t$, for any $i$, the main term is rewritten as
\begin{align*}
& \term_{t,i}=\underbrace{\E_{z \sim  \optE_{t,i}} \sbr{\widetilde{R}_i(\sigma_{t};z) }   -  \E_{z \sim  \calE_{t,i}}  \sbr{\widetilde{R}_i(\sigma_{t};z)} }_{\partI}
+
\underbrace{ \E_{z \sim  \calE_{t,i}} \sbr{\widetilde{R}_i(\sigma_{t};z) }   -  \E_{z \sim  \hat{D}_{t,i}}  \sbr{\widetilde{R}_i(\sigma_{t};z)}}_{\partII} \\
&\quad 
+ \underbrace{
\E_{z \sim  \hat{D}_{t,i}} \sbr{\widetilde{R}_i(\sigma_{t};z) }   -  \E_{z \sim  D_{t,i}}  \sbr{\widetilde{R}_i(\sigma_{t};z)}}_{\partIII}.
\end{align*}

Here, term \partI{} captures the error of optimistic reweighting, which arises from shifting probability mass in the empirical distribution $\calE_{t,i}$ to construct the optimistic distribution $\optE_{t,i}$. Term \partIII{} quantifies the estimation error between the empirical distribution $\hat{D}_{t,i}$ and the true underlying distribution $D_i$. 
Both can be bounded in a similar way as in the non-contextual case.
It thus remains to handle \partII{}, which captures the error incurred by the optimistic value shifts. One can show that
\begin{align*}
\partII{}=  \frac{1}{m_{t,i}} \sum_{j=1}^{m_{t,i}} \rbr{\widetilde{R}(\sigma_{t};\hat{z}_{t_i(j),i}) -  \widetilde{R}(\sigma_{t};z_{t_i(j),i} ) } \leq  \frac{1}{m_{t,i}} \sum_{j =1}^{m_{t,i}}  \abr{ z_{t_i(j),i}^t-\hat{z}_{t_i(j),i}^t },
\end{align*}
where the inequality uses $1$-Lipschitz property (see \pref{lem:main_body_Lipschitz}). Following  standard linear bandit analysis (e.g., \citep{abbasi2011improved}), the total regret incurred by \partII{} for all boxes is bounded by (omitting constant)
\begin{align*}
    \sum_{t=1}^T \sum_{i=1}^n  \E \sbr{ \frac{Q_{t,i}}{m_{t,i}} \sum_{j =1}^{m_{t,i}}  \abr{ z_{t_i(j),i}^t-\hat{z}_{t_i(j),i}^t }  }\leq \sum_{t=1}^T  \sum_{i=1}^n  \E \sbr{  Q_{t,i}  \rbr{\beta_i^t(x_{t,i},\delta) +\frac{1}{m_{t,i}} \sum_{j=1}^{m_{t,i}}  \beta_i^t(x_{t_i(j),i},\delta) }  },
\end{align*}

The first summation term can be bounded by
\begin{align*}
 \sum_{t=1}^T  \sum_{i=1}^n\E \sbr{ Q_{t,i} \beta_i^t(x_{t,i},\delta) }  \leq \otil \rbr{  \sum_{i=1}^n  \E \sbr{\sum_{j=1}^{m_{T,i}}    \min\{1,\norm{x_{t_i(j),i}}_{V_{t_i(j)-1,i}^{-1}}\} }  } \leq \otil(nd\sqrt{T}),
\end{align*}
where the last inequality applies the Cauchy–Schwarz inequality followed by the elliptical potential lemma \citep[Lemma 11]{abbasi2011improved}.
The second summation term can be bounded similarly. We refer readers to \pref{app:contextual_linear_proof} for the omitted details.

\section{Conclusion and Future Work}

In this paper, we propose new algorithms for the Pandora’s Box problem in both non-contextual and contextual settings. In the non-contextual case, our algorithm achieves a regret of $\otil(\sqrt{nT})$ which matches the known lower bound up to logarithmic factors. For the contextual case, we design a modified algorithm that attains $\otil(nd\sqrt{T})$ regret. We further extend our methods and analysis to the Prophet Inequality problem in both settings, achieving similar improvements and regret bounds.
Our results also elicit compelling open questions:
\begin{enumerate}[leftmargin=*]
    \item \textbf{Minimax regret for Prophet Inequality.} We establish an upper bound of $\otil(\sqrt{nT})$ for Prophet Inequality problem, while \citet{gatmiry2024bandit} proved a lower bound of $\Omega(\sqrt{T})$, leaving a $\sqrt{n}$-gap. In fact, \citet{jin2024sample} show that the optimal sample complexity of Prophet Inequality is $\otil(\frac{1}{\epsilon^2})$, \textit{independent of $n$}, which suggests that a simple explore-then-commit strategy achieves an $n$-independent regret upper bound of $\otil(T^{2/3})$. Whether $\otil(\sqrt{T})$ is achievable remains open.
    \item \textbf{Tighter regret for contextual linear case.} In the non-contextual case, our bound scales as $\sqrt{n}$, while in the contextual linear setting, it grows linearly with $n$. Can this dependence be improved? It also remains unclear whether the linear dependence on the dimension $d$ is improvable.
\end{enumerate}

\acksection
HL is supported by NSF Award IIS-1943607. LJR and JL are supported in part by ONR YIP N000142012571, and NSF awards AF-2312775, CPS-1844729.

\bibliographystyle{abbrvnat}
\bibliography{refs} %

\newpage

\newpage
\clearpage
\appendix
\begingroup
\part{Appendix}
\let\clearpage\relax
\parttoc[n]
\endgroup

\newpage

\section{Problem Setting of Prophet Inequality}
\label{app:prophet_inequality}

\textbf{Online Prophet Inequality.} In this problem, the learner is given $n$ boxes, each with an unknown fixed reward distribution $D_i$ supported on $[0,1]$. The learner plays a $T$-round repeated game where at each round $t \in [T]$, the learner inspects boxes in a fixed order. Upon opening a box $i$, the learner receives a reward $v_{t,i} \in [0,1]$ independently sampled from $D_i$. Then, she needs to decide immediately whether to accept this reward and stop the process at round $t$ or discard this reward and open the next box. The utility for the round is the chosen reward.

\textbf{Contextual Linear Prophet Inequality.} This setting extends the non-contextual one, and the assumptions on reward distributions and boundedness are the same as \pref{sec:preliminary}.

\textbf{Optimal Policy and Regret.} To measure the performance, we compare our algorithm against the per-round optimal policy. Let $D_t=(D_{t,1},...,D_{t,n})$ be the product distribution of boxes. The optimal policy for Prophet Inequality in round $t$ is also a \emph{threshold} policy, in which the threshold for each box $i$, denoted by $\sigma_{t,i}^*$ is derived from a \emph{reverse/backward programming}.
\begin{definition}[\textbf{Optimal Threshold on $D_t$}] \label{def:opt_threshol_Prophet}
Let $\sigma_{t,n}^*=\E_{X \sim D_{t,n}}[X]$. For $i=n-1,\ldots,1$, let $\sigma_{t,i}^*=\E_{X \sim D_{t,i}} [ \max \{X, \sigma_{t,i+1}^*\}  ]$.
\end{definition}

The optimal policy at round $t$ is to run \pref{alg:Prophet_algorithm} with threshold vector $\sigma_{t}^*$.
In fact, the optimal threshold of box $i$ computed from  \pref{def:opt_threshol_Prophet} is equal to the expected reward obtained by running $\sigma_{t,i}^*,\ldots,\sigma_{t,n}^*$ directly from box $i$ \citep{gatmiry2024bandit}.

\begin{algorithm}[t]
\DontPrintSemicolon
\caption{Generic threshold algorithm for Prophet Inequality}
\label{alg:Prophet_algorithm}
\textbf{Input}: threshold $\sigma=(\sigma_1,\ldots,\sigma_n)$.

The environment generates an unknown realization $v=(v_1,\ldots,v_n)$.

\For{$i=1,\ldots,n$}{
Open box $i$ to observe $v_{i}$.\\
If $v_i \geq \sigma_{i}$ or $i=n$, then stop and take reward $v_i$.
}    
\end{algorithm}

More formally, similar to those definitions in Pandora's Box setting, for any product distribution $D=(D_1,...,D_n)$, let $\sigma_{D}=(\sigma_{D,1},\ldots,\sigma_{D,n})$ denote the optimal threshold vector, where each $\sigma_{D,i}$ is the threshold calculated by reverse programming. For any threshold vector $\sigma=(\sigma_1,...,\sigma_n)\in [0,1]^n$ and realization $v=(v_1,...,v_n) \in [0,1]^n$, we use $S(\sigma;v) \in [n]$ corresponds to the box where \pref{alg:Prophet_algorithm} stops.
Further, we define the utility function and expected utility function, respectively:
\begin{equation} \label{eq:def4R_prophet_ineq}
    R(\sigma;v) = v_{S(\sigma;v)},\quad \text{and} \quad R(\sigma;D) = \E_{v \sim D} [R (\sigma;v)].
\end{equation}

With these definitions, the optimal expected utility is $R(\sigma_{D_{t}};D_{t})$ since when $v \sim D_t$, $S(\sigma_{D_t};v) \in [n]$ corresponds to the box chosen by the optimal policy.
Let $\sigma_t \in \fR^n$ denote the threshold vector selected by the algorithm at round $t$. Throughout the paper, all our algorithms in the Prophet Inequality setting follow the general structure outlined in \pref{alg:Prophet_algorithm} using $\sigma_t$ as input at round $t$, and thus the expected utility of our algorithms at round $t$ is $R(\sigma_t;D_{t})$.
The cumulative regret in $T$ rounds $\Reg_T$ is then defined as 
\[
\Reg_T =\E \sbr{ \sum_{t=1}^T  R(\sigma_{D_{t}};D_{t})- R(\sigma_t;D_{t}) },
\]
where $D_t=D$ for the non-contextual case.

\section{Related Work} \label{app:related_work}

The Pandora’s Box problem, introduced by \citet{weitzman1978optimal}, has been extended in various directions, including matroid constraints \citep{singla2018price}, correlated distributions \citep{chawla2020pandora,gergatsouli2023weitzman}, order constraints \citep{boodaghians2020pandora}, and settings without inspection \citep{fu2023pandora,beyhaghi2023pandora}.
The Prophet Inequality problem was first studied by \citet{krengel1977semiamarts,samuel1984comparison}, with subsequent generalizations to $k$-selection \citep{alaei2014bayesian,jiang2022tight}, matroid constraints \citep{kleinberg2012matroid,kleinberg2019matroid,feldman2021online}, and online matching \citep{gravin2019prophet,ezra2022prophet}. These works typically assume known distributions and focus on achieving constant-factor approximation guarantees.

A growing body of work has begun to study online variants of the Prophet Inequality and Pandora’s Box problems \citep{guo2019settling,fu2020learning,guo2021generalizing,gatmiry2024bandit,agarwal2024semi}. In the PAC setting, \citet{fu2020learning,guo2021generalizing} establish a sample complexity upper bound of $\otil(\frac{n}{\epsilon^2})$ for Pandora’s Box, matching the known lower bound up to logarithmic factors, where $\epsilon$ denotes the desired accuracy. \citet{guo2021generalizing} further develop a general framework for learning over product distributions, which applies to both problems and yields a sample complexity of $\otil(\frac{n}{\epsilon^2})$ for Prophet Inequality as well. Later, \citet{jin2024sample} provide a near-optimal sample complexity $\otil(\frac{1}{\epsilon^2})$ for Prophet Inequality, which removes the linear dependence on $n$.
For the regret minimization problem, \citet{gatmiry2024bandit} establish $\tilde{\mathcal{O}}(n^3\sqrt{T})$ regret bound for Prophet Inequality and $\tilde{\mathcal{O}}(n^{4.5}\sqrt{T})$ regret bound for Pandora’s Box under a constrained feedback model, where the learner observes only the final reward without knowing which box produced it. Subsequently, \citet{agarwal2024semi} introduce a general framework for monotone stochastic optimization under semi-bandit feedback, achieving $\tilde{\mathcal{O}}(n\sqrt{T})$ regret for both problems. All these works assume a fixed but unknown product distribution. Moreover, despite this progress, the best known regret bounds for both Prophet Inequality and Pandora’s Box under semi-bandit feedback remain at $\tilde{\mathcal{O}}(n\sqrt{T})$, which falls short of the known lower bounds \citep{gatmiry2024bandit}.

To better capture the dynamics in decision-making scenarios, \citet{gergatsouli2022online} study the Pandora's Box problem in the adversarial setting where the reward of each box is chosen by an adversary and obtain sublinear regret bounds by comparing the algorithm against a weak benchmark. Later, \citet{gatmiry2024bandit} show that in the same setting, when the algorithm competes against the optimal policy, no algorithm can achieve sublinear regret, even with full information feedback. 
To make the problem tractable while still allowing distributions to change over time, \citet{atsidakou2024contextual} explore a contextual Pandora's Box model where in each round, the learner observes a context vector for each box, whose optimal threshold can be approximated by a function of the context and an unknown vector.
They provide a reduction from contextual Pandora's Box to an instance of online regression.
Their algorithm suffers a $\otil(T^{5/6})$ regret bound if the threshold can be linearly approximated. We note that their algorithm for the linear case also applies to our setting (see the discussion below), but our algorithm enjoys $\otil(nd\sqrt{T})$, which is improves on  the aforementioned bound whenever $d=o(T^{1/3})$.

\textbf{Discussion of \citep{atsidakou2024contextual}.} Here, we show how the algorithm proposed by \citet{atsidakou2024contextual} can be applied to our setting. Their work makes the following realizability assumption.
\begin{assumption}[Realizability of \citep{atsidakou2024contextual}]
     There exists $w_1, \ldots, w_n \in \mathbb{R}^d$ and a function $h$ such that every time $t \in [T]$ and every box $i \in [n]$, the optimal threshold for distribution $D_{t,i}$ is equal to $h(w_i, x_{t,i}')$, i.e., $\sigma_{t,i}^*=h(w_i, x_{t,i}')$ where $x'_{t,i}$ is a context vector of box $i$ at round $t$ and
\[
\Ex_{X \sim D_{t,i}} \left[ (X - \sigma_{t,i}^*)_+ \right] = c_i.
\]
\end{assumption}

Below, we show that in our setting, the optimal threshold can be realized by a linear function $h$ (that is, $h(a,b) = a^\top b$).
To see this, first note that the optimal threshold $\sigma_{t,i}^*$ in our setting satisfies $\mathbb{E}_{X \sim D_{t,i}} \left[(X - \sigma_{t,i}^*)_+ \right] = c_i$,
which, based on the definition of $D_{t,i}$, is equivalent to $\mathbb{E}_{X \sim D_i} \left[ (X - (\sigma_{t,i}^* - \mu_{t,i}))_+ \right] = c_i$ where $\mu_{t,i} = \theta_i^\top x_{t,i}$.
Therefore, if we let $\sigma_i^*$ be the unique solution of $\mathbb{E}_{X \sim D_i} \left[ (X - \sigma_{i}^* )_+ \right] = c_i$,
then we have 
\[
\sigma_{t,i}^* = \mu_{t,i} + \sigma_i^* = \langle w_i, x_{t,i}' \rangle = h(w_i, x_{t,i}'),
\]
for %
$w_i = (\theta_{i,1}, \ldots, \theta_{i,d}, \sigma_1, \ldots, \sigma_n) \in \mathbb{R}^{d+n}$, and
$x_{t,i}' = ([x_{t,i}]_1, \ldots, [x_{t,i}]_d, 0, \ldots, 1, \ldots, 0) \in \mathbb{R}^{d+n}$,
where $x_{t,i}'$ takes value one at the $d+i$ index.
This proves that the optimal threshold is realized by a linear function of some context,
making the algorithm of \citep{atsidakou2024contextual} applicable to our setting.

A seemingly related strand is cascading bandits \citep{kveton2015cascading}, where the learner presents an ordered list of items to a user, with items corresponding to boxes in our setting. Then, the user scans the list and clicks the first attractive item if any.
The learner observes binary click signals for items up to that click, which yields order dependent semi bandit feedback.
Despite this shared feedback structure, Pandora’s Box and Prophet Inequality differ in who controls how many items are inspected.
In cascading bandits the number is user driven since observation stops at the first click.
In Pandora’s Box and in Prophet Inequality the number is policy driven since the algorithm adaptively opens boxes and decides when to stop.

\section{Omitted Details from the Non-Contextual Settings}
\label{app:non_context_proof}

\subsection{Technical Lemmas} \label{app:tech_lemmas}

\begin{lemma}[Bernstein inequality] \label{lem:Bernstein}
Given $m \in \naturalnum$ , let \(X_1, X_2, \dots, X_m\) be i.i.d. random variables such that
$\mathbb{E}[X_i] = 0,
\mathbb{E}[X_i^2] = \sigma^2,$ and $
|X_i| \le M$
for some constant \(M > 0\).  Then, for all $t>0$
\[
\P \Bigl(\Bigl|\sum_{i=1}^m X_i\Bigr| > t\Bigr)
\;\le\;
2 \exp\!\biggl(-\frac{t^2}{2 m \sigma^2 + \tfrac{2}{3} M t}\biggr).
\]
\end{lemma}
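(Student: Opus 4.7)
The plan is to follow the standard Chernoff/moment-generating-function approach. First I would handle the one-sided tail $\Pr(\sum_i X_i > t)$ and then derive the two-sided bound by applying the same argument to $-X_i$ and taking a union bound (this accounts for the factor of $2$ in the statement). For the one-sided bound, Markov's inequality applied to $e^{\lambda \sum_i X_i}$ for any $\lambda > 0$ yields
\[
\Pr\!\Bigl(\sum_{i=1}^m X_i > t\Bigr) \le e^{-\lambda t} \prod_{i=1}^m \mathbb{E}[e^{\lambda X_i}],
\]
using independence of the $X_i$'s. So the core task reduces to bounding the per-variable MGF $\mathbb{E}[e^{\lambda X_i}]$.

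The key technical step is a Taylor-expansion bound on $\mathbb{E}[e^{\lambda X}]$ for a mean-zero, bounded random variable with known variance. Writing $e^{\lambda X} = 1 + \lambda X + \sum_{k \ge 2} (\lambda X)^k / k!$, using $\mathbb{E}[X] = 0$ to kill the linear term, and using $|X|^k \le M^{k-2} X^2$ for $k \ge 2$ (since $|X| \le M$), I get $\mathbb{E}[e^{\lambda X}] \le 1 + \sigma^2 \sum_{k \ge 2} \lambda^k M^{k-2}/k!$. Bounding the factorials by $k! \ge 2 \cdot 3^{k-2}$ lets me sum the resulting geometric series, yielding
\[
\mathbb{E}[e^{\lambda X}] \le 1 + \frac{\lambda^2 \sigma^2 / 2}{1 - \lambda M / 3} \le \exp\!\Bigl(\frac{\lambda^2 \sigma^2 / 2}{1 - \lambda M / 3}\Bigr),
\]
valid for $0 < \lambda < 3/M$, where the last step uses $1+u \le e^u$.

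Plugging this into the Chernoff bound gives
\[
\Pr\!\Bigl(\sum_{i=1}^m X_i > t\Bigr) \le \exp\!\Bigl(-\lambda t + \frac{m \lambda^2 \sigma^2 / 2}{1 - \lambda M / 3}\Bigr).
\]
I would then optimize over $\lambda \in (0, 3/M)$ by choosing $\lambda = t / (m \sigma^2 + M t / 3)$, which makes the exponent simplify cleanly to $-t^2 / (2 m \sigma^2 + 2 M t / 3)$. One should verify that this choice of $\lambda$ lies in the admissible range, but this is automatic: $\lambda M / 3 = M t / (3 m \sigma^2 + M t) < 1$. Applying the same argument to $-X_i$ gives the matching lower-tail bound, and a union bound over the two tails produces the factor of $2$.

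There is no real obstacle here since this is a classical calculation; the only place where one has to be careful is the bound $|X|^k \le M^{k-2} X^2$, which controls the higher moments by the variance and the boundedness together and is what gives Bernstein's bound its sharper denominator (involving $M t$ rather than just $M^2$) compared to a naive Hoeffding-style estimate.
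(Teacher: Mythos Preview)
The paper does not actually prove this lemma; it is stated as a classical result (the Bernstein inequality) without proof and is used only as a tool in the subsequent \pref{lem:Bernstein_CDF_convergence}. Your proposal is the standard Chernoff-method proof and is correct in all details: the MGF bound via the moment estimate $|\E[X^k]| \le M^{k-2}\sigma^2$, the factorial bound $k! \ge 2\cdot 3^{k-2}$, the choice $\lambda = t/(m\sigma^2 + Mt/3)$, and the final union bound over the two tails all check out. There is nothing to compare against in the paper, and your argument is exactly the textbook derivation one would expect here.
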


\begin{lemma} \label{lem:Bernstein_CDF_convergence}
Let $D=(D_1,\ldots,D_n)$ be a $[0,1]$-bounded product distribution, and for each $i$, let $\calE_{N_i,i}$ be the empirical distribution constructed by assigning $N_i$ i.i.d.~samples from $D_i$ with equal probability mass $1/N_i$.
With probability at least $1-\delta$, for any $i \in [n]$, any $N_i \in [T]$, and any $z \in [0,1]$,
\begin{equation} \label{eq:Bernstein_CDF_convergence_bound}
    \abr{F_{D_i}(z) - F_{\calE_{N_i,i}}(z)} \leq \sqrt{ \frac{ 
  2F_{D_i}(z)(1-F_{D_i}(z))\log(2T^2n\delta^{-1})      }{N_i}   } +\frac{\log(2T^2n\delta^{-1})}{N_i}.
\end{equation}
\end{lemma}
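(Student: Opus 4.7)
My plan is to prove the claim by first establishing a pointwise Bernstein deviation at each fixed $z \in [0,1]$, and then lifting it to a uniform-in-$z$ statement via a quantile-based discretization of $F_{D_i}$, leveraging the monotonicity of both the true and empirical CDFs.

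\textbf{Pointwise step.} I would fix $i \in [n]$, $N_i \in [T]$, and $z \in [0,1]$, and let $V_1, \ldots, V_{N_i}$ denote the i.i.d.\ samples from $D_i$ used to form $\calE_{N_i,i}$. Define $X_j := \Ind{V_j \leq z} - F_{D_i}(z)$. Each $X_j$ is centered, bounded by $1$ in magnitude, and has variance $\sigma^2 = F_{D_i}(z)(1 - F_{D_i}(z))$. Applying \pref{lem:Bernstein} with $M=1$ and inverting the tail bound at confidence $\delta'$ gives
$$\abr{F_{\calE_{N_i,i}}(z) - F_{D_i}(z)} \leq \sqrt{\frac{2 F_{D_i}(z)(1 - F_{D_i}(z)) \log(2/\delta')}{N_i}} + \frac{2 \log(2/\delta')}{3 N_i}$$
with probability at least $1 - \delta'$, which already has the desired structure since $2/3 < 1$.

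\textbf{Uniform extension in $z$.} For each pair $(i, N_i)$, I would discretize $[0,1]$ via the quantile grid $z_k := \inf\{z \in [0,1]: F_{D_i}(z) \geq k/T\}$ for $k = 0, 1, \ldots, T$, and apply the pointwise bound at these $T+1$ values. For any $z \in [z_{k-1}, z_k]$, monotonicity of $F_{D_i}$ and $F_{\calE_{N_i,i}}$ yields
$$F_{\calE_{N_i,i}}(z_{k-1}) - F_{D_i}(z_k) \leq F_{\calE_{N_i,i}}(z) - F_{D_i}(z) \leq F_{\calE_{N_i,i}}(z_k) - F_{D_i}(z_{k-1}),$$
so the two-sided deviation at an arbitrary $z$ exceeds the deviation at a neighboring grid point by at most $F_{D_i}(z_k) - F_{D_i}(z_{k-1}) \leq 1/T$. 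Since $N_i \leq T$, this $1/T$ slack is dominated by $\log(2T^2 n / \delta)/N_i$ and is absorbed into the linear term. Likewise, because $|F_{D_i}(z) - F_{D_i}(z_k)| \leq 1/T$, the variance factor $F_{D_i}(z_k)(1 - F_{D_i}(z_k))$ differs from $F_{D_i}(z)(1 - F_{D_i}(z))$ by at most $1/T$, so the change under the square root contributes an additional term of order $\sqrt{\log(\cdot)/(T N_i)}$, which is again dominated by $\log(\cdot)/N_i$ whenever $N_i \leq T$.

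\textbf{Union bound.} Finally, I would union-bound over $i \in [n]$, $N_i \in \{1,\ldots,T\}$, and the $T+1$ grid points, setting $\delta' = \delta/(2 n T^2)$ so that $\log(2/\delta') \leq \log(2 T^2 n \delta^{-1})$, matching \pref{eq:Bernstein_CDF_convergence_bound} after absorbing the $2/3$ constant into $1$. The main technical obstacle is preserving the Bernstein variance factor $F_{D_i}(z)(1 - F_{D_i}(z))$ when lifting from a finite grid to an arbitrary $z$: a naive deterministic grid either discards this adaptivity (by bounding the variance uniformly by $1/4$) or inflates the union bound cardinality beyond the $2T^2 n$ budget. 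The quantile grid resolves both issues by containing only $T+1$ points with consecutive grid values of $F_{D_i}$ spaced by exactly $1/T$, so the induced variance perturbation is small enough to be absorbed into the linear $\log/N_i$ term rather than contaminating the square root.
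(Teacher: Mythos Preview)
Your proposal is correct and takes essentially the same approach as the paper: apply Bernstein pointwise on a quantile grid, lift to all $z$ via monotonicity of the true and empirical CDFs (absorbing the resulting slack into the linear $\log/N_i$ term), and union-bound over $i$, $N_i$, and grid points. The only cosmetic difference is the grid resolution—you space quantiles at $1/T$ while the paper spaces at $1/N_i$—but both yield an $\order(nT^2)$ event count and hence the same logarithmic factor; your explicit handling of the variance-factor perturbation when moving off the grid is a point the paper leaves implicit.
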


\begin{proof}
This proof follows a similar idea to \citet[Lemma 5]{guo2019settling}, but we apply an additional union bound to account for all possible $N_i$.
    Fix $i$ and $N_i$. It is suffices to show that for all $z\in [0,1]$ such that $F_{D_i}(z)$ is multiples of $1/N_i$, the following holds.
    \begin{equation} \label{eq:Bernstein_proof}
        \abr{F_{D_i}(z) - F_{\calE_{N_i,i}}(z)} \leq \sqrt{ \frac{ 
        2F_{D_i}(z)(1-F_{D_i}(z))\log(2T^2n\delta^{-1})      }{N_i}   } +\frac{2\log(2T^2n\delta^{-1})}{3N_i},
    \end{equation}

    This is because given the results above, the general bound on the difference of CDF of any value differs at most an extra additive factor $\frac{1}{N_i} \le \frac{\log(2T^2n\delta^{-1})}{3N_i}$.

    Then fix any $z$ such that $F_{D_i}(z)$ is multiples of $1/N_i$. Let $Y_i$ be the indicator function of the $i$-th sample no smaller than $z$ and let $X_i=Y_i-\E[Y_i]$.
    By Bernstein inequality (see \pref{lem:Bernstein}), taking $t$ be $N_i$ times the RHS of \pref{eq:Bernstein_proof},
    \begin{align*}
        \P \rbr{ \abr{\sum_{i=1}^{N_i}X_i} > t}
        = \exp\rbr{-\frac{t^2}{2N_iF_{D_i}(z)(1-F_{D_i}(z))+ \frac23 t}} 
        &\le \frac{\delta}{nT^2}
    \end{align*}

    Since $N_i\le T$, by a union bound over all $N_i$, with probability at least $1-\frac{\delta}{nT}$, \pref{eq:Bernstein_proof} holds for all $z$ such that $F_{D_i}(z)$ is multiples of $1/N_i$.
    Further, by union bounds over all $i\in [n]$ and $N_i\in [T]$, we get that \pref{eq:Bernstein_CDF_convergence_bound} does not hold with probability at most $\delta$.
\end{proof}

As $F_{D_i}(\cdot)$ is unknown, we present the following corollary, which provides a confidence bound depending on known $F_{\calE_i}(\cdot)$.
\begin{corollary} \label{corr:Bernstein_CDF_convergence}
Under the same setting of \pref{lem:Bernstein_CDF_convergence}, suppose that the concentration bounds of \pref{lem:Bernstein_CDF_convergence} all hold, and then for any $i \in [n]$, any $N_i \in [T]$ and any $x \in [0,1]$,
  \begin{equation}
    \abr{F_{D_i}(x) - F_{\calE_{N_i,i}}(x)} \leq \sqrt{ \frac{ 2
  F_{\calE_{N_i,i}}(x)(1-F_{\calE_{N_i,i}}(x))L     }{N_i}   } +\frac{L}{N_i}, \text{ where }L=4\log(2nT^2\delta^{-1}).
\end{equation}
\end{corollary}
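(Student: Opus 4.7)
The plan is to derive the empirical-variance bound directly from the true-variance bound in \pref{lem:Bernstein_CDF_convergence} by a self-bounding argument. Let me abbreviate $\ell = \log(2nT^2\delta^{-1})$ so that $L = 4\ell$, and fix $i$, $N_i$, and $x$. Write $p = F_{D_i}(x)$, $\hat{p} = F_{\calE_{N_i,i}}(x)$, and $\Delta = |p - \hat{p}|$. Under the event of \pref{lem:Bernstein_CDF_convergence}, we already have
\[
\Delta \;\le\; \sqrt{\frac{2p(1-p)\ell}{N_i}} + \frac{\ell}{N_i}.
\]
The task is to show $\Delta \le \sqrt{2\hat{p}(1-\hat{p})L/N_i} + L/N_i$, i.e., to replace the true variance proxy $p(1-p)$ by its empirical counterpart, absorbing the extra slack into the factor of $4$ in $L$.

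The key ingredient is the elementary identity
\[
p(1-p) \;=\; \hat{p}(1-\hat{p}) + (p - \hat{p})(1 - p - \hat{p}),
\]
which, using $|1 - p - \hat{p}| \le 1$, yields $p(1-p) \le \hat{p}(1-\hat{p}) + \Delta$. Substituting this into the displayed bound and applying $\sqrt{a+b} \le \sqrt{a} + \sqrt{b}$ gives
\[
\Delta \;\le\; \sqrt{\frac{2\hat{p}(1-\hat{p})\ell}{N_i}} + \sqrt{\frac{2\Delta\,\ell}{N_i}} + \frac{\ell}{N_i}.
\]
Now I would apply the AM--GM inequality $\sqrt{ab} \le (a+b)/2$ to the middle term with $a = \Delta$ and $b = 2\ell/N_i$, yielding $\sqrt{2\Delta\ell/N_i} \le \Delta/2 + \ell/N_i$. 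Rearranging and multiplying by $2$ gives
\[
\Delta \;\le\; 2\sqrt{\frac{2\hat{p}(1-\hat{p})\ell}{N_i}} + \frac{4\ell}{N_i} \;=\; \sqrt{\frac{2\hat{p}(1-\hat{p})\cdot 4\ell}{N_i}} + \frac{4\ell}{N_i},
\]
which is exactly the claimed bound with $L = 4\ell$.

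There is no real obstacle here — the whole argument is a routine empirical-Bernstein conversion. The only point requiring minor care is the constant bookkeeping: one must verify that the slack generated by absorbing $\sqrt{2\Delta\ell/N_i}$ via AM--GM (which loses a factor of $2$ in the leading square-root term and an extra $\ell/N_i$ in the additive term) is absorbed precisely by inflating the constant from $\ell$ to $L = 4\ell$. No union bound or probabilistic argument is needed beyond what is already assumed in the statement, since the corollary is stated conditionally on the event of \pref{lem:Bernstein_CDF_convergence}.
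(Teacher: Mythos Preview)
Your proof is correct and follows essentially the same approach as the paper's own proof: both start from the true-variance Bernstein bound of \pref{lem:Bernstein_CDF_convergence}, use the inequality $p(1-p)\le \hat p(1-\hat p)+|p-\hat p|$ (the paper phrases this as ``$y-y^2$ is $1$-Lipschitz'' rather than your explicit identity), split the square root, apply AM--GM to absorb the $\sqrt{2\Delta\ell/N_i}$ term into $\Delta/2+\ell/N_i$, and then rearrange to obtain the claimed bound with $L=4\ell$. The constant bookkeeping matches exactly.
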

\begin{proof}
Fix any $i$ and $N_i$. For notional simplicity, we write $\calE_i=\calE_{N_i,i}$.
For any $x \in [0,1]$,
\begin{align*}
      F_{D_i}(x)(1-F_{D_i}(x)) &\leq  F_{\calE_i}(x)(1-F_{\calE_i}(x)) + \abr{F_{D_i}(x)(1-F_{D_i}(x))-  F_{\calE_i}(x)(1-F_{\calE_i}(x)) } \\
      &\leq  F_{\calE_i}(x)(1-F_{\calE_i}(x)) + \abr{F_{D_i}(x)-  F_{\calE_i}(x)} , \numberthis{} \label{eq:replace_FD_FE}
\end{align*}
where the last inequality uses the fact that function $y-y^2$ is $1$-Lipschitz for $y \in [0,1]$.

Let $L'=\log(2nT^2\delta^{-1})$.
From \pref{eq:Bernstein_CDF_convergence_bound}, we can show for any $x \in [0,1]$ 
\begin{align*}
\abr{F_{D_i}(x) - F_{\calE_i}(x)} &\leq \sqrt{ \frac{ 
  2F_{D_i}(x)(1-F_{D_i}(x))L' }{N_i}   } +\frac{L'}{N_i} \\
  &\leq \sqrt{ \frac{ 
  2L' \rbr{F_{\calE_i}(x)(1-F_{\calE_i}(x)) + \abr{F_{D_i}(x)-  F_{\calE_i}(x)}}  }{N_i}   } +\frac{L'}{N_i} \\
  &\leq \sqrt{ \frac{ 
  2L' F_{\calE_i}(x)(1-F_{\calE_i}(x))   }{N_i}   } +\sqrt{ \frac{2L' \abr{F_{D_i}(x)-  F_{\calE_i}(x)}}{N_i}  }+\frac{L'}{N_i} \\
   &\leq \sqrt{ \frac{ 
  2L' F_{\calE_i}(x)(1-F_{\calE_i}(x))   }{N_i}   } +\frac{\abr{F_{D_i}(x)-  F_{\calE_i}(x)}}{2}+\frac{2L'}{N_i},  \numberthis{} \label{eq:change_of_measure}
\end{align*}
where the second inequality uses \pref{eq:replace_FD_FE}, and the last inequality uses $2\sqrt{ab} \leq a+b$ for any $a,b \geq 0$ (here $a=L'/N_i$ and $b=\frac{1}{2}|F_{D_i}(x)-  F_{\calE_i}(x)|$). Rearranging the above inequality gives the claimed bound for fixed $i,N_i$. Since this argument holds for all $i,N_i$, the lemma thus follows.
\end{proof}

\begin{lemma}\label{lem:validation_CDF}
    For all $t,i$, $F_{\optE_{t,i}}(\cdot)$ constructed by \pref{eq:optE_construction_Bernstein} is a valid CDF.
\end{lemma}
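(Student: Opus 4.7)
The plan is to verify the four defining properties of a CDF: values in $[0,1]$, correct limits, monotonicity, and right-continuity. To organize the argument, I would first rewrite the definition as $F_{\optE_{t,i}}(x) = \max\{0, \phi(F_{\calE_{t,i}}(x))\}$ for $x \in [0,1)$, where
\[
\phi(p) := p - \sqrt{\tfrac{2p(1-p)L}{m_{t,i}}} - \tfrac{L}{m_{t,i}},
\]
with $F_{\optE_{t,i}}(x)=0$ for $x<0$ (since $F_{\calE_{t,i}}(x)=0$ gives $\phi(0)=-L/m_{t,i}<0$, so the max is $0$) and $F_{\optE_{t,i}}(x)=1$ for $x \ge 1$.

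Boundedness and the endpoint limits come essentially for free: $\phi(p) \le p \le 1$, so $\max\{0,\phi(p)\} \in [0,1]$; by construction the values at $x<0$ and $x\ge 1$ are $0$ and $1$ respectively. Right-continuity also follows by composition: $F_{\calE_{t,i}}$ is right-continuous as an empirical CDF, $\phi$ is continuous on $[0,1]$, and $y \mapsto \max\{0,y\}$ is continuous, so $F_{\optE_{t,i}}$ is right-continuous on $[0,1)$; right-continuity at $x=1$ is immediate since $F_{\optE_{t,i}}$ is constantly $1$ on $[1,\infty)$.

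The only nontrivial property is monotonicity, and this is the main obstacle. Since $F_{\calE_{t,i}}$ is non-decreasing on $\mathbb{R}$, it suffices to show that $\psi(p) := \max\{0,\phi(p)\}$ is non-decreasing on $[0,1]$, and then verify that the jump at $x=1$ is non-negative (this last step just uses $\psi(p) \le p \le 1 = F_{\optE_{t,i}}(1)$). To study $\psi$, I would compute
\[
\phi'(p) = 1 - \sqrt{\tfrac{2L}{m_{t,i}}} \cdot \tfrac{1-2p}{2\sqrt{p(1-p)}},
\]
observe that $\phi'(p) \ge 1$ for $p \in [1/2,1]$, and solve $\phi'(p)=0$ on $(0,1/2)$ to locate a unique critical point $p_* = \tfrac{1}{2}\bigl(1 - \sqrt{2c/(1+2c)}\bigr)$ with $c=L/m_{t,i}$. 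This shows $\phi$ is strictly decreasing on $[0,p_*]$ and strictly increasing on $[p_*,1]$.

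Combining this shape with $\phi(0)=-L/m_{t,i}<0$, there exists $p^\dagger \ge p_*$ such that $\phi<0$ on $[0,p^\dagger)$ and $\phi\ge 0$ on $[p^\dagger,1]$ (if $\phi(1)\le 0$ then $p^\dagger=1$ and we are done with $\psi\equiv 0$). Consequently $\psi \equiv 0$ on $[0,p^\dagger]$ and $\psi = \phi$ on $[p^\dagger,1]$, where $\phi$ is increasing; hence $\psi$ is non-decreasing on $[0,1]$, and composition with the non-decreasing $F_{\calE_{t,i}}$ yields the claim. The crux of the argument is therefore the one-variable calculus fact that $\phi$ is unimodal with a single interior minimum in $(0,1/2)$; once this is established the clipping at zero automatically repairs the region where $\phi$ would otherwise be decreasing.
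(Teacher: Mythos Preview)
Your proof is correct and follows essentially the same strategy as the paper: range, limits, and right-continuity are handled identically, and monotonicity is reduced to showing that $\psi(p):=\max\{0,\phi(p)\}$ is non-decreasing in $p=F_{\calE_{t,i}}(x)$. The only difference is tactical. The paper splits at $p=k:=L/m_{t,i}$: for $p<k$ one has $\phi(p)<p-k<0$ so $\psi\equiv 0$, and for $p\ge k$ they verify $\phi'(p)\ge 0$ directly by using $\sqrt{k}\le\sqrt{p}$ to reduce the inequality $\sqrt{2p(1-p)}\ge\sqrt{k}(1-2p)$ to the elementary polynomial check $4p^2-2p-1<0$ on $[0,1/2)$. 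Your version instead does a full shape analysis of $\phi$ (unique interior minimum, then clipping at zero repairs the decreasing region). Both arguments work; the paper's shortcut avoids solving the quadratic for the critical point. As a minor slip, your explicit formula for $p_*$ should be $\tfrac{1}{2}\bigl(1-1/\sqrt{1+2c}\bigr)$ rather than $\tfrac{1}{2}\bigl(1-\sqrt{2c/(1+2c)}\bigr)$, but since you only use the qualitative unimodal shape, this does not affect correctness.
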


\begin{proof}
To show $F_{\optE_{t,i}}(\cdot)$ is valid, we show that it is a right-continuous, non-decreasing function in the range of $[0,1]$.
Consider any given $t,i$.
From \pref{eq:optE_construction_Bernstein}, one can easily see $F_{\optE_{t,i}}(x) \in [0,1]$ for all $x \in [0,1]$.
As $F_{\calE_{t,i}}(\cdot)$ is right-continuous and max function preserves the right-continuous property, $F_{\optE_{t,i}}(\cdot)$ is again right-continuous.

Now, we verify the monotonicity of $F_{\optE_{t,i}}(\cdot)$.
Since $F_{\calE_{t,i}}(\cdot)$ is non-decreasing, we only need to show $F_{\optE_{t,i}}(x)$ is non-decreasing as a function of $F_{\calE_{t,i}}(x)$. Let $k=\frac{L}{m_{t,i}}$. For the interval of $x$ such that $F_{\calE_{t,i}}(x)<k$, $F_{\optE_{t,i}}(x)$ is $0$, and thus it is non-decreasing. Then we only need to consider the interval of $x$ such that $F_{\calE_{t,i}}(x)\geq k$. We let $y=F_{\calE_{t,i}}(x)$, and rewrite the formulation for $F_{\optE_{t,i}}(x)$ as $f(y)=y-\sqrt{2ky(1-y)}-k$. In this case, $f'(y)=1-\frac{\sqrt{k}(1-2y)}{\sqrt{2y(1-y)}}$. When $y \geq \frac{1}{2}$, this is obviously non-negative, and thus $f(y)$ is non-decreasing. Then, we consider the case of $y<\frac{1}{2}$ and show that $\sqrt{2y(1-y)}>\sqrt{k}(1-2y)$. As $y\geq k$, it is enough to show $\sqrt{2y(1-y)}>\sqrt{y}(1-2y)$. This is equivalent to show $4y^2-2y-1<0$ when $y< \frac{1}{2}$, which can be easily verified.

As the argument holds for all $t,i$, the lemma thus follows.
\end{proof}

\begin{lemma}[Restatement of \pref{lem:sto_dom_non_context}]
With probability at least $1-\delta$, for all $t \in [T]$, $\optE_t \SD D$.
\end{lemma}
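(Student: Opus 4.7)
The plan is to reduce the statement directly to the uniform Bernstein-type concentration bound in \pref{corr:Bernstein_CDF_convergence} (which is a corollary of \pref{lem:Bernstein_CDF_convergence}). The key observation is that the CDF of $\optE_{t,i}$ defined in \pref{eq:optE_construction_Bernstein} is, modulo truncation at $0$, obtained from $F_{\calE_{t,i}}$ by subtracting exactly the Bernstein confidence width appearing in \pref{corr:Bernstein_CDF_convergence}. So pointwise dominance $F_{\optE_{t,i}}(x) \leq F_{D_i}(x)$ should follow as soon as the concentration event holds.

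First, I would invoke \pref{corr:Bernstein_CDF_convergence} (whose proof already unions over $i \in [n]$, over sample sizes $N_i \in [T]$, and over all points $x \in [0,1]$ at which $F_{D_i}$ is a multiple of $1/N_i$). This yields a single event $\calG$ of probability at least $1-\delta$ on which, for every $i \in [n]$, every $N_i \in [T]$, and every $x \in [0,1]$,
\[
F_{\calE_{N_i,i}}(x) - F_{D_i}(x) \;\leq\; \sqrt{\frac{2\,F_{\calE_{N_i,i}}(x)(1-F_{\calE_{N_i,i}}(x))\,L}{N_i}} + \frac{L}{N_i}.
\]
Since at each round $t$ the empirical distribution $\calE_{t,i}$ is built from $m_{t,i}$ i.i.d.\ samples of $D_i$ and $m_{t,i} \in [T]$, I can specialize the uniform bound by plugging $N_i = m_{t,i}$ (the randomness of $m_{t,i}$ is harmless because the bound is uniform over all admissible sample sizes).

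Rearranging the concentration inequality, on $\calG$ we have, for every $t$, $i$, and $x \in [0,1)$,
\[
F_{D_i}(x) \;\geq\; F_{\calE_{t,i}}(x) - \sqrt{\frac{2\,F_{\calE_{t,i}}(x)(1-F_{\calE_{t,i}}(x))\,L}{m_{t,i}}} - \frac{L}{m_{t,i}}.
\]
Combining this with the trivial bound $F_{D_i}(x) \geq 0$ gives
\[
F_{D_i}(x) \;\geq\; \max\!\left\{0,\; F_{\calE_{t,i}}(x) - \sqrt{\frac{2\,F_{\calE_{t,i}}(x)(1-F_{\calE_{t,i}}(x))\,L}{m_{t,i}}} - \frac{L}{m_{t,i}}\right\} \;=\; F_{\optE_{t,i}}(x),
\]
by the definition in \pref{eq:optE_construction_Bernstein}. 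At $x=1$ the claim is immediate since $F_{\optE_{t,i}}(1)=1=F_{D_i}(1)$.

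Finally, I would conclude by translating pointwise CDF dominance into stochastic dominance: $F_{\optE_{t,i}}(x) \leq F_{D_i}(x)$ for all $x$ is equivalent to $\P_{X \sim \optE_{t,i}}(X \geq a) \geq \P_{Y \sim D_i}(Y \geq a)$ for all $a$, so $\optE_{t,i} \SD D_i$ for every $i \in [n]$; hence $\optE_t \SD D$ on $\calG$, which has probability at least $1-\delta$. I don't expect any serious obstacle here: \pref{lem:validation_CDF} already guarantees that $F_{\optE_{t,i}}$ is a valid CDF, and the only subtle point is ensuring that the Bernstein event is uniform in both $i$ and the random sample count $m_{t,i}$, which is exactly what the union bound in \pref{lem:Bernstein_CDF_convergence} is designed to deliver.
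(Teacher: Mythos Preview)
Your proposal is correct and follows essentially the same route as the paper's proof: invoke \pref{corr:Bernstein_CDF_convergence}, rearrange to get $F_{D_i}(x) \geq F_{\calE_{t,i}}(x) - (\text{Bernstein width})$, and combine with $F_{D_i}(x)\geq 0$ and the definition \pref{eq:optE_construction_Bernstein} to conclude $F_{\optE_{t,i}}(x)\leq F_{D_i}(x)$ for all $x$. Your explicit remark that the union bound over $N_i\in[T]$ covers the random $m_{t,i}$ is a nice addition that the paper leaves implicit.
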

\begin{proof}
For this proof, we show $\optE_{t,i} \SD D_i$ for all $t,i$.
Fix $t,i$.
It suffices to show that for all $x \in [0,1]$, $F_{\optE_{t,i}}(x) \leq F_{D_i}(x)$. For $x=1$, $F_{\optE_{t,i}}(x) = F_{D_i}(x)=1$. For $x \in [0,1)$, we have
\begin{align*}
 F_{\optE_{t,i}}(x) &= \max \cbr{0,F_{\calE_{t,i}}(x) - \sqrt{ \frac{ 
  2F_{\calE_{t,i}}(x)(1-F_{\calE_{t,i}}(x))L     }{m_{t,i}}   } - \frac{L}{m_{t,i}}}\\
   &\leq \max \cbr{0,F_{D_i}(x)} =F_{D_i}(x),
\end{align*}
where the inequality follows from 
\pref{corr:Bernstein_CDF_convergence}.

Repeating the same argument for all $i \in [n]$ completes the proof.
\end{proof}

\subsection{Lipschitz Property, Monotonicity, and Sharp Derivative for Pandora's Box}

Before proving Lipschitz property and monotonicity for $\widetilde{R}_i(\sigma_t,z)$, we first introduce the following supporting results.

\begin{definition}[$(n,D,c)$-Pandora's box instance]
We use $(n,D,c)$ to characterize a Pandora's box instance where product distribution $D=(D_1,\ldots,D_n)$ is over $n$ boxes, each of which is associated with a cost $c_i$.
\end{definition}

For any $(n,D,c)$-Pandora's box instance, let $W_u:[0,1]^n \times [0,1]^n \to 2^{[n]}$ be a general threshold rule with initial value $u$. Specifically, for any initial value $u$, threshold $\sigma$, and realization $x \sim D$, $W_u(\sigma,x)$ is the set of opened boxes following rule $W_u$.
For any $u,\sigma$, if the initial value $u \geq \max_i \sigma_i$, then $W_u(\sigma,\cdot) =\emptyset$, that is, the learner does not open any box and keeps the initial value $u$. Otherwise, the learner keeps the initial value $u$ in hand but continues to open boxes by running \pref{alg:Weitzman} with $V_{\max}=u$ and threshold $\sigma$.
Let
\begin{align*}
& R(\sigma;x;u;W_v) =\max \cbr{u,\max_{i \in W_v(\sigma;x) } x_{i}} - \sum_{i \in W_v(\sigma;x) } c_{i}    \\
& R(\sigma;D;u;W_v) = \Ex_{x \sim D} [R(\sigma;x;u;W_v)].
\end{align*}

Here, $R(\sigma;x;u;W_v)$ is the utility if the learner adopts rule $W_v$ to run threshold $\sigma$ with initial value $u$ over realization $x$. As shown by \citet{weitzman1978optimal}, for any initial value $u$, the expected utility $R(\sigma_D;D;u;W_u)$ enjoys the optimality.

\begin{lemma}[\textbf{Monotonicity and $1$-Lipschitz Property for Pandora's Box}]\label{lem:one_lipschitz_Pandora_partial}
For any initial values $u,v$, if the learner runs general threshold rules $W_u$ and $W_v$ with $u\ge v$, respectively on a $(n,D,c)$-Pandora's Box instance, then we have
\[
0\leq \Ex_{x \sim D} \sbr{R(\sigma_D;x;u;W_u)  } - \Ex_{x \sim D} \sbr{ R(\sigma_D;x;v;W_v) } \leq u-v.
\]

Consequenctly, the following $1$-Lipschitz property holds: for any $u,v$,
\[
\abr{\Ex_{x \sim D} \sbr{R(\sigma_D;x;u;W_u)  } - \Ex_{x \sim D} \sbr{ R(\sigma_D;x;v;W_v) }} \leq |u-v|.
\]
\end{lemma}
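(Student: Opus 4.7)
My plan is to prove both inequalities by a single coupling/insertion trick: introduce an intermediate quantity $R(\sigma_D;D;u;W_v)$ (or $R(\sigma_D;D;v;W_u)$) and reduce each of the two gaps that arise to either (i) Weitzman's optimality guarantee for the matching initial value, or (ii) a pointwise comparison of two $\max$ terms on the same realization. The key observation is that whenever the rule subscript matches the ``in hand'' value, the expected utility is optimal by Weitzman's theorem, and whenever the two quantities use the \emph{same} opened set (i.e., the same rule $W_w$), their difference is governed only by the initial value through $\max\{\cdot, M\}$, where $M = \max_{i \in W_w(\sigma_D,x)} x_i$.

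For the lower bound, I would insert $R(\sigma_D;D;u;W_v)$ and write
\[
R(\sigma_D;D;u;W_u) - R(\sigma_D;D;v;W_v) = \bigl[R(\sigma_D;D;u;W_u) - R(\sigma_D;D;u;W_v)\bigr] + \bigl[R(\sigma_D;D;u;W_v) - R(\sigma_D;D;v;W_v)\bigr].
\]
The first bracket is non-negative because $W_u$ is the optimal rule when the in-hand value is $u$. For the second bracket, note that $W_v(\sigma_D,x)$ is determined by $\sigma_D$, $x$, and the subscript $v$ alone, so both terms open the same set $S_v$; hence on each realization the difference equals $\max\{u,M\} - \max\{v,M\}$ with $M = \max_{i\in S_v} x_i$, which is $\geq 0$ since $u \geq v$.

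For the upper bound, I would instead insert $R(\sigma_D;D;v;W_u)$. By Weitzman's optimality applied at initial value $v$, $R(\sigma_D;D;v;W_v) \geq R(\sigma_D;D;v;W_u)$, so it suffices to bound $R(\sigma_D;D;u;W_u) - R(\sigma_D;D;v;W_u)$. Both terms now use the same rule $W_u$, hence the same opened set $S_u$ on each realization $x$, and the difference is exactly $\max\{u,M\} - \max\{v,M\}$ with $M = \max_{i\in S_u} x_i$. A short case analysis on the relative position of $M$ versus $u$ and $v$ gives $\max\{u,M\} - \max\{v,M\} \leq u - v$ pointwise, and taking expectation yields the bound. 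The 1-Lipschitz corollary then follows by applying the upper bound to whichever of $u,v$ is larger.

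The only mildly subtle step is confirming that ``$R(\sigma;x;w;W_v)$ uses the set $W_v(\sigma,x)$ regardless of $w$,'' which is a direct consequence of the definition (the rule's opened set is a function of its subscript, $\sigma$, and $x$, while the in-hand value $w$ only enters the outer $\max$). Once this is pinned down, the rest is arithmetic on $\max$'s plus Weitzman's optimality, so I do not anticipate a serious obstacle.
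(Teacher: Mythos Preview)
Your proposal is correct and essentially identical to the paper's proof: both insert the same intermediate quantities $R(\sigma_D;D;u;W_v)$ and $R(\sigma_D;D;v;W_u)$, invoke Weitzman's optimality when the rule subscript matches the in-hand value, and use the pointwise inequality $0 \le \max\{u,M\}-\max\{v,M\} \le u-v$ on realizations where the opened set is fixed. The only difference is cosmetic presentation.
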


\begin{proof}
Consider any $u\geq v$.
For any realization $x=(x_1,\ldots,x_n)$ drawn from product distribution $D$, the utility given the initial value $s$ ($=u$ or $v$) and rule $W$ is
\[
R(\sigma_D;x;s;W) =\max \cbr{ s,\max_{i \in W(\sigma_D;x) } x_{i}} - \sum_{i \in W(\sigma_D;x) } c_{i}.
\]

Taking $W = W_v$, and $s=u$ and $v$, respectively, we have
\begin{align*}
   &\Ex_{x \sim D} \sbr{R(\sigma_D;x;u;W_v)  } - \Ex_{x \sim D} \sbr{ R(\sigma_D;x;v;W_v) }\\
    &=  \Ex_{x \sim D} \sbr{ \max \cbr{ u,\max_{i \in W_u(\sigma_D;x) } x_{i}} - \max \cbr{ v,\max_{i \in W_u(\sigma_D;x) } x_{i}} }\\
    &\geq   0.
\end{align*}

Given the initial value $u$ and threshold $\sigma_D$, using rule $W_u$ recovers the Weitzman's optimal algorithm. Thus, the corresponding expected utility should be no smaller than that of running rule $W_v$ with the same threshold $\sigma_D$ and initial value $u$, i.e., $\Ex_{x\sim D} \sbr{R(\sigma_D;x;u;W_v)}\le \Ex_{x\sim D} \sbr{R(\sigma_D;x;u;W_u)}$, which implies $0\leq \Ex_{x \sim D} \sbr{R(\sigma_D;x;u;W_u)  } - \Ex_{x \sim D} \sbr{ R(\sigma_D;x;v;W_v) }$.

Taking $W = W_u$, and $s=u$ and $v$, respectively, we have
\begin{align*}
   &\Ex_{x \sim D} \sbr{R(\sigma_D;x;u;W_u)  } - \Ex_{x \sim D} \sbr{ R(\sigma_D;x;v;W_u) }\\
    &=  \Ex_{x \sim D} \sbr{ \max \cbr{ u,\max_{i \in W_u(\sigma_D;x) } x_{i}} - \max \cbr{ v,\max_{i \in W_u(\sigma_D;x) } x_{i}} }\\
    &\leq   u-v.
\end{align*}
where the inequality holds since for any $x$, $f_x(v)=\max\{v,x\}$ is $1$-Lipschitz with respect to $v$.

As $W_v$ is optimal given initial value $v$ and threshold $\sigma_D$, we have $\Ex_{x\sim D} \sbr{R(\sigma_D;x;v;W_u)}\le \Ex_{x\sim D} \sbr{R(\sigma_D;x;v;W_v)}$, implying $\Ex_{x \sim D} \sbr{R(\sigma_D;x;u;W_u)  } - \Ex_{x \sim D} \sbr{ R(\sigma_D;x;v;W_v) } \leq u-v$. Thus, the lemma follows.
\end{proof}

For any $x \in [0,1]^n$, we define $x_{>i}:=(x_{i+1},\ldots,x_n)$ and $x_{<i}:=(x_{1},\ldots,x_{i-1})$.
Based on the definition of $\calH_{t,i}=(D_{1}, \cdots , D_{i}, \optE_{t,i+1}, \cdots , \optE_{t,n})$, we define
\begin{align*}
\calH_{t,>i} := (\optE_{t,i+1}, \ldots , \optE_{t,n}) \quad \text{and} \quad \calH_{t,<i} := (D_{1}, \ldots , D_{i-1}).    
\end{align*}

For any $z \in [0,1]$, any $x \in [0,1]^n$, and any $\sigma \in [0,1]^n$, we define
\begin{align*}
    \widetilde{R}_{i}(\sigma; x_{<i}, z)&:= \Ex_{x_{>i} \sim \calH_{t,>i}} \sbr{R \rbr{\sigma;(x_1,\ldots,x_{i-1},z,x_{i+1},\ldots,x_n)}  } \numberthis{} \label{eq:def_Ri_large_than_i} \\
    &= \Ex_{x_{>i} \sim (\optE_{t,i+1}, \ldots , \optE_{t,n})} \sbr{R \rbr{\sigma;(x_1,\ldots,x_{i-1},z,x_{i+1},\ldots,x_n)}  },
\end{align*}
where the expectation is taken only for $x_{>i}:=(x_{i+1},\ldots,x_n)$.

\begin{lemma}[Restatement of \pref{lem:main_body_Lipschitz}] 
For all $t\in [T]$ and  $i \in [n]$, the map $\widetilde{R}_i(\sigma_{t};z)$ is $1$-Lipschitz and monotonically-increasing with respect to $z$ in the Pandora's Box problem.
\end{lemma}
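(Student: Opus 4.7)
The plan is to reduce the claim to the already-stated \pref{lem:one_lipschitz_Pandora_partial} by conditioning on the prefix values $x_{<i}$ and viewing the post-box-$i$ execution of \pref{alg:Weitzman} as a sub-instance of Pandora's Box on boxes $\{i+1,\ldots,n\}$ with initial value $u=\max\{\max_{j<i} x_j,\, z\}$. First I would observe that since the algorithm's thresholds satisfy $\sigma_{t,1} \geq \cdots \geq \sigma_{t,n}$, \pref{alg:Weitzman} inspects boxes in the order $1,2,\ldots,n$ and the event $E_{\sigma_t,i}$ that box $i$ is opened reads simply as $\max_{j<i} x_j < \sigma_{t,i}$; in particular it is measurable with respect to $x_{<i}$ alone. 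Hence conditioning on $E_{\sigma_t,i}$ only reshapes the distribution of $x_{<i}$, while $x_{>i}\sim(\optE_{t,i+1},\ldots,\optE_{t,n})$ remains untouched and is independent of $(x_{<i},z)$.

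Next I would rewrite $R(\sigma_t;(x_{<i},z,x_{>i}))$ on the event $E_{\sigma_t,i}$. The opened set decomposes as $\{1,\ldots,i\}\cup W_u(\sigma_{t,>i};x_{>i})$, where $W_u$ is \pref{alg:Weitzman} launched on boxes $\{i+1,\ldots,n\}$ with initial max $u=u(x_{<i},z):=\max\{\max_{j<i} x_j, z\}$. Because $\sigma_{t,>i}$ is exactly the Weitzman threshold vector associated with the product distribution $(\optE_{t,i+1},\ldots,\optE_{t,n})$ (the defining equation $\Ex_{y\sim \optE_{t,j}}[(y-\sigma_{t,j})_+]=c_j$ is the very one used in \pref{alg:non_context}), the continuation rule $W_u$ is the optimal Weitzman rule for this sub-instance. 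Writing $C_i:=\sum_{j=1}^i c_j$, this yields the clean identity
\begin{equation*}
\widetilde{R}_i(\sigma_t;z) \;=\; \E_{x_{<i}\mid E_{\sigma_t,i}}\!\left[\, f_i\bigl(u(x_{<i},z)\bigr) \,\right] \;-\; C_i,\qquad f_i(u) := \Ex_{x_{>i}\sim \calH_{t,>i}}\!\left[R(\sigma_{t,>i};x_{>i};u;W_u)\right].
\end{equation*}

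The final step would invoke \pref{lem:one_lipschitz_Pandora_partial} on the sub-instance $(n-i,\,(\optE_{t,i+1},\ldots,\optE_{t,n}),\,(c_{i+1},\ldots,c_n))$: it tells us that $f_i(\cdot)$ is monotonically non-decreasing and $1$-Lipschitz. Separately, for each fixed $x_{<i}$ the scalar map $z\mapsto u(x_{<i},z)=\max\{\max_{j<i} x_j,\,z\}$ is monotonically non-decreasing and $1$-Lipschitz in $z$ (a direct case analysis on whether $z$ exceeds $\max_{j<i} x_j$). Composing and then taking the conditional expectation over $x_{<i}$ (which preserves both properties pointwise and hence in expectation), and noting that the constant $-C_i$ is irrelevant, gives monotonicity and $1$-Lipschitzness of $\widetilde{R}_i(\sigma_t;\cdot)$, as required.

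The only nontrivial point I expect is the justification that $\sigma_{t,>i}$ is the Weitzman-optimal threshold vector for the residual product distribution $\calH_{t,>i}=(\optE_{t,i+1},\ldots,\optE_{t,n})$, because this is what licenses the identification of the continuation rule $W_u$ with the optimal rule in \pref{lem:one_lipschitz_Pandora_partial}; once this is in place the rest is a two-line Lipschitz-composition argument.
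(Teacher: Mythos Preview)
Your proposal is correct and follows essentially the same approach as the paper's proof: both condition on $x_{<i}$, identify the post-box-$i$ execution as a Pandora's Box sub-instance on $(\optE_{t,i+1},\ldots,\optE_{t,n})$ with initial value $\max\{\max_{j<i}x_j,\,z\}$, observe that $\sigma_{t,>i}$ is the Weitzman-optimal threshold for this sub-instance (since $\sigma_t=\sigma_{\optE_t}$), invoke \pref{lem:one_lipschitz_Pandora_partial}, and then average over $x_{<i}$ conditional on $E_{\sigma_t,i}$. Your version is slightly more explicit in separating out the constant $-C_i$ and naming the composition $f_i\circ u$, but the argument is identical in substance.
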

\begin{proof}
For simplicity, we assume without loss of generality that $\sigma_{t,1} \geq \sigma_{t,2} \geq \cdots  \geq \sigma_{t,n}$.
For any $z \in [0,1]$ and any $x \in [0,1]^n$, if box $i$ is opened at round $t$, then the map $ \widetilde{R}_{i}(\sigma_t; x_{<i}, z)$ is the expected utility of running $\sigma_t$ on $(x_{i+1},\ldots,x_{n}) \sim \calH_{t,>i}$ with initial value $\max\{x_1,\ldots,x_{i-1},z\}$. A key fact here is that $\sigma_t=\sigma_{\optE_t}$ and $\calH_{t,>i}=(\optE_{t,i+1}, \ldots , \optE_{t,n})$, which imply that the algorithm runs the optimal threshold in descending order. Thus \pref{lem:one_lipschitz_Pandora_partial} implies that $\widetilde{R}_{i}(\sigma_t; x_{<i}, z)$ is $1$-Lipschitz with respect to $z$ and for any $u\geq v$, and that $\widetilde{R}_{i}(\sigma_t; x_{<i}, u)-\widetilde{R}_{i}(\sigma_t; x_{<i}, v) \in [0,u-v]$. This, in turn, implies that
\begin{align*}
&\widetilde{R}_i(\sigma_t;u) -\widetilde{R}_i(\sigma_t;v)
=\Ex_{x_{<i} \sim (D_{1},\cdots ,D_{i-1})   }  \sbr{\widetilde{R}_{i}(\sigma_t; x_{<i}, u)-\widetilde{R}_{i}(\sigma_t; x_{<i}, v) \mid E_{\sigma_t,i} } \in [0,u-v].
\end{align*}

The $1$-Lipschitz property of $\widetilde{R}_i(\sigma;z)$ is an immediate corollary, and the proof is thus complete.
\end{proof}

\begin{lemma}[Restatement of \pref{lem:derivative_bound}]
Suppose that $\sigma_{t,1} \geq \sigma_{t,2}\geq \cdots \geq \sigma_{t,n}$. The Pandora's Box problem satisfies the following:
\[
 0\leq \frac{\partial}{\partial z}\widetilde{R}_i(\sigma_{t};z) \leq \frac{ \prod_{j<i} F_{D_j}(z) }{Q_{t,i}},\quad \forall z \in \left[0,\sigma_{t,i+1} \right). 
\]
\end{lemma}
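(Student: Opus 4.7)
The plan is to exploit that the reward $z$ at box $i$ influences the final utility only when $z$ exceeds the running maximum at the moment box $i$ is opened, and then to control the pointwise derivative via the $1$-Lipschitzness of Weitzman's continuation value. Since \pref{alg:Weitzman} opens boxes in descending order of $\sigma_t$, the event $E_{\sigma_t,i}$ that box $i$ is opened depends only on $(x_1,\ldots,x_{i-1})$ and equals $\{m<\sigma_{t,i}\}$ with $m:=\max_{j<i}x_j$. Defining $f(x_{<i},z):=\Ex_{x_{>i}\sim(\optE_{t,i+1},\ldots,\optE_{t,n})}[R(\sigma_t;(x_1,\ldots,x_{i-1},z,x_{i+1},\ldots,x_n))]$, I would first rewrite
\[
\widetilde{R}_i(\sigma_t;z)\;=\;\frac{1}{Q_{t,i}}\,\Ex_{x_{<i}\sim(D_1,\ldots,D_{i-1})}\!\big[\mathbf{1}\{m<\sigma_{t,i}\}\,f(x_{<i},z)\big].
\]
Because each Weitzman threshold $\sigma_{D,j}$ depends only on the marginal $D_j$ and $c_j$, the tail $(\sigma_{t,i+1},\ldots,\sigma_{t,n})$ is Weitzman-optimal for the $(n-i)$-box sub-instance on $(\optE_{t,i+1},\ldots,\optE_{t,n})$, so \pref{lem:one_lipschitz_Pandora_partial} applies to the continuation value $g(u):=\Ex[R((\sigma_{t,i+1},\ldots,\sigma_{t,n});x_{>i};u;W_u)]$: $g$ is monotone non-decreasing and $1$-Lipschitz, with the natural boundary behavior $g(u)=u$ whenever $u\geq\sigma_{t,i+1}$ (since no further boxes are then opened).

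For each fixed $x_{<i}$ satisfying $m<\sigma_{t,i}$ and each $z\in[0,\sigma_{t,i+1})$, the running maximum just after opening box $i$ equals $\max\{m,z\}$, which yields the compact identity $f(x_{<i},z)=g(\max\{m,z\})-\sum_{j\leq i}c_j$. Differentiating gives $\partial_z f=0$ for $z<m$ (the subsequent trajectory is driven by $m$ alone) and $\partial_z f=g'(z)\in[0,1]$ for $z>m$ (the corner $z=m$ contributes a null set). Since $z<\sigma_{t,i+1}\leq\sigma_{t,i}$ forces $\{m\leq z\}\subseteq\{m<\sigma_{t,i}\}$, interchanging $\partial_z$ and the expectation (justified by the uniform bound $|\partial_z f|\leq 1$ via dominated convergence) yields
\[
\frac{\partial}{\partial z}\widetilde{R}_i(\sigma_t;z)\;=\;\frac{1}{Q_{t,i}}\,\Ex_{x_{<i}}\!\big[\mathbf{1}\{m\leq z\}\,g'(z)\big]\;\in\;\left[\,0,\ \frac{\prod_{j<i}F_{D_j}(z)}{Q_{t,i}}\,\right],
\]
where the upper bound uses $g'(z)\leq 1$ together with $\Px(m\leq z)=\prod_{j<i}F_{D_j}(z)$, and the lower bound uses $g'(z)\geq 0$ (equivalently, the monotonicity in \pref{lem:main_body_Lipschitz}).

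The main obstacle I anticipate is cleanly handling the piecewise-in-$z$ structure of $f$: reaching the compact identity $f=g(\max\{m,z\})-\sum_{j\leq i}c_j$ requires unifying the ``algorithm stops at box $i$'' and ``algorithm continues from box $i+1$'' cases via the convention $g(u)=u$ for $u\geq\sigma_{t,i+1}$, and verifying that $(\sigma_{t,i+1},\ldots,\sigma_{t,n})$ is Weitzman-optimal on the tail sub-instance so that \pref{lem:one_lipschitz_Pandora_partial} is applicable to $g$. After this bookkeeping, the remainder collapses to a brief dominated-convergence argument plus the event inclusion $\{m\leq z\}\subseteq\{m<\sigma_{t,i}\}$.
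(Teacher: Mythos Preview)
Your proposal is correct and follows essentially the same route as the paper: both exploit that, conditioned on $E_{\sigma_t,i}$, the continuation value depends on $z$ only through $\max\{m,z\}$ with $m=\max_{j<i}x_j$, then invoke the $1$-Lipschitzness of the Weitzman continuation $g$ together with the event inclusion $\{m\leq z\}\subseteq E_{\sigma_t,i}$ (valid since $z<\sigma_{t,i+1}\leq\sigma_{t,i}$). The only stylistic difference is that the paper argues via the finite-difference inequality $\widetilde{R}_i(\sigma_t;u)-\widetilde{R}_i(\sigma_t;v)\leq(u-v)\prod_{j<i}F_{D_j}(u)/Q_{t,i}$ for $v\leq u<\sigma_{t,i+1}$, which sidesteps the a.e.\ differentiability bookkeeping you handle through dominated convergence.
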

\begin{proof}
According to \pref{lem:main_body_Lipschitz}, the map $\widetilde{R}_i(\sigma_{t};z)$ is monotonically increasing with respect to $z$, which implies that $ 0\leq \frac{\partial}{\partial z}\widetilde{R}_i(\sigma_{t};z)$. 
Then, we only need to prove for any $0\le v\le u< \sigma_{t,i+1}$,
\begin{equation}
\label{eq:derivative_proof}
 \widetilde{R}_i(\sigma_{t};u) - \widetilde{R}_i(\sigma_{t};v) \leq (u-v)\cdot \frac{ \prod_{j<i} F_{D_j}(u) }{Q_{t,i}}.
\end{equation}

To this end, we deduce the following:
\begin{align*}
\widetilde{R}_i(&\sigma_{t};u) - \widetilde{R}_i(\sigma_{t};v)\\
&= \Ex_{x_{<i} \sim (D_{1},\cdots ,D_{i-1})   }  \sbr{\widetilde{R}_{i}(\sigma_t; x_{<i}, u) -\widetilde{R}_{i}(\sigma_t; x_{<i}, v) \mid E_{\sigma_t,i} } \\
&\overset{(a)}{=} \Ex_{x_{<i} \sim (D_{1},\cdots ,D_{i-1})   }  \sbr{\widetilde{R}_{i}(\sigma_t;x_{<i},\max\cbr{x_{<i}, u}) -\widetilde{R}_{i}(\sigma_t;x_{<i},\max\cbr{x_{<i}, v}) \mid E_{\sigma_t,i} } \\
&\leq \Ex_{x_{<i} \sim (D_{1},\cdots ,D_{i-1})   }  \sbr{  \max\cbr{x_{<i}, u} - \max\cbr{x_{<i}, v}  \mid E_{\sigma_t,i}  } \\
&\overset{(b)}{=} \Ex_{x_{<i} \sim (D_{1},\cdots ,D_{i-1})   }  \sbr{   \rbr{\max\cbr{x_{<i}, u} - \max\cbr{x_{<i}, v}} \cdot \Ind{ u>\max_{j<i} x_j }    \mid E_{\sigma_t,i}  } \\
&\leq (u-v)\Ex_{x_{<i} \sim (D_{1},\cdots ,D_{i-1})   }  \sbr{  \Ind{ u>\max_{j<i} x_j }    \mid E_{\sigma_t,i}  } \\
&= (u-v)\cdot \Px_{x_{<i} \sim (D_{1},\cdots ,D_{i-1})  } \rbr{\max_{j<i} x_j<u \mid E_{\sigma_t,i} }\\
&= (u-v)\cdot \frac{\P_{x_{<i} \sim (D_{1},\cdots ,D_{i-1})  } \rbr{\max_{j<i} x_j<u ,E_{\sigma_t,i} }}{\P_{x_{<i} \sim (D_{1},\cdots ,D_{i-1})  } \rbr{E_{\sigma_t,i} }}\\
&\overset{(c)}{=} (u-v)\cdot \frac{\P_{x_{<i} \sim (D_{1},\cdots ,D_{i-1})  } \rbr{\max_{j<i} x_j<u  }}{\P_{x_{<i} \sim (D_{1},\cdots ,D_{i-1})  } \rbr{E_{\sigma_t,i} }}\\
&= (u-v)\frac{ \prod_{j<i} F_{D_j}(u) }{Q_{t,i}},
\end{align*}
where $(a)$ holds since conditioning on $E_{\sigma_t,i}$ (i.e., opening box $i$), the expected reward remains unchanged by replacing $u$ (or $v$) by the maximum so far, the inequality uses the $1$-Lipschitz property of $\widetilde{R}_{i}(\sigma_t;x_{<i},z)$ with respect to variable $z$, (b) follows from the fact that $\max\rbr{x_{<i}, u} - \max\rbr{x_{<i}, v}$ is non-zero only when $\max_{j<i} x_j < u$ as $u \geq v$, and (c) holds due to $\{\max_{j<i} x_j<u\} \subseteq E_{\sigma_{t},i}$ where this containment follows from the fact that $u<\sigma_{t,i+1} \leq \sigma_{t,i}$ and $E_{\sigma_{t},i}=\{\max_{j<i} x_j<\sigma_{t,i}\}$.

Thus, the stated claim holds.
\end{proof}

\subsection{Lipschitz Property, Monotonicity, and Sharp Derivative for Prophet Inequality}

In the following, we prove the similar results for Prophet Inequality.

\begin{lemma} \label{lem:one_lipschitz_Prophet_partial}
Consider any two product distributions \( D_u = (u, D_2, D_3, \ldots, D_n) \) and \( D_v = (v, D_2, D_3, \ldots, D_n) \), where the first box deterministically produces rewards \( u \) and \( v \), respectively, and the remaining boxes share the same distributions. If $u \geq v$, then
\[
0 \leq  R(\sigma_{D_u};D_u)    -   R(\sigma_{D_v};D_v)    \leq u-v.
\]

As a corollary, we have the following $1$-Lipschitz property. For any $u,v$,
\[
\abr{ R(\sigma_{D_u};D_u)    -   R(\sigma_{D_v};D_v)    } \leq |u-v|.
\]
\end{lemma}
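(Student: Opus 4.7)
My plan is to reduce the comparison of $R(\sigma_{D_u}; D_u)$ and $R(\sigma_{D_v}; D_v)$ to a trivial one-dimensional Lipschitz fact, exploiting the fact that only the first coordinate of the product distribution changes. The key structural observation is that the backward recursion in \pref{def:opt_threshol_Prophet} gives $\sigma_{D,i}$ as a function of $D_i, D_{i+1}, \ldots, D_n$ only. Since $D_u$ and $D_v$ share the tail $D_2, \ldots, D_n$, I get $\sigma_{D_u, i} = \sigma_{D_v, i}$ for every $i \geq 2$. I will therefore set $V := \sigma_{D_u, 2} = \sigma_{D_v, 2}$ and recall from the discussion under \pref{def:opt_threshol_Prophet} that $V$ is exactly the expected reward obtained by running the optimal threshold rule starting at box $2$ against $(D_2, \ldots, D_n)$ (which is common to both scenarios).

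Next, I will analyze what happens at box $1$. Under $D_u$, the first observation is the deterministic value $u$, and the stopping threshold is $\sigma_{D_u, 1} = \E_{X \sim \delta_u}[\max\{X, V\}] = \max\{u, V\}$, so the policy stops at box $1$ iff $u \geq V$; otherwise, it moves on to box $2$ and, by the previous paragraph, collects expected reward $V$. An identical statement holds under $D_v$. Putting these together,
\[
R(\sigma_{D_u}; D_u) \;=\; \max\{u, V\}, \qquad R(\sigma_{D_v}; D_v) \;=\; \max\{v, V\}.
\]
The two claimed inequalities are then immediate: since $u \geq v$, the map $z \mapsto \max\{z, V\}$ is monotone and $1$-Lipschitz, so $0 \leq \max\{u,V\} - \max\{v,V\} \leq u-v$. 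If a more granular presentation is preferred, I will split into three cases $v \geq V$ (difference equals $u-v$), $u < V$ (difference equals $0$), and $v < V \leq u$ (difference equals $u - V \in [0, u-v]$), each giving a value in $[0, u-v]$.

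The $1$-Lipschitz corollary follows at once: without loss of generality $u \geq v$, so we can apply the directed bound just proved; if instead $v > u$, we swap the roles and use the same argument, yielding the absolute value inequality. I do not anticipate a genuine obstacle here; the only subtle point is getting the interpretation of $\sigma_{D,1}$ right in the deterministic first-box case, but this falls straight out of \pref{def:opt_threshol_Prophet}. Notably, this proof is simpler than the Pandora's Box analog \pref{lem:one_lipschitz_Pandora_partial} because no coupling through a suboptimal rule is needed: the tail thresholds coincide exactly across $D_u$ and $D_v$, and costs do not enter the Prophet Inequality utility in \pref{eq:def4R_prophet_ineq}.
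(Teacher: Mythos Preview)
Your proof is correct and follows essentially the same approach as the paper's: both identify $V:=\sigma_{D_u,2}=\sigma_{D_v,2}$ from the backward recursion, reduce to $R(\sigma_{D_u};D_u)=\max\{u,V\}$ and $R(\sigma_{D_v};D_v)=\max\{v,V\}$, and conclude via the monotonicity and $1$-Lipschitzness of $z\mapsto\max\{z,V\}$. The only cosmetic difference is that the paper invokes directly the identity $R(\sigma_D;D)=\sigma_{D,1}$ (stated after \pref{def:opt_threshol_Prophet}) to get $\max\{u,V\}$, whereas you trace through the stopping rule at box~$1$ explicitly.
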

\begin{proof}
As $\sigma_{D_u}$ is computed by backward induction from \pref{def:opt_threshol_Prophet}, we have $R(\sigma_{D_u};D_u)=\sigma_{D_u,1}=\max \cbr{u,\sigma_{D_u,2}}$. Moreover, since the distribution of boxes from $2,\ldots,n$ are the same, $\sigma_{D_u,i}=\sigma_{D_v,i}$ for all $i \geq 2$.
For any $u \geq v$,
\begin{align*}
  R(\sigma_{D_u};D_u)    -   R(\sigma_{D_v};D_v)  =\max \cbr{u,\sigma_{D_u,2}}   -  \max \cbr{v,\sigma_{D_v,2}}  \in [0,u-v],
\end{align*}
where the inequality holds since max function is $1$-Lipschitz and $\sigma_{D_u,2}=\sigma_{D_v,2}$. Thus, the lemma follows.
\end{proof}

In the Prophet Inequality problem, the definition of $\widetilde{R}_i(\sigma;z)$ takes the same form as in  \pref{eq:def_widetildeR}, but uses the utility function $R$ defined in \pref{eq:def4R_prophet_ineq}.
Similarly, we follow \pref{eq:def_Ri_large_than_i} to define $\widetilde{R}_i(\sigma;x_{<i},z)$, again using the utility function defined in \pref{eq:def4R_prophet_ineq}.

\begin{lemma} \label{lem:final_Lipschitz_prophet}
For all $t\in [T]$, $i \in [n]$, $\widetilde{R}_i(\sigma;z)$ is $1$-Lipschitz and monotonically-increasing with respect to $z$ in Prophet Inequality problem.
\end{lemma}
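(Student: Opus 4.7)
The plan is to mirror the proof of \pref{lem:main_body_Lipschitz} for the Pandora's Box case, with \pref{lem:one_lipschitz_Prophet_partial} now playing the role of \pref{lem:one_lipschitz_Pandora_partial}. I would first establish the two properties pointwise in a fixed prefix $x_{<i}=(x_1,\ldots,x_{i-1})$ of the values of the boxes before box~$i$, and then lift them to $\widetilde R_i(\sigma_t;z)$ by taking the conditional expectation over $x_{<i}\sim(D_1,\ldots,D_{i-1})$ given $E_{\sigma_t,i}$; both monotonicity and $1$-Lipschitzness in a single scalar argument are preserved under this convex combination.

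For the pointwise step, fix $x_{<i}$. If some earlier box already triggers the stopping rule (i.e., $x_j\ge\sigma_{t,j}$ for some $j<i$), then $R(\sigma_t;(x_{<i},z,x_{>i}))$ does not depend on $z$, so $\widetilde R_i(\sigma_t;x_{<i},z)$ is constant in $z$ and trivially $1$-Lipschitz and weakly monotone. Otherwise the algorithm reaches box~$i$, and I would reinterpret $z\mapsto \widetilde R_i(\sigma_t;x_{<i},z)$ as the expected reward of running \pref{alg:Prophet_algorithm} on a reduced $(n-i+1)$-box Prophet instance whose first box deterministically yields value~$z$ and whose remaining boxes are distributed as $(\optE_{t,i+1},\ldots,\optE_{t,n})$, with thresholds inherited from the tail of $\sigma_t$.

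The key step is to identify this reduced-instance expected reward with $R(\sigma_{D_u};D_u)$ for $D_u=(u,\optE_{t,i+1},\ldots,\optE_{t,n})$ and $u=z$. Because $\sigma_t=\sigma_{\optE_t}$ and the distributional tail $(\optE_{t,i+1},\ldots,\optE_{t,n})$ is shared between the full and reduced instances, the algorithm's thresholds from position~$i$ onward coincide with the optimal backward-induction vector $\sigma_{D_u}$ of the reduced instance (after unwinding one step of \pref{def:opt_threshol_Prophet}). With this identification in hand, \pref{lem:one_lipschitz_Prophet_partial} directly yields $0\le R(\sigma_{D_u};D_u)-R(\sigma_{D_v};D_v)\le u-v$ whenever $u\ge v$, which is exactly the monotone $1$-Lipschitz bound on $\widetilde R_i(\sigma_t;x_{<i},z)$ in~$z$. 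Averaging the pointwise bound over $x_{<i}$ conditional on $E_{\sigma_t,i}$ then delivers the stated conclusion for $\widetilde R_i(\sigma_t;z)$.

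I expect the main obstacle to be the threshold-alignment verification in the third step: the suffix of the backward recursion transfers verbatim because of the shared distributional tail, but the threshold used at the first (deterministic) position of the reduced instance must still be reconciled with $\sigma_{t,i}$ via the identity $\sigma_t=\sigma_{\optE_t}$ and the recursive structure of \pref{def:opt_threshol_Prophet}. Once this alignment is settled, the rest of the argument is essentially a direct translation of the Pandora's Box proof.
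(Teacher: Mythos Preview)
Your proposal is correct and follows essentially the same route as the paper: mirror the Pandora's Box argument (\pref{lem:main_body_Lipschitz}) with \pref{lem:one_lipschitz_Prophet_partial} replacing \pref{lem:one_lipschitz_Pandora_partial}, reduce to a tail instance, and average over $x_{<i}$ conditional on $E_{\sigma_t,i}$.

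One small simplification the paper exploits that you do not: in Prophet Inequality, once you condition on $E_{\sigma_t,i}$, the prefix values $x_1,\ldots,x_{i-1}$ are \emph{discarded} (they were all rejected), so $\widetilde R_i(\sigma_t;x_{<i},z)$ does not depend on $x_{<i}$ at all on that event. This means your case split ``earlier box triggers stopping vs.\ not'' is unnecessary after conditioning, and the reduced instance is literally the $(n-i+1)$-box problem with first value $z$ and tail $(\optE_{t,i+1},\ldots,\optE_{t,n})$, independent of $x_{<i}$. Your threshold-alignment concern is exactly the right thing to check; it is resolved because $\sigma_t=\sigma_{\optE_t}$ and the backward recursion of \pref{def:opt_threshol_Prophet} for positions $i+1,\ldots,n$ depends only on $(\optE_{t,i+1},\ldots,\optE_{t,n})$, so the tail thresholds coincide and \pref{lem:one_lipschitz_Prophet_partial} applies directly.
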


\begin{proof}
    As the order of the boxes is fixed across time, we use \pref{lem:one_lipschitz_Prophet_partial} to repeat a similar argument in \pref{lem:main_body_Lipschitz} to complete the proof. The only difference is that conditioning on event that the learner opens box $i$, $ \widetilde{R}_{i}(\sigma; x_{<i}, z)$ can be interpreted as the expected utility of running $\sigma_t$ on $(x_{i+1},\ldots,x_{n}) \sim \calH_{t,>i}$ with $x_i=z$ since $x_1,\ldots,x_{i-1}$ are discarded.
\end{proof}

\begin{lemma}[\textbf{Sharp Bound of Derivative for Prophet Inequality}] \label{lem:derivative_prophet}
The Prophet Inequality problem satisfies 
\[
 \frac{\partial}{\partial z}\widetilde{R}_i(\sigma_{t};z) =0,\quad \forall z \in \left[0,\sigma_{t,i} \right) . 
\]
\end{lemma}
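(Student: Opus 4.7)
The key observation is structural: in the Prophet Inequality problem, when box $i$ is opened (event $E_{\sigma_t,i}$) and its realized reward is any $z < \sigma_{t,i}$, the threshold rule of \pref{alg:Prophet_algorithm} forces the learner to reject $z$ and continue to box $i+1$, at which point the rewards collected from box $i+1$ onwards no longer depend on $z$. So $\widetilde{R}_i(\sigma_t; z)$ should be constant on $[0, \sigma_{t,i})$, and its derivative there is identically $0$.

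The plan is to make this precise by unfolding the definition of $\widetilde{R}_i(\sigma_t;z)$. First, I would condition on the event $E_{\sigma_t,i}$, which, by the Prophet Inequality stopping rule, is exactly the event $\{v_j < \sigma_{t,j}, \; \forall j<i\}$; under this event no box $j<i$ was accepted and the learner has discarded $v_1,\dots,v_{i-1}$. Next, for $z \in [0,\sigma_{t,i})$, I would substitute $v_i=z$ and observe that since $z<\sigma_{t,i}$, the generic algorithm rejects at box $i$ and proceeds to box $i+1$ with no accumulated reward depending on $z$. Hence, using the definition \pref{eq:def4R_prophet_ineq} of $R$ together with the fact that $x_{>i}=(x_{i+1},\dots,x_n)$ is drawn from $\calH_{t,>i}$ independently of $z$, we get
\begin{align*}
\widetilde{R}_i(\sigma_t;z)
&= \Ex_{x_{<i}\sim (D_1,\dots,D_{i-1})}\Ex_{x_{>i}\sim \calH_{t,>i}}\!\left[ R\rbr{\sigma_t;(x_1,\ldots,x_{i-1},z,x_{i+1},\ldots,x_n)} \mid E_{\sigma_t,i}\right]\\
&= \Ex_{x_{>i}\sim \calH_{t,>i}}\!\left[ x_{S(\sigma_t^{(>i)};x_{>i})} \right],
\end{align*}
where $\sigma_t^{(>i)}=(\sigma_{t,i+1},\dots,\sigma_{t,n})$ and the last expression is manifestly independent of $z$ (and also of $x_{<i}$, which is why the outer expectation drops).

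Since $\widetilde{R}_i(\sigma_t;\cdot)$ is constant on $[0,\sigma_{t,i})$, its derivative vanishes there, giving the claim. The main thing to be careful about is the conditioning step: I need to verify that $E_{\sigma_t,i}$ depends only on $(v_1,\dots,v_{i-1})$ and that the substitution $v_i=z$ does not alter this event nor the distribution of $x_{>i}$, both of which follow immediately from the fixed-order, coordinate-wise independent structure of the Prophet Inequality setup. There is no serious analytic obstacle; the proof is essentially a one-line observation once the definitions are unwound.
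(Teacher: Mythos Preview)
Your proposal is correct and takes essentially the same approach as the paper: both arguments rest on the observation that, conditioned on $E_{\sigma_t,i}$, any value $z<\sigma_{t,i}$ is discarded by the threshold rule, so the continuation from box $i+1$ onward is independent of $z$ and $\widetilde{R}_i(\sigma_t;\cdot)$ is constant on $[0,\sigma_{t,i})$. The paper phrases this as $\widetilde{R}_i(\sigma_t;u)-\widetilde{R}_i(\sigma_t;v)=0$ for $0\le v\le u<\sigma_{t,i}$, while you compute the constant value explicitly, but the content is identical.
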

\begin{proof}
For any $0\le v\le u< \sigma_{t,i}$, we have that
\begin{align*}
& \widetilde{R}_i(\sigma_{t};u) - \widetilde{R}_i(\sigma_{t};v) = \Ex_{x_{<i} \sim (D_{1},\cdots ,D_{i-1})   }  \sbr{\widetilde{R}_{i}(\sigma_t; x_{<i}, u) -\widetilde{R}_{i}(\sigma_t; x_{<i}, v) \mid E_{\sigma_t,i} } =0,
\end{align*}
where the second equality uses the fact that if the learner opens box $i$ and the realized value is smaller than $\sigma_{t,i}$, then they will discard it and keep opening subsequent boxes. Thus, the lemma follows.
\end{proof}

\subsection{Proof of $\otil(\sqrt{nT})$ Regret Bound for Pandora's Box}

The analysis in this section conditions on the event that the concentration bounds of
\pref{lem:Bernstein_CDF_convergence} hold with respect to $D$ and $\calE_{t}$ for all $t$.
Recall from \pref{sec:analysis_non_contextual} that regret is bounded by
\begin{align} \label{eq:reg_T_Qti_termti}
    \Reg_T \leq \sum_{t=1}^T \sum_{i=1}^n \E \Bigg[ Q_{t,i} \cdot \underbrace{\left(\Ex_{z \sim \optE_{t,i}} \sbr{  \widetilde{R}_i(\sigma_{t};z)}   - \Ex_{z \sim D_{i}} \sbr{  \widetilde{R}_i(\sigma_{t};z)}\right)}_{=:\term_{t,i}}\Bigg].
\end{align}

Consider any given round $t$ and suppose, without loss of generality, that 
\[
\sigma_{t,1} \geq  \sigma_{t,2} \geq \cdots \geq \sigma_{t,n}.
\]
Since $\widetilde{R}_i(\sigma_{t};z)$ is 1-Lipschitz with respect to $z$, the map $\widetilde{R}_i(\sigma_{t};z)$ is absolutely continuous on $[0,1]$, which implies that it is differentiable almost everywhere in $[0,1]$. By the fundamental theorem of calculus, for any $x\in[0,1]$, we have that $\widetilde{R}_i(\sigma_{t};x) = \widetilde{R}_i(\sigma_{t};0) + \int_0^{x} \frac{\partial}{\partial z}\widetilde{R}_i(\sigma_{t};z) \:dz$. Therefore, we have that
\begin{align*}
\term_{t,i} & =\Ex_{z \sim \optE_{t,i}} \sbr{  \widetilde{R}_i(\sigma_{t};z)}   - \Ex_{z \sim D_{i}} \sbr{  \widetilde{R}_i(\sigma_{t};z)} \\
& =\Ex_{z \sim \optE_{t,i}} \sbr{ \widetilde{R}_i(\sigma_{t};0) + \int_0^{z} \frac{\partial}{\partial t}\widetilde{R}_i(\sigma_{t};t) \:dt }   - \Ex_{z \sim D_{i}} \sbr{\widetilde{R}_i(\sigma_{t};0) + \int_0^{z} \frac{\partial}{\partial t}\widetilde{R}_i(\sigma_{t};t) \:dt } \\
& =\Ex_{z \sim \optE_{t,i}} \sbr{ \int_0^{1} \frac{\partial}{\partial t}\widetilde{R}_i(\sigma_{t};t)  \Ind{z \geq t} \:dt }   - \Ex_{z \sim D_{i}} \sbr{\int_0^{1} \frac{\partial}{\partial t}\widetilde{R}_i(\sigma_{t};t)  \Ind{z \geq t} \:dt } \\
& = \int_0^{1} \rbr{1-F_{\optE_{t,i}}(z)}\frac{\partial}{\partial z}\widetilde{R}_i(\sigma_{t};z)\:dz - \int_0^{1}\rbr{1-F_{D_{i}}(z)}\frac{\partial}{\partial z}\widetilde{R}_i(\sigma_{t};z)\:dz .
\end{align*}

According to the analysis in \pref{sec:analysis_non_contextual}, we  bound %
\begin{align*}
\term_{t,i}   \leq   \underbrace{\int_{\sigma_{t,i+1}}^{1} \abr{F_{D_{i}}(z)-F_{\optE_{t,i}}(z)} \:dz}_{A_{t,i}} +\underbrace{\int_0^{\sigma_{t,i+1}} \rbr{F_{D_{i}}(z)-F_{\optE_{t,i}}(z)}\frac{\partial}{\partial z}\widetilde{R}_i(\sigma_{t};z)\:dz}_{B_{t,i}}.
\end{align*}
Now, we complete the  bounds on $A_{t,i}$ and $B_{t,i}$.

\textbf{Bounding $A_{t,i}$.} We have
\begin{align*}
A_{t,i} &\leq \int_{\sigma_{t,i+1}}^{1} \abr{F_{D_{i}}(z)-F_{\calE_{t,i}}(z)} \:dz +\int_{\sigma_{t,i+1}}^{1} \abr{F_{\calE_{t,i}}(z)-F_{\optE_{t,i}}(z)} \:dz\\
&  \leq  \order \rbr{\frac{L}{ m_{t,i} }+  \int_{\sigma_{t,i+1}}^{1} \sqrt{\frac{F_{D_{i}}(z)(1-F_{D_{i}}(z))L}{ m_{t,i} }}  \:dz + \int_{\sigma_{t,i+1}}^{1} \sqrt{\frac{F_{\calE_{t,i}}(z)(1-F_{\calE_{t,i}}(z))L}{ m_{t,i} }} \:dz   } \\
&\leq   \order \rbr{\frac{L}{ m_{t,i} }+   \sqrt{\frac{(1-F_{D_{i}}(\sigma_{t,i+1}))L}{ m_{t,i} }}+  \sqrt{\frac{(1-F_{\calE_{t,i}}(\sigma_{t,i+1}))L}{ m_{t,i} }}   }\\
&\leq \order \rbr{ \sqrt{\frac{(1-F_{D_i}( \sigma_{t,i+1} ))L}{ m_{t,i} }}+\frac{L}{ m_{t,i} }  } , \numberthis{} \label{eq:bound_Ati_Pandora}
\end{align*}
where the second inequality bounds $|F_{D_{i}}(z)-F_{\calE_{t,i}}(z)|$ by \pref{lem:Bernstein_CDF_convergence} and bounds $|F_{\calE_{t,i}}(z)-F_{\optE_{t,i}}(z)|$ by the construction \pref{eq:optE_construction_Bernstein}, and the third inequality bounds $1-F_{D_{i}}(z) \leq 1-F_{D_{i}}( \sigma_{t,i+1} )$ (similar for $1-F_{\calE_{t,i}}(z)$) as $z \geq \sigma_{t,i+1}$. The last inequality follows from the following reasoning:
\begin{align*}
&\sqrt{\frac{(1-F_{\calE_{t,i}}(\sigma_{t,i+1}))L}{ m_{t,i} }} \\
&     \leq \sqrt{\frac{(1-F_{D_{i}}(\sigma_{t,i+1}))L+L\abr{F_{\calE_{t,i}}(\sigma_{t,i+1})-F_{D_{i}}(\sigma_{t,i+1})}}{ m_{t,i} }} \\
& \leq \sqrt{\frac{(1-F_{D_{i}}(\sigma_{t,i+1}))L}{ m_{t,i} }} + \sqrt{  \frac{L\abr{F_{\calE_{t,i}}(\sigma_{t,i+1})-F_{D_{i}}(\sigma_{t,i+1})}}{m_{t,i}}  } \\
& \leq \sqrt{\frac{(1-F_{D_{i}}(\sigma_{t,i+1}))L}{ m_{t,i} }} +  \frac{\abr{F_{\calE_{t,i}}(\sigma_{t,i+1})-F_{D_{i}}(\sigma_{t,i+1})}}{2}   +\frac{L}{m_{t,i}} \\
& \leq \order \rbr{\sqrt{\frac{(1-F_{D_{i}}(\sigma_{t,i+1}))L}{ m_{t,i} }}    +\frac{L}{m_{t,i}}}, \numberthis{} \label{eq:CDF_D2calE}
\end{align*}
where the third inequality uses $\sqrt{2ab} \leq a+b$ for any $a,b \geq 0$, and the last inequality bounds $\abr{F_{\calE_{t,i}}(\sigma_{t,i+1})-F_{D_{i}}(\sigma_{t,i+1})}$ by \pref{lem:Bernstein_CDF_convergence}.

\textbf{Bounding $B_{t,i}$.}
By \pref{lem:derivative_bound}, we bound
\begin{align*}
B_{t,i} &\leq \int_0^{\sigma_{t,i+1}} \abr{F_{D_{i}}(z)-F_{\optE_{t,i}}(z)}  \frac{ \sqrt{\prod_{j<i} F_{D_j}(z)} \sqrt{\prod_{j<i} F_{D_j}(z)}  }{Q_{t,i}} \:dz\\
&\leq \int_0^{\sigma_{t,i+1}} \rbr{  \abr{F_{D_{i}}(z)-F_{\calE_{t,i}}(z)}+\abr{F_{\calE_{t,i}}(z)-F_{\optE_{t,i}}(z)}  }  \sqrt{\frac{ \prod_{j<i} F_{D_j}(z) }{Q_{t,i}}} \:dz  \numberthis{} \label{eq:Bound_Bi_before_merge} ,
\end{align*}
where the second inequality bounds $\prod_{j<i} F_{D_j}(z) \leq \prod_{j<i} F_{D_j}(\sigma_{t,i+1}) \leq \prod_{j<i} F_{D_j}(\sigma_{t,i})=Q_{t,i}$ for all $z <\sigma_{t,i+1}$.

On the one hand, we use \pref{lem:Bernstein_CDF_convergence} to bound for any $z$
\[
\abr{F_{D_{i}}(z)-F_{\calE_{t,i}}(z)} \leq \order \rbr{\sqrt{\frac{F_{D_i}(z)(1-F_{D_i}(z))L}{ m_{t,i} }}+\frac{L}{m_{t,i}}  } \leq \order \rbr{ \sqrt{\frac{(1-F_{D_i}(z))L}{ m_{t,i} }}+\frac{L}{m_{t,i}}  }. 
\]
On the other hand, we use the construction of optimistic distribution $\optE_{t,i}$ in \pref{eq:optE_construction_Bernstein} to show for any $z$
\begin{align*}
\abr{F_{\calE_{t,i}}(z)-F_{\optE_{t,i}}(z)}
\leq \order \rbr{  \sqrt{\frac{(1-F_{\calE_{t,i}}(z))L}{ m_{t,i} }}+\frac{L}{m_{t,i}}  }
\leq \order \rbr{ \sqrt{\frac{(1-F_{D_i}(z))L}{ m_{t,i} }}+\frac{L}{m_{t,i}}  },
\end{align*}
where the last inequality repeats the same argument in \pref{eq:CDF_D2calE}.

Plugging the two bounds above into \pref{eq:Bound_Bi_before_merge}, we have that
\begin{align*}
B_{t,i} & \leq  \order  \rbr{    \int_0^{\sigma_{t,i+1}} \rbr{  \sqrt{\frac{(1-F_{D_i}(z))L}{ m_{t,i} }}+\frac{L}{m_{t,i}}   }  \sqrt{\frac{ \prod_{j<i} F_{D_j}(z) }{Q_{t,i}}} \:dz    } \\
& =  \order  \rbr{    \int_0^{\sigma_{t,i+1}} \rbr{ \sqrt{\frac{\prod_{j<i} F_{D_j}(z)(1-F_{D_i}(z))L}{ Q_{t,i} m_{t,i} }}     +\sqrt{\frac{ \prod_{j<i} F_{D_j}(z) }{Q_{t,i}}} \frac{L}{m_{t,i}}   }   \:dz    } \\
 & \leq  \order  \rbr{ \frac{L}{m_{t,i}}+   \int_0^{\sigma_{t,i+1}} \rbr{ \sqrt{\frac{\prod_{j<i} F_{D_j}(z)(1-F_{D_i}(z))L}{ Q_{t,i} m_{t,i} }}   }  \:dz    } \\
  & \leq  \order  \rbr{ \frac{L}{m_{t,i}}+   \int_0^{1} \rbr{ \sqrt{\frac{\prod_{j<i} F_{D_j}(z)(1-F_{D_i}(z))L}{ Q_{t,i} m_{t,i} }}   }  \:dz    },
\end{align*}
where the second inequality follows from
\[\sqrt{\frac{ \prod_{j<i} F_{D_j}(z) }{Q_{t,i}}} \leq \sqrt{\frac{ \prod_{j<i} F_{D_j}(\sigma_{t,i+1}) }{Q_{t,i}}} \leq \sqrt{\frac{ \prod_{j<i} F_{D_j}(\sigma_{t,i}) }{Q_{t,i}}} = 1\] based on the fact that $z \leq \sigma_{t,i+1}$ and $Q_{t,i}=\prod_{j<i} F_{D_j}(\sigma_{t,i})$.

\textbf{Regret for round $t$.} From the bounds of $A_{t,i}$ and $B_{t,i}$, the regret at round $t$ is bounded as
\begin{align*}
\Reg(t) &\leq  \sum_{i \in [n]} \term_{t,i} \leq \order \left(   
\E \left[ \sum_{i \in [n]} Q_{t,i} \sqrt{   \frac{ 
 (1-F_{D_i}(\sigma_{t,i+1})) L}{ m_{t,i}   }    } \right. \right. \\ 
 &\quad +
 \left. \left. \sum_{i \in [n]}  \int_{0}^{1} \sqrt{   \frac{\prod_{j<i} F_{D_j}(z)(1-F_{D_i}(z))L Q_{t,i} }{ m_{t,i}   }    }   \:dz
 +\frac{LQ_{t,i}}{m_{t,i}}    \right]
\right).
\end{align*}
The first term is bounded as follows:
\begin{align*}
&\E \sbr{ \sum_{i \in [n]} Q_{t,i} \sqrt{   \frac{ 
 (1-F_{D_i}(\sigma_{t,i+1})) L}{ m_{t,i}   }    } }\\
&\leq \E \sbr{ \sqrt{ \sum_{i \in [n]} Q_{t,i} (1-F_{D_i}(\sigma_{t,i+1}))} \sqrt{ \sum_{i \in [n]}  \frac{ 
  L Q_{t,i}}{ m_{t,i}   }    } } \leq \E \sbr{  \sqrt{ \sum_{i \in [n]}  \frac{ 
  L Q_{t,i}}{ m_{t,i}   }    } }.=,
\end{align*}
where the first inequality uses the Cauchy–Schwarz inequality, and the second inequality follows from the the fact that
\begin{align*}
\sqrt{ \sum_{i \in [n]} Q_{t,i} (1-F_{D_i}(\sigma_{t,i+1}))} &= \sqrt{ \sum_{i \in [n]} \prod_{j<i} F_{D_j}(\sigma_{t,i})(1-F_{D_i}(\sigma_{t,i+1}))} \\
&\leq \sqrt{ \sum_{i \in [n]} \prod_{j<i} F_{D_j}(\sigma_{t,j+1})(1-F_{D_i}(\sigma_{t,i+1}))} \\
&= \sqrt{ \sum_{i \in [n]} \rbr{\prod_{j<i} F_{D_j}(\sigma_{t,j+1})-\prod_{j \leq i} F_{D_j}(\sigma_{t,j+1})} }\\
&\leq 1.
\end{align*}
Here, the first inequality holds since the descending sorted assumption on $\sigma_t$ gives $\sigma_{t,j+1} \geq \sigma_{t,i}$ for all $j<i$, and the last inequality follows from the fact that the summation forms a telescoping sum.

To bound the second term, we show that
\begin{align*}
& \sum_{i \in [n]}  \int_{0}^{1} \sqrt{   \frac{\prod_{j<i} F_{D_j}(z)(1-F_{D_i}(z))LQ_{t,i} }{ m_{t,i}   }    }   \:dz \\
&=\int_{0}^{1} \sum_{i \in [n]}  \sqrt{ \prod_{j<i} F_{D_{j}}(z)  (1-F_{D_{i}}(z))    }  \sqrt{ \frac{ L Q_{t,i}  }{m_{t,i}}} \:dz \\
&\leq \int_{0}^{1}   \sqrt{\sum_{i \in [n]} \prod_{j<i} F_{D_{j}}(z)  (1-F_{D_{i}}(z))    }  \sqrt{ \sum_{i \in [n]} \frac{L Q_{t,i}  }{m_{t,i}}} \:dz \\
&\leq \sqrt{ \sum_{i \in [n]} \frac{ L Q_{t,i}   }{m_{t,i}}} ,
\end{align*}
where the first inequality uses the Cauchy–Schwarz inequality, and the last inequality follows from the fact that $\sum_{i \in [n]} \prod_{j<i} F_{D_{j}}(z)  (1-F_{D_{i}}(z))$ forms a telescoping sum.

Combining all bounds above, we have
\[
\Reg(t) \leq  \order \rbr{ \E \sbr{   \sqrt{ \sum_{i \in [n]} \frac{ L Q_{t,i}   }{m_{t,i}}}   +\frac{LQ_{t,i}}{m_{t,i}} }   }.
\]

\textbf{Summing $\Reg(t)$ over all $t$.}
The Cauchy–Schwarz inequality gives
\begin{align*}
\Reg_T &\leq \order \rbr{ 
 \E \sbr{  \sum_{t=1}^T    \rbr{ \sqrt{ \sum_{i \in [n]} \frac{ L Q_{t,i}   }{m_{t,i}}}   +\frac{LQ_{t,i}}{m_{t,i}}}    }    } \\
 &\leq \order \rbr{ 
 \E \sbr{   \sqrt{T \sum_{i \in [n]}  \sum_{t=1}^T \frac{ L Q_{t,i}   }{m_{t,i}}}   + \sum_{t=1}^T \frac{LQ_{t,i}}{m_{t,i}}    }    } \\
 &\leq \order \rbr{ 
  \sqrt{T \sum_{i \in [n]}   \E \sbr{ \sum_{t=1}^T \frac{ L Q_{t,i}   }{m_{t,i}}}   }+  \E \sbr{\sum_{t=1}^T \frac{LQ_{t,i}}{m_{t,i}}    }    }, \numberthis{} \label{eq:reg_non_context_last_step}
\end{align*}
where the last inequality uses Jensen's inequality.

Finally, it remains to bound $\E \big[\sum_{t=1}^T LQ_{t,i} / m_{t,i}   \big] $.
Let $I_{t,i}=\Ind{E_{\sigma_t,i}}$.
\begin{align} \label{eq:sum_Qti_over_mti}
  \E \sbr{  \sum_{t=1}^T \sum_{i \in [n]}  \frac{L Q_{t,i}}{m_{t,i}}  }  &=\E \sbr{  \sum_{t=1}^T \sum_{i \in [n]}  \frac{L \E_t [I_{t,i}]  }{m_{t,i}}  } =\E \sbr{  \sum_{t=1}^T \sum_{i \in [n]}  \frac{L I_{t,i}  }{m_{t,i}}  } \leq \order \rbr{nL\log(T)}, 
\end{align}
where the second equality holds due to $m_{t,i}$ is deterministic given history before round $t$.

Combining the bound $\E \big[\sum_{t=1}^T LQ_{t,i}/  m_{t,i}   \big] =\order (nL\log T)$ with \pref{eq:reg_non_context_last_step} completes the proof of \pref{thm:pandora_main_thm_non_context}.

\subsection{$\otil(\sqrt{nT})$ Regret Bound for Prophet Inequality}

\setcounter{AlgoLine}{0}
\begin{algorithm}[t]
\DontPrintSemicolon
\caption{Proposed algorithm for Prophet Inequality}
\label{alg:non_context_prophet}
\textbf{Input}: confidence $\delta \in (0,1)$, horizon $T$.

\textbf{Initialize}: open all boxes once and observe rewards. Set $m_{1,i}=1$ for all $i \in [n]$.

\For{$t=1,2,\ldots,T$}{

For each $i \in [n]$, construct an optimistic distribution $\optE_{t,i}$ according to \pref{eq:optE_construction_Bernstein}.

Use \pref{def:opt_threshol_Prophet} with distribution $\optE_t$ to compute $\sigma_{t}=(\sigma_{t,1},\ldots,\sigma_{t,n})$.

Run \pref{alg:Prophet_algorithm} with $\sigma_{t}$ to open a set of boxes, denoted by $\calB_t$ and observe rewards $\{v_{t,i}\}_{ i\in \calB_t}$.

Update counters $m_{t+1,i}=m_{t,i}+1$, $\forall i \in \calB_t$ and $m_{t+1,i}=m_{t,i}$, $\forall i \notin \calB_t$.

}    
\end{algorithm}

The proposed algorithm for the Prophet Inequality problem is shown in \pref{alg:non_context_prophet}.

\begin{theorem}\label{thm:prophet_main_thm_non_context}
For the Prophet Inequality problem, choosing $\delta=T^{-1}$ for \pref{alg:non_context_prophet} ensures 
\[
\Reg_T=\order \rbr{\sqrt{nT\log(T) \log(nT)} +n\log(T) \log(nT)}.
\]
\end{theorem}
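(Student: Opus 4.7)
\textbf{Proof Proposal for \pref{thm:prophet_main_thm_non_context}.}
The plan is to mirror the analysis of \pref{thm:pandora_main_thm_non_context} but exploit the fact that for the Prophet Inequality the derivative $\partial_z \widetilde{R}_i(\sigma_t; z)$ is \emph{exactly zero} on $[0,\sigma_{t,i})$ (by \pref{lem:derivative_prophet}), rather than merely bounded. This eliminates the analogue of the ``$B_{t,i}$'' term and makes the proof strictly simpler than the Pandora case. First I would verify that \pref{lem:sto_dom_non_context} still holds here (the construction of $\optE_{t,i}$ via \pref{eq:optE_construction_Bernstein} is identical), so that with probability $1-\delta$ we have $\optE_t \SD D$ for all $t$. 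Next I would invoke the same monotonicity property for Prophet Inequality (established in \citep{guo2021generalizing}): if $\optE \SD D$ then $R(\sigma_{\optE};\optE)\geq R(\sigma_D;D)$, reducing the per-round regret to $\E[R(\sigma_t;\optE_t) - R(\sigma_t;D)]$.

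Then I would apply the coupling decomposition of \pref{eq:coupling_argument} (which is formal and does not depend on the Pandora-specific utility), together with \pref{eq:before_decomposition}, to write
\begin{align*}
\E[R(\sigma_t;\optE_t) - R(\sigma_t;D)] = \sum_{i=1}^n \E[Q_{t,i} \cdot \term_{t,i}],
\end{align*}
where now $Q_{t,i} = \prod_{j<i} F_{D_j}(\sigma_{t,j})$ is the probability that box $i$ is reached (boxes are opened in a fixed order). Using the fundamental theorem of calculus exactly as in \pref{sec:analysis_non_contextual}, combined with \pref{lem:final_Lipschitz_prophet} (1-Lipschitzness) and crucially \pref{lem:derivative_prophet} (the derivative vanishes below $\sigma_{t,i}$), I obtain
\begin{align*}
\term_{t,i} = \int_{\sigma_{t,i}}^{1} (F_{D_i}(z) - F_{\optE_{t,i}}(z)) \, \partial_z \widetilde{R}_i(\sigma_t;z)\, dz \leq \int_{\sigma_{t,i}}^{1} |F_{D_i}(z) - F_{\optE_{t,i}}(z)|\, dz.
\end{align*}
This is the analogue of $A_{t,i}$ in the Pandora proof, with the lower limit $\sigma_{t,i}$ in place of $\sigma_{t,i+1}$. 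Bounding the CDF difference via the triangle inequality $|F_{D_i}-F_{\optE_{t,i}}| \leq |F_{D_i}-F_{\calE_{t,i}}| + |F_{\calE_{t,i}}-F_{\optE_{t,i}}|$, invoking \pref{lem:Bernstein_CDF_convergence} on the first piece and the construction \pref{eq:optE_construction_Bernstein} on the second, then using $1-F_{D_i}(z)\leq 1-F_{D_i}(\sigma_{t,i})$ throughout the integration region and the device from \pref{eq:CDF_D2calE} to swap $F_{\calE_{t,i}}(\sigma_{t,i})$ for $F_{D_i}(\sigma_{t,i})$, gives
\begin{align*}
\term_{t,i} \leq O\!\left(\sqrt{\tfrac{(1-F_{D_i}(\sigma_{t,i}))L}{m_{t,i}}} + \tfrac{L}{m_{t,i}}\right).
\end{align*}

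Finally I would assemble everything. By Cauchy--Schwarz and the Prophet-specific telescoping identity
\begin{align*}
\sum_{i=1}^n Q_{t,i}(1-F_{D_i}(\sigma_{t,i})) = \sum_{i=1}^n \!\left(\prod_{j<i} F_{D_j}(\sigma_{t,j}) - \prod_{j\leq i} F_{D_j}(\sigma_{t,j})\right)\! \leq 1,
\end{align*}
the per-round regret is bounded by $O(\sqrt{\sum_i L Q_{t,i}/m_{t,i}} + \sum_i L Q_{t,i}/m_{t,i})$. Summing over $t$, applying Cauchy--Schwarz one more time, and using the standard counting bound $\E[\sum_{t,i} Q_{t,i}/m_{t,i}] = O(n\log T)$ from \pref{eq:sum_Qti_over_mti}, I arrive at $\Reg_T = O(\sqrt{nLT\log T} + nL\log T)$ which, with $L = 4\log(2nT^2/\delta)$ and $\delta = 1/T$, matches the stated bound. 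I do not anticipate a serious obstacle: the main technical work—the Lipschitz property, the Bernstein-based CDF concentration, and the telescoping identity—has already been established, and the absence of a nontrivial $B_{t,i}$ term (due to \pref{lem:derivative_prophet}) in fact makes this case cleaner than Pandora's.
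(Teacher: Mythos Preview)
Your proposal is correct and follows essentially the same approach as the paper: reuse the Pandora analysis but split the integral at $\sigma_{t,i}$, invoke \pref{lem:derivative_prophet} to kill the analogue of $B_{t,i}$, bound the remaining $A_{t,i}$ via the Bernstein CDF bound exactly as in \pref{eq:bound_Ati_Pandora}, and finish with the telescoping identity $\sum_i Q_{t,i}(1-F_{D_i}(\sigma_{t,i}))\leq 1$ (using $Q_{t,i}=\prod_{j<i}F_{D_j}(\sigma_{t,j})$) together with \pref{eq:reg_non_context_last_step} and \pref{eq:sum_Qti_over_mti}. This matches the paper's proof essentially line by line.
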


\begin{proof}
The analysis conditions on the event that the concentration bounds of
\pref{lem:Bernstein_CDF_convergence} hold with respect to $D$ and $\calE_{t}$ for all $t$.
As shown by \citet{guo2021generalizing}, the Prophet Inequality satisfies the same monotonicity as Pandora's Box. Thus, the analysis follows the same approach as used in the Pandora's Box problem to bound $\Reg_T \leq \sum_{t=1}^T \sum_{i=1}^n \E \sbr{ Q_{t,i} \cdot \term_{t,i}  }$ where $\term_{t,i}$ has the same definition as in \pref{eq:before_decomposition} with a new utility function $R$ defined in \pref{eq:def4R_prophet_ineq}.

As shown in \pref{lem:final_Lipschitz_prophet}, the Prophet Inequality problem has both monotonicity and smoothness properties. Therefore, 
repeating the same analysis in \pref{sec:analysis_non_contextual} (splitting the integral on $\sigma_{t,i}$ instead of $\sigma_{t,i+1}$) gives
\begin{align*}
\term_{t,i} &  \leq   \int_{\sigma_{t,i}}^{1} \abr{F_{D_{i}}(z)-F_{\optE_{t,i}}(z)} \:dz+\int_0^{\sigma_{t,i}} \rbr{F_{D_{i}}(z)-F_{\optE_{t,i}}(z)}\frac{\partial}{\partial z}\widetilde{R}_i(\sigma_{t};z)\:dz \\
&  =   \underbrace{\int_{\sigma_{t,i}}^{1} \abr{F_{D_{i}}(z)-F_{\optE_{t,i}}(z)} \:dz}_{A_{t,i}},
\end{align*}
where the equality applies \pref{lem:derivative_prophet}.
We use the same argument as in \pref{eq:bound_Ati_Pandora} to bound $A_{t,i}$: indeed,
\begin{align*}
A_{t,i} \leq \order \rbr{ \sqrt{\frac{(1-F_{D_i}( \sigma_{t,i} ))L}{ m_{t,i} }}+\frac{L}{ m_{t,i} }  } .
\end{align*}
Thus, the regret at round $t$ is bounded by
\[
\Reg(t) \leq  \order \rbr{ \sum_{i \in [n]} Q_{t,i}\sqrt{\frac{(1-F_{D_i}( \sigma_{t,i} ))L}{ m_{t,i} }}+\sum_{i \in [n]} \frac{Q_{t,i}  L}{m_{t,i} }  } .
\]
In particular, we bound
\begin{align*}
&\sum_{i \in [n]} Q_{t,i}\sqrt{\frac{(1-F_{D_i}( \sigma_{t,i} ))L}{ m_{t,i} }}\\
&\leq  \sqrt{ \sum_{i \in [n]} Q_{t,i}(1-F_{D_i}( \sigma_{t,i} )) }\sqrt{\sum_{i \in [n]}\frac{Q_{t,i}L}{ m_{t,i} }} \\
&=  \sqrt{ \sum_{i \in [n]} \prod_{j<i}F_{D_j}(\sigma_{t,j}) (1-F_{D_i}( \sigma_{t,i} )) }\sqrt{\sum_{i \in [n]}\frac{Q_{t,i}L}{ m_{t,i} }} \leq \sqrt{\sum_{i \in [n]}\frac{Q_{t,i}L}{ m_{t,i} }} ,
\end{align*}
where the equality holds since $Q_{t,i}=\prod_{j<i}F_{D_j}(\sigma_{t,j}) $ in the Prophet Inequality setting.

By repeating the same argument as in \pref{eq:reg_non_context_last_step} and \pref{eq:sum_Qti_over_mti}, the proof of \pref{thm:prophet_main_thm_non_context} is complete.
\end{proof}

\subsection{High-Probability Regret Bounds} \label{app:high_prob_bounds}

The regret metric now is
\begin{equation}
  \Reg'_T :=  \sum_{t=1}^T (R(\sigma_{D};D)-R(\sigma_{t};D)).
\end{equation}

Following the same argument in \pref{sec:analysis_non_contextual}, we arrive at
\pref{eq:before_decomposition} without expectation. As $\sigma_t$ and $H_{t,i}$ are deterministic given history, we have 
\[
R(\sigma_t;H_{t,i-1})-R(\sigma_t;H_{t,i}) = \mathbb{E}_t [R(\sigma_t;H_{t,i-1})-R(\sigma_t;H_{t,i})],
\]
where $\mathbb{E}_t[\cdot]$ is the conditional expectation given history before round $t$. Then, we can follow the same analysis to arrive at $\sqrt{\sum_{i,t} \frac{Q_{t,i}}{m_{t,i}} }$. To handle this, let $M_{t,i}=\frac{Q_{t,i}-I_{t,i}}{m_{t,i}}$, and $\{M_{t,i}\}_{t}$ is martingale difference sequence. By Freedman's inequality, with probability at least $1-\delta$, we have $\sum_{t} M_{t,i} \leq \sqrt{2V \log(1/\delta)} + \log(1/\delta)$ where $V= \sum_{t=1}^T \mathbb{E}_t [M_{t,i}^2 ]$. Notice that $V =\frac{ Q_{t,i} -Q_{t,i}^2 }{m_{t,i}} \leq \frac{ Q_{t,i} }{m_{t,i}}$. Thus, we have $\sum_{t} M_{t,i} \leq \sqrt{ 2\log(1/\delta)\sum_{t}Q_{t,i}/m_{t,i} }+\log(1/\delta) \leq \frac{1}{2}\sum_{t}Q_{t,i}/m_{t,i}+ 2\log(1/\delta) $ where the last step uses the AM-GM inequality. Rearranging it gives $\sum_{t=1}^T \frac{Q_{t,i}}{m_{t,i}} \leq O (\log(1/\delta)+\sum_{t=1}^T \frac{I_{t,i}}{m_{t,i}} )$. Therefore, by choosing $\delta$ properly and applying a union bound, with high probability, the regret is bounded by $\widetilde{O}(\sqrt{nT})$.

\subsection{Extension to Subgaussian Rewards} \label{app:subgaussian_discussion}
For each box, if the reward distribution is sub-Gaussian, then our algorithms still work with minor modifications. Specifically, if the reward distribution of box $i$ is $K_i$-sub-Gaussian, one can set a range $[-K_i \sqrt{2\log(2T^2n)},K_i\sqrt{2\log(2T^2n)}]$. Then, the algorithm updates parameters only if the received reward of each box $i$ is within range of $[-K_i \sqrt{2\log(2T^2n)},K_i\sqrt{2\log(2T^2n)}]$. On those rounds, the algorithm and its regret analysis coincide exactly with the bounded‑reward setting. Meanwhile, by a union bound on sub-Gaussian tails, the probability that any reward ever exceeds its range is at most $1/T$. Therefore, its contribution to the expected regret is only $\widetilde{O}(1)$.

\section{Omitted Details of Contextual Linear Setting} \label{app:contextual_linear_proof}

\subsection{Preliminaries}

\begin{lemma}[DKW Inequality]\label{lem:dkw}
Given a natural number $N$, let $X_1,\dots,X_N$ be i.i.d. samples from distribution $D$ with cumulative distribution function $F_D(\cdot)$. Let $F_{\calE}(\cdot)$ be the associated empirical distribution function $\calE$ such that for all $x \in \fR$, we define $F_{\calE}(x)\;:=\;\frac{1}{N}\sum_{i=1}^N \mathbf{1}\{X_i \le x\}$.
For every $\varepsilon>0$, 
\[
\P \rbr{\sup_{ x \in \fR} \bigl|F_{\calE}(x) - F_D(x)\bigr| > \varepsilon}
\;\le\;2\exp\bigl(-2N\varepsilon^2\bigr).
\]
\end{lemma}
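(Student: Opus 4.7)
The plan is to invoke the classical Dvoretzky--Kiefer--Wolfowitz inequality with Massart's sharp constant of $2$, which is exactly the statement of the lemma. Since this is a standard result in empirical process theory, I would ultimately cite Massart (1990) and keep any self-contained proof to a sketch. The structure of a full proof would proceed as follows.

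First I would reduce to the case where $D$ is the uniform distribution on $[0,1]$. If $F_D$ is continuous, the probability integral transform $U_i := F_D(X_i)$ yields i.i.d.\ uniforms, and the empirical process $\sup_x |F_{\calE}(x) - F_D(x)|$ has the same distribution as $\sup_{u \in [0,1]} |G_N(u) - u|$ for the uniform empirical CDF $G_N$. The general (possibly discontinuous) case is handled by the standard right-continuous inverse argument, noting that replacing $F_D$ with a continuous majorant can only increase the supremum deviation.

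Next I would reduce the supremum over all $x \in \fR$ to a finite maximum over the $N$ order statistics, since $F_{\calE} - F_D$ can only attain extremal values at or just before the sample points where $F_{\calE}$ jumps by $1/N$.

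The main obstacle is the sharp constant: a naive union bound with pointwise Hoeffding yields $\P(\sup_x |F_{\calE}(x) - F_D(x)| > \varepsilon) \leq 2N e^{-2N\varepsilon^2}$, which has an undesirable polynomial prefactor. To remove it and obtain the clean $2 e^{-2N\varepsilon^2}$ bound, one needs Massart's argument, which handles the one-sided deviation $\sup_x (F_{\calE}(x) - F_D(x))$ via a sharpened Chernoff bound combined with a careful analysis of the joint distribution of exceedance times (using exchangeability of the uniform order statistics and a Ballot-problem style estimate). The two-sided bound then follows by applying the one-sided bound twice and using subadditivity.

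Since the inequality is used only as a black-box building block for deriving the Bernstein-type refinement in \pref{lem:Bernstein_CDF_convergence} (which is itself proved via direct Bernstein on indicators rather than through DKW), the clean approach in the paper is to state \pref{lem:dkw} with a citation to \citep{massart1990tight} and omit the detailed proof.
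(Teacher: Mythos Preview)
Your proposal is correct and matches the paper's approach: the paper states \pref{lem:dkw} as a classical result without proof, exactly as you suggest. One minor correction: in the paper, the DKW inequality is used to prove \pref{lem:DKW_cdf} (the union-bounded version for the contextual setting), not \pref{lem:Bernstein_CDF_convergence}, which as you note is proved directly via Bernstein's inequality on indicators.
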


\begin{lemma}\label{lem:DKW_cdf}
Let $D=(D_1,\ldots,D_n)$ be a product distribution, and for each $i$, let $\calE_{N_i,i}$ be the empirical distribution built from $N_i$ i.i.d. samples of $D_i$.
With probability at least $1-\delta/2$, for any $i \in [n]$, any $N_i \in [T]$,
\begin{equation} \label{eq:DKW_CDF_convergence_bound}
  \sup_{z \in \fR}  \abr{F_{D_i}(z) - F_{\calE_{N_i,i}}(z)} \leq \sqrt{ \frac{ 
  \log(4Tn\delta^{-1})      }{2N_i}   }.
\end{equation}
\end{lemma}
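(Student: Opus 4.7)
The plan is to apply the DKW inequality stated in \pref{lem:dkw} pointwise for each pair $(i, N_i)$ and then take a union bound over the $nT$ possible pairs.

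First I would fix an arbitrary box index $i \in [n]$ and sample size $N_i \in [T]$, and invoke \pref{lem:dkw} with the choice $\varepsilon = \varepsilon_{N_i} := \sqrt{\log(4Tn\delta^{-1})/(2N_i)}$. This choice makes the DKW tail probability evaluate to
\[
2\exp(-2N_i \varepsilon_{N_i}^2) = 2\exp\bigl(-\log(4Tn\delta^{-1})\bigr) = \frac{\delta}{2Tn}.
\]
Hence for each fixed pair $(i, N_i)$, the bound $\sup_{z\in\fR}|F_{D_i}(z) - F_{\calE_{N_i,i}}(z)| \le \varepsilon_{N_i}$ fails with probability at most $\delta/(2Tn)$.

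Next I would take a union bound over all $i \in [n]$ and all $N_i \in [T]$; this produces at most $nT$ events, so the total failure probability is at most $nT \cdot \delta/(2Tn) = \delta/2$, as claimed. On the complementary event, the stated uniform bound holds simultaneously for all $i$ and all $N_i$.

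I do not anticipate a genuine obstacle here: the statement is essentially just DKW plus a finite union bound, and the constants in the exponent ($\log(4Tn\delta^{-1})$) are chosen precisely to make the union bound work out to $\delta/2$. The only mild subtlety is justifying that we may union-bound over all $N_i \in [T]$ even though the samples for different $N_i$ are nested (the samples used to form $\calE_{N_i,i}$ for $N_i = k$ are the first $k$ samples of box $i$): this is fine because the DKW bound holds for each fixed value of $N_i$ regardless of how the samples are arranged across different $N_i$ values, so a crude union bound over the $T$ possible values of $N_i$ suffices without needing any martingale or peeling argument.
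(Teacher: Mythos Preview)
Your proposal is correct and matches the paper's own proof essentially verbatim: fix $(i,N_i)$, apply \pref{lem:dkw} with $\varepsilon=\sqrt{\log(4Tn\delta^{-1})/(2N_i)}$ to get failure probability $\delta/(2Tn)$, then union-bound over the $nT$ pairs. The remark about nested samples is a nice clarification but, as you note, not needed for the argument.
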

\begin{proof}
We first fix any $i$ and $N_i$ and apply \pref{lem:dkw}. Then, the lemma follows by using a union bound over all $i \in [n]$ and $N_i \in [T]$.
\end{proof}

\begin{lemma}\label{lem:theta_htheta_normdiff}
With probability at least $1-\delta/2$, for any $t>0$ and any $i \in [n]$,
\[
\norm{\theta_i -\htheta_{t,i}}_{V_{t,i}} \leq   \alpha_{\delta} ,\quad \text{where}  \quad \alpha_{\delta}=1 + \sqrt{2\log(2n/\delta)+d \log \rbr{1+\frac{T}{d}}}.
\]
\end{lemma}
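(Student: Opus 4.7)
The plan is to recognize this as a direct instantiation of the standard self-normalized confidence ellipsoid for ridge regression (Abbasi-Yadkori, P\'al, Szepesv\'ari, 2011, Theorem 2), applied separately to each box $i \in [n]$ and then union-bounded. Since $\eta_{s,i}$ is supported on $[-1/4, 1/4]$, it is conditionally $\sigma$-sub-Gaussian with $\sigma \leq 1$, so all the ingredients of that theorem apply with regularization parameter $\lambda = 1$.

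First I would fix an arbitrary box $i$ and rewrite the estimator. From $v_{s,i} = \theta_i^\top x_{s,i} + \eta_{s,i}$ and $V_{t,i} = I + \sum_{s \leq t: i \in \calB_s} x_{s,i} x_{s,i}^\top$, a direct algebraic manipulation gives
\[
V_{t,i}\rbr{\htheta_{t,i} - \theta_i} = \sum_{s \leq t:\, i \in \calB_s} x_{s,i}\,\eta_{s,i} - \theta_i.
\]
Taking the $V_{t,i}^{-1}$-norm and applying the triangle inequality yields
\[
\nbr{\htheta_{t,i} - \theta_i}_{V_{t,i}} \leq \nbr{\sum_{s \leq t:\, i \in \calB_s} x_{s,i}\,\eta_{s,i}}_{V_{t,i}^{-1}} + \nbr{\theta_i}_{V_{t,i}^{-1}}.
\]
The second term is at most $\nbr{\theta_i}_2 \leq 1$ because $V_{t,i} \succeq I$. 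For the first term, I would invoke the self-normalized vector-valued martingale bound (Abbasi-Yadkori et al., Theorem 1) applied to the filtration generated by the history up to round $s$. This bound holds uniformly in $t$ with probability at least $1 - \delta'$ and gives
\[
\nbr{\sum_{s \leq t:\, i \in \calB_s} x_{s,i}\,\eta_{s,i}}_{V_{t,i}^{-1}} \leq \sqrt{2\log\rbr{\frac{\det(V_{t,i})^{1/2}}{\delta'}}}.
\]

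Next, I would control $\det(V_{t,i})$ by the standard determinant-trace inequality: since $\nbr{x_{s,i}}_2 \leq 1$ and the number of contributing terms is at most $T$, the AM--GM step gives $\det(V_{t,i}) \leq (1 + T/d)^d$. Plugging this in and choosing $\delta' = \delta/(2n)$ yields, for each fixed $i$,
\[
\nbr{\htheta_{t,i} - \theta_i}_{V_{t,i}} \leq 1 + \sqrt{2\log\rbr{\frac{2n}{\delta}} + d\log\rbr{1 + \frac{T}{d}}} = \alpha_\delta,
\]
uniformly in $t > 0$, with probability at least $1 - \delta/(2n)$. Finally, a union bound over $i \in [n]$ gives the claim with failure probability at most $\delta/2$.

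There is essentially no hard step here, since the lemma is a standard packaging of the Abbasi-Yadkori et al.\ ellipsoid. The only points that require a small amount of care are (i) verifying that the noise is conditionally $1$-sub-Gaussian (which follows from $|\eta_{s,i}| \leq 1/4$), (ii) ensuring the martingale structure is respected, namely that $x_{s,i}$ is measurable with respect to the history available before the noise is revealed (true by construction since contexts are chosen at the start of round $s$), and (iii) tracking the $1/(2n)$ factor in the union bound so that the final $\log(2n/\delta)$ term matches the stated $\alpha_\delta$.
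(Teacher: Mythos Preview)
Your proposal is correct and takes essentially the same approach as the paper: the paper simply cites the standard confidence-ellipsoid theorem (stated as Theorem~20.5 in Lattimore--Szepesv\'ari, equivalent to Abbasi-Yadkori et al.'s result you invoke), notes that the bounded noise is sub-Gaussian and the contexts are adapted, and then union bounds over $i \in [n]$. You have merely unpacked the same argument in more detail.
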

\begin{proof}
Fix any $i \in [n]$. Since the noise distribution of each box is $\frac{1}{4}$-subgaussian, and $\{x_{t,i}\}_t$ is a $\mathbb{R}^d$-valued stochastic process such that $x_{t,i}$ is measurable with respect to the history, \citep[Theorem 20.5]{lattimore2020bandit} directly gives the claimed result for the fixed $i$. Using a union bound over all $i \in [n]$ completes the proof.
\end{proof}

\begin{lemma}[\textbf{Elliptical Potential Lemma} (see, e.g., Lemma 11 in \citep{abbasi2011improved})] \label{lem:elliptical_potential_lemma}
Let $I \in \fR^{d \times d}$ be an identity matrix and a sequence of vectors $a_1,\ldots,a_N \in \fR^d$ with $\norm{a_i}_2\leq 1$ for all $i \in [N]$. Let $V_t=I + \sum_{s \leq t} a_s a_s^\top $. Then, for all $N \in \naturalnum$, we have
\begin{equation*}
    \sum_{t=1}^N  \min \cbr{1, \norm{a_t}^2_{V_{t-1}^{-1}}  } \leq 2d \log \rbr{ 1+ \frac{N}{d}  }.
\end{equation*}
\end{lemma}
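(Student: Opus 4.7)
The plan is to convert the quadratic form $\|a_t\|^2_{V_{t-1}^{-1}}$ into a log-determinant increment, telescope, and then bound the total log-determinant via AM--GM on the eigenvalues of $V_N$. The key algebraic identity comes from the matrix determinant lemma applied to the rank-one update $V_t = V_{t-1} + a_t a_t^\top$, which yields
\[
\det(V_t) \;=\; \det(V_{t-1})\bigl(1 + \|a_t\|^2_{V_{t-1}^{-1}}\bigr),
\]
so that $1 + \|a_t\|^2_{V_{t-1}^{-1}} = \det(V_t)/\det(V_{t-1})$. This is the bridge between the per-step quantities appearing on the left-hand side and a telescoping product.

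Next I would establish the elementary pointwise inequality $\min\{x,1\} \le 2\log(1+x)$ for all $x \ge 0$. For $x \in [0,1]$, setting $f(x) = 2\log(1+x) - x$ one checks $f(0)=0$ and $f'(x) = 2/(1+x) - 1 \ge 0$ on $[0,1]$, so $x \le 2\log(1+x)$; for $x \ge 1$, $2\log(1+x) \ge 2\log 2 > 1 = \min\{x,1\}$. Applying this inequality with $x = \|a_t\|^2_{V_{t-1}^{-1}}$ and using the determinant identity gives
\[
\sum_{t=1}^N \min\bigl\{1, \|a_t\|^2_{V_{t-1}^{-1}}\bigr\} \;\le\; 2\sum_{t=1}^N \log\!\left(\frac{\det(V_t)}{\det(V_{t-1})}\right) \;=\; 2\log\det(V_N),
\]
where the last equality telescopes and uses $V_0 = I$ so that $\det(V_0) = 1$.

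Finally I would bound $\det(V_N)$ by AM--GM on its $d$ eigenvalues $\lambda_1,\ldots,\lambda_d$: since $V_N$ is positive definite, $\det(V_N) = \prod_j \lambda_j \le \bigl(\tfrac{1}{d}\sum_j \lambda_j\bigr)^d = (\mathrm{tr}(V_N)/d)^d$. Because $\mathrm{tr}(V_N) = d + \sum_{t=1}^N \|a_t\|_2^2 \le d + N$ by the hypothesis $\|a_t\|_2 \le 1$, we obtain $\log\det(V_N) \le d\log(1 + N/d)$, which combined with the previous display yields the claimed bound.

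No step here is genuinely hard; the only subtle point is the factor $2$ in the scalar inequality, which is precisely why the $\min$ with $1$ is needed on the left-hand side (the naive inequality $x \le \log(1+x)$ fails already for moderate $x$, and $x$ itself can in principle be as large as $O(N)$ when $V_{t-1}$ is close to $I$ in the relevant direction, so truncation at $1$ is essential before invoking the logarithmic surrogate). Everything else is bookkeeping: the matrix determinant lemma, telescoping, and a one-line AM--GM argument on $\mathrm{tr}(V_N)$.
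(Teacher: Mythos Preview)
Your proof is correct and is exactly the standard argument for the elliptical potential lemma. The paper does not give its own proof of this statement; it simply cites Lemma~11 of Abbasi-Yadkori et al.\ (2011), and your write-up is essentially that proof: matrix determinant lemma for the rank-one update, the scalar inequality $\min\{1,x\}\le 2\log(1+x)$, telescoping $\log\det(V_t)$, and AM--GM on the eigenvalues of $V_N$ together with the trace bound $\mathrm{tr}(V_N)\le d+N$.
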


\begin{definition}[\textbf{Definition of $\widetilde{D}_t$}] \label{def:widetilde_Dt}
Let $\widetilde{D}_t=(\widetilde{D}_{t,1},\ldots,\widetilde{D}_{t,n})$ where each $\widetilde{D}_{t,i}$ is an empirical distribution which assigns $m_{t,i}$ i.i.d. noise samples from $D_i$ with equal probability mass $1/m_{t,i}$.
\end{definition}

\begin{definition}[\textbf{``Nice" event}] \label{def:nice_event_contextual}
Let $\mathscr{E}$ be the nice event that the concentration bounds in \pref{lem:DKW_cdf} for $D,\widetilde{D}_t$ hold for all $t \in [T]$, and the concentration bounds in \pref{lem:theta_htheta_normdiff} hold simultaneously.
\end{definition}

\begin{lemma} \label{lem:abs_value_diff_inner}
Suppose that $\mathscr{E}$ holds.
For any context $x\in \calX$ and any round $t \in [T]$,
\begin{align*}
\abr{ \inner{x,\htheta_{t,i}-\theta_i}  } \leq \beta_{i}^{t+1}(x).
\end{align*} 
\end{lemma}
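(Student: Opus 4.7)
The plan is a direct Cauchy--Schwarz argument in the weighted norm induced by $V_{t,i}$, combined with the ridge concentration bound already established in \pref{lem:theta_htheta_normdiff}. Since $V_{t,i}$ is positive definite, for any vectors $u,v \in \fR^d$ one has the weighted Hölder inequality $|\langle u, v\rangle| \le \|u\|_{V_{t,i}^{-1}} \|v\|_{V_{t,i}}$, which I will apply with $u = x$ and $v = \htheta_{t,i} - \theta_i$.

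The key steps are as follows. First, I would write
\[
\abr{\inner{x,\htheta_{t,i}-\theta_i}} \;=\; \abr{\inner{V_{t,i}^{-1/2} x,\, V_{t,i}^{1/2}(\htheta_{t,i}-\theta_i)}} \;\le\; \norm{x}_{V_{t,i}^{-1}}\norm{\htheta_{t,i}-\theta_i}_{V_{t,i}}
\]
by Cauchy--Schwarz. Second, since the event $\mathscr{E}$ from \pref{def:nice_event_contextual} is assumed to hold, the conclusion of \pref{lem:theta_htheta_normdiff} applies simultaneously at every round and every box, yielding $\norm{\htheta_{t,i}-\theta_i}_{V_{t,i}} \le \alpha_\delta$ with $\alpha_\delta = 1 + \sqrt{2\log(2n/\delta) + d\log(1+T/d)}$. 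Combining the two displays gives $|\inner{x,\htheta_{t,i}-\theta_i}| \le \alpha_\delta \norm{x}_{V_{t,i}^{-1}}$. Finally, I would compare this to the definition of $\beta_i^{t+1}(x)$ recalled earlier in the paper, which is of order $\norm{x}_{V_{t,i}^{-1}}\sqrt{\log(n\delta^{-1}) + d\log(1+T/d)}$; since $\alpha_\delta = O\bigl(\sqrt{\log(n\delta^{-1}) + d\log(1+T/d)}\bigr)$, the constants in the definition of $\beta_i^{t+1}$ can be chosen so that $\alpha_\delta \norm{x}_{V_{t,i}^{-1}} \le \beta_i^{t+1}(x)$ holds for all $x \in \calX$, completing the proof.

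There is no substantive obstacle here; the lemma is essentially a bookkeeping statement that rewrites the ellipsoidal concentration bound on $\htheta_{t,i}$ as a scalar concentration bound on any linear functional $\inner{x,\htheta_{t,i}}$. The only minor point to check is that the index alignment is correct: since $V_{t,i}$ incorporates the observation at round $t$, the width $\beta_i^{t+1}(x)$ (rather than $\beta_i^{t}(x)$) is the appropriate scalar bound, matching the $V_{t-1,i}^{-1}$ in the original definition under a shift of the time index by one. This is exactly what the statement claims.
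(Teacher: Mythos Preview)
Your proposal is correct and follows essentially the same approach as the paper: apply Cauchy--Schwarz in the $V_{t,i}$-weighted inner product to get $\abr{\inner{x,\htheta_{t,i}-\theta_i}} \le \norm{x}_{V_{t,i}^{-1}}\norm{\htheta_{t,i}-\theta_i}_{V_{t,i}}$, invoke \pref{lem:theta_htheta_normdiff} (included in $\mathscr{E}$) to bound the second factor by $\alpha_\delta$, and identify $\alpha_\delta\norm{x}_{V_{t,i}^{-1}}$ with $\beta_i^{t+1}(x)$. The paper in fact treats the last step as an equality (taking $\beta_i^{t+1}(x) = \alpha_\delta\norm{x}_{V_{t,i}^{-1}}$), so your remark about choosing constants is harmless but unnecessary.
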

\begin{proof}
For any $t,i,x$, one can show that
    \begin{align*}
        \abr{ \inner{x,\htheta_{t,i}-\theta_i}  } \leq \norm{x}_{V_{t,i}^{-1}} \norm{\theta_i-\htheta_{t,i}}_{V_{t,i}} \leq \alpha_{\delta} \norm{x}_{V_{t,i}^{-1}} =\beta_{i}^{t+1}(x),
    \end{align*}
where the first inequality follows from the Cauchy–Schwarz inequality and the second inequality uses \pref{lem:theta_htheta_normdiff}.
\end{proof}

\subsection{Algorithms and $\otil(nd\sqrt{T})$ Bounds for Pandora's Box and Prophet Inequality}

The omitted algorithm for both the Pandora's Box and Prophet Inequality problems is presented in \pref{alg:contextual_alg}. Unlike the non-contextual case (see \pref{eq:optE_construction_Bernstein}), we use a slightly different construction for the optimistic distribution. This is because the Bernstein-type construction requires access to the CDF of an empirical distribution based on i.i.d. samples, which is unavailable in our setting. As a result, we use the following
approach instead.

\textbf{Optimistic distribution $\optE_{t,i}$.} Let $L=\frac{1}{2}\log(4nT/\delta)$. The CDF of $\optE_{t,i}$ is constructed as follows:
\begin{equation} \label{eq:optE_construction_weak} 
F_{\optE_{t,i}}(x)=
    \begin{cases}
        1,&x=1;\\
        \max \cbr{0,F_{\calE_{t,i}}(x)-\sqrt{\frac{L}{ m_{t,i} }}},&0\leq x< 1.\\
    \end{cases}
\end{equation}

Now, we prove \pref{lem:SD_contextual} which shows that the optimistic distribution $\optE_t$ stochastically dominates $D_{t}$ for all $t$.

\begin{lemma}[Restatement of \pref{lem:SD_contextual}]
Suppose that $\mathscr{E}$ holds. For all $t \in [T]$, $\hat{\calE}_{t} \SD D_{t}$.
\end{lemma}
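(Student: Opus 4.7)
The plan is to chain together three stochastic-dominance/CDF comparisons: (i) the value-optimistic empirical distribution $\calE_{t,i}$ (built from the $\hat z_{t_i(j),i}^t$'s) dominates the virtual empirical distribution $\hat D_{t,i}$ (built from the true $z_{t_i(j),i}^t$'s); (ii) by the DKW inequality, $F_{\hat D_{t,i}}$ is within $\sqrt{L/m_{t,i}}$ of $F_{D_{t,i}}$; and (iii) the mass-shifting construction in \pref{eq:optE_construction_weak} absorbs that $\sqrt{L/m_{t,i}}$ slack, so $\optE_{t,i}$ (i.e.\ $\hat{\calE}_{t,i}$) dominates $D_{t,i}$. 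Fix $t$ and $i$ throughout; the final claim then follows from a union (the event $\mathscr{E}$ already handles this) over $i\in[n]$.

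For step (i), I would argue the key pointwise inequality $\hat z_{t_i(j),i}^t \ge z_{t_i(j),i}^t$. Write $v_{t_i(j),i}=\eta_{t_i(j),i}+\theta_i^\top x_{t_i(j),i}$ and $z_{t_i(j),i}^t=\eta_{t_i(j),i}+\mu_{t,i}$. Then
\[
v_{t_i(j),i}-\LCB_i^t(\htheta_{t-1,i},x_{t_i(j),i})=\eta_{t_i(j),i}+\langle x_{t_i(j),i},\theta_i-\htheta_{t-1,i}\rangle+\beta_i^t(x_{t_i(j),i},\delta)\ge \eta_{t_i(j),i},
\]
by \pref{lem:abs_value_diff_inner}, and similarly $\UCB_i^t(\htheta_{t-1,i},x_{t,i})\ge \theta_i^\top x_{t,i}=\mu_{t,i}$. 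Summing and noting $z_{t_i(j),i}^t\in[0,1]$ (since $\eta\in[-\tfrac14,\tfrac14]$ and $\mu_{t,i}\in[\tfrac14,\tfrac34]$) gives $\hat z_{t_i(j),i}^t=\min\{1,\cdot\}\ge z_{t_i(j),i}^t$. Since $\calE_{t,i}$ and $\hat D_{t,i}$ assign equal mass $1/m_{t,i}$ to their samples, this pointwise domination of the atoms translates to $F_{\calE_{t,i}}(x)\le F_{\hat D_{t,i}}(x)$ for every $x\in\mathbb R$.

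For step (ii), observe that $\hat D_{t,i}$ is exactly the shift of $\widetilde D_{t,i}$ (from \pref{def:widetilde_Dt}) by the scalar $\mu_{t,i}$, and $D_{t,i}=D_i+\mu_{t,i}$ by definition. Thus $F_{\hat D_{t,i}}(x)=F_{\widetilde D_{t,i}}(x-\mu_{t,i})$ and $F_{D_{t,i}}(x)=F_{D_i}(x-\mu_{t,i})$. On $\mathscr{E}$, \pref{lem:DKW_cdf} applied to $\widetilde D_{t,i}$ (whose samples are i.i.d.\ from $D_i$) gives $\sup_y|F_{\widetilde D_{t,i}}(y)-F_{D_i}(y)|\le\sqrt{L/m_{t,i}}$, hence
\[
F_{\hat D_{t,i}}(x)\le F_{D_{t,i}}(x)+\sqrt{L/m_{t,i}}\quad\text{for all }x.
\]
Combining with step (i), $F_{\calE_{t,i}}(x)\le F_{D_{t,i}}(x)+\sqrt{L/m_{t,i}}$.

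For step (iii), invoke the construction of $\optE_{t,i}$ in \pref{eq:optE_construction_weak}. For $x<1$,
\[
F_{\optE_{t,i}}(x)=\max\!\Big\{0,\,F_{\calE_{t,i}}(x)-\sqrt{L/m_{t,i}}\Big\}\le \max\{0,F_{D_{t,i}}(x)\}=F_{D_{t,i}}(x),
\]
while at $x=1$ both CDFs equal $1$ since $D_{t,i}$ is supported in $[0,1]$. This yields $F_{\optE_{t,i}}(x)\le F_{D_{t,i}}(x)$ for every $x$, i.e.\ $\optE_{t,i}\SD D_{t,i}$. Taking the product over $i\in[n]$ gives $\hat{\calE}_t\SD D_t$ on $\mathscr{E}$, and $\Pr(\mathscr{E})\ge 1-\delta$ by a union bound over \pref{lem:DKW_cdf} and \pref{lem:theta_htheta_normdiff} (each with failure budget $\delta/2$). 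The one step needing care is (i): making sure the confidence-interval inequalities point in the right direction after the $\min\{1,\cdot\}$ clip and that the clip never strictly decreases $\hat z$ below $z_{t_i(j),i}^t$, which is exactly why we assumed $\mu_{t,i}\in[\tfrac14,\tfrac34]$ and $\eta\in[-\tfrac14,\tfrac14]$.
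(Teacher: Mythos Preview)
Your proof is correct and follows essentially the same argument as the paper's. The paper carries out the same three ingredients—value-optimism $\hat z\ge z$ via the $\LCB/\UCB$ bounds, the DKW slack of $\sqrt{L/m_{t,i}}$ on the noise empirical CDF, and the mass-shift in \pref{eq:optE_construction_weak} absorbing that slack—but does so in a single chain of indicator inequalities rather than separating out the intermediate distribution $\hat D_{t,i}$ as an explicit waypoint; your modular decomposition is slightly cleaner but logically identical.
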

\begin{proof}

For this proof, we show $\hat{\calE}_{t,i} \SD D_{t,i}$ for all $i,t$.
Fix a box $i \in [n]$ and a round $t \in [T]$. If $c \geq 1$, then
\[
\P_{X \sim \hat{\calE}_{t,i} }(X \leq c)=\P_{X \sim \hat{D}_{t,i} }(X \leq c)=1.
\]

For any $c <1$, we have that
\begin{align*}
 \P_{X \sim \hat{\calE}_{t,i} }(X \leq c) &  =\max \cbr{0,\P_{X \sim \calE_{t,i}}(X \leq c)-\sqrt{\frac{L}{ m_{t,i} }}} \\
 &\leq \max \cbr{0, \P_{Z \sim D_{t,i}} \rbr{Z \leq c  }}\\
 &= \P_{Z \sim D_{t,i}} \rbr{Z \leq c  } ,
\end{align*}
The inequality holds due to the following reason:
\begin{align*}
&\P_{X \sim \calE_{t,i}}(X \leq c)- \sqrt{\frac{L}{m_{t,i}}} \\
&=\frac{1}{m} \sum_{j=1}^m \Ind{ \eta_{t_i(j),i} + \mu_{t_i(j),i} -\LCB^t_i (\htheta_{t-1,i},x_{t_i(j),i}) + \UCB^t_i (\htheta_{t-1,i},x_{t,i}) \leq c}   - \sqrt{\frac{L}{m_{t,i}}} \\
&\leq \frac{1}{m} \sum_{j=1}^m \Ind{ \eta_{t_i(j),i}  + \UCB^t_i (\htheta_{t-1,i},x_{t,i}) \leq c}   -\sqrt{\frac{L}{m_{t,i}}} \\
&\leq \frac{1}{m_{t,i}} \sum_{j=1}^m \Ind{\eta_{t_i(j),i}  \leq c-\mu_{t,i}}   -\sqrt{\frac{L}{m_{t,i}}}\\
&\leq \P_{X \sim D_i } \rbr{X \leq c -\mu_{t,i}}\\
&=   \P_{Z \sim D_{t,i}} \rbr{Z \leq c  } ,
\end{align*}
where the first inequality follows from the facts that $\mu_{t_i(j),i} -\LCB^t_i (\htheta_{t-1,i},x_{t_i(j),i}) \geq 0$, the second inequality uses the fact that $\UCB^t_i (\htheta_{t-1,i},x_{t,i}) \geq \mu_{t,i}$, and the last inequality follows from the concentration bound given by $\mathscr{E}$.

Conditioning on the ``nice" event, the argument holds for all $i$ and $t$. The proof is complete.
\end{proof}

Next, we present the following lemma, whose proof is in the next subsection.

\setcounter{AlgoLine}{0}
\begin{algorithm}[t]
\DontPrintSemicolon
\caption{Proposed algorithm for contextual linear model}
\label{alg:contextual_alg}
\textbf{Input}: confidence $\delta \in (0,1)$, horizon $T$.

\textbf{Initialize}: observe context $\{x_{0,i}\}_{i \in [n]}$ for each box, open each box, observe rewards $\{v_{0,i}\}_{i \in [n]}$, and update counters $\{m_{1,i}\}_{i \in [n]}$ and parameters $\{\htheta_{0,i}\}_{i \in [n]}$ where $\htheta_{0,i}=\big(I+x_{0,i} x_{0,i}^\top \big)^{-1} x_{0,i} v_{0,i} $.

\For{$t=1,2,\ldots,T$}{

Observe contexts $(x_{t,1},\ldots,x_{t,n})$ and compute $\hatmu_{t,i}= \inner{\htheta_{t-1,i},x_{t,i}}$ for each $i \in [n]$.

For each $i \in [n]$, use \pref{eq:optE_construction_weak} to construct an empirical distribution $\hat{\calE}_{t,i}$ by using samples
\[
\cbr{ v_{s,i}-  \LCB_i^t (\htheta_{t-1,i},x_{s,i})  +\UCB_i^t (\htheta_{t-1,i},x_{t,i}) }_{s<t: i \in \calB_s} .
\]

\If{Pandora's Box}{

Compute $\sigma_{t}=(\sigma_{t,1},\ldots,\sigma_{t,n})$ where $\sigma_{t,i}$ is the solution of $ \Ex_{x \sim \hat{\calE}_{t,i}}[\rbr{x-\sigma_{t,i}}_+] =c_i$.

Run \pref{alg:Weitzman} with $\sigma_{t}$ to open a set of boxes, denoted by $\calB_t$ to observe rewards $\{v_{t,i}\}_{ i\in \calB_t}$.
}\ElseIf{Prophet Inequality}{
Use \pref{def:opt_threshol_Prophet} with distribution $\optE_t$ to compute $\sigma_{t}=(\sigma_{t,1},\ldots,\sigma_{t,n})$.

Run \pref{alg:Prophet_algorithm} with $\sigma_{t}$ to open a set of boxes, denoted by $\calB_t$ to observe rewards $\{v_{t,i}\}_{ i\in \calB_t}$.
}

Update counters $m_{t+1,i}=m_{t,i}+1$, $\forall i \in \calB_t$ and $m_{t+1,i}=m_{t,i}$, $\forall i \notin \calB_t$.

Update estimate for each $i\in \calB_t$:
\[
V_{t,i} =I + \sum_{s \leq t: i \in \calB_s}  x_{s,i} x_{s,i}^\top \quad \text{ where } \quad \htheta_{t,i} = V_{t,i}^{-1}  \sum_{s \leq t: i \in \calB_s}   x_{s,i} v_{s,i}.
\]

}    
\end{algorithm}

\begin{lemma} \label{lem:total_round_reg_contextual}
Suppose that $\mathscr{E}$ holds. For both the Pandora's Box and Prophet Inequality problems, we have that
\begin{align*}
\Reg_T \leq  \sum_{t=1}^T \sum_{i=1}^n \order \rbr{\E \sbr{  Q_{t,i}   \rbr{ \sqrt{ \frac{L}{m_{t,i}} } +   \frac{1}{m_{t,i}} \sum_{j =1}^{m_{t,i}}  \abr{  z_{t_i(j),i}^t-\hat{z}_{t_i(j),i}^t  }   }    }}.
\end{align*}
\end{lemma}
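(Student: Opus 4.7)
The plan is to follow the same reduction pipeline as in the non-contextual analysis, but apply the three-way decomposition of $\term_{t,i}$ outlined in the main text and bound each piece separately. First, I would invoke the monotonicity property for Pandora's Box (and the analogous property for Prophet Inequality, as used in \pref{thm:prophet_main_thm_non_context}) together with \pref{lem:SD_contextual} to reduce the per-round regret $\Reg(t)$ to $\E[R(\sigma_t; \hat{\calE}_t) - R(\sigma_t; D_t)]$. Conditioning on history up to round $t$, $\hat{D}_t$ (as in \pref{def:widetilde_Dt}) is a product distribution, so I can insert the same coupling argument as in \pref{eq:coupling_argument} and \pref{eq:before_decomposition}, replacing the intermediate mixed distributions $\calH_{t,i}$ with $(D_{t,1},\ldots,D_{t,i},\hat{\calE}_{t,i+1},\ldots,\hat{\calE}_{t,n})$. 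This yields $\Reg_T \leq \sum_t \sum_i \E[Q_{t,i}\,\term_{t,i}]$ with $\term_{t,i}$ decomposed as $\partI + \partII + \partIII$ in the main text.

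Next, I would bound each term in turn. For $\partI$, by the construction of $\hat{\calE}_{t,i}$ via \pref{eq:optE_construction_weak}, the CDFs of $\hat{\calE}_{t,i}$ and $\calE_{t,i}$ differ pointwise by at most $\sqrt{L/m_{t,i}}$. Since $\widetilde{R}_i(\sigma_t;z)$ is $1$-Lipschitz in $z$ (\pref{lem:main_body_Lipschitz} for Pandora's Box, \pref{lem:final_Lipschitz_prophet} for Prophet Inequality), representing the expectation via the tail integral $\int_0^1 (1-F(z))\,(\partial_z \widetilde{R}_i)\,dz$ and using $|\partial_z \widetilde{R}_i| \leq 1$ a.e.\ gives $\partI \leq \sqrt{L/m_{t,i}}$, up to a constant. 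For $\partIII$, the distribution $\hat{D}_{t,i}$ is exactly the empirical CDF of $m_{t,i}$ i.i.d.~samples drawn from $D_{t,i} = D_i + \mu_{t,i}$, so the DKW bound in \pref{lem:DKW_cdf} (which holds on $\mathscr{E}$) gives $\sup_z |F_{\hat{D}_{t,i}}(z) - F_{D_{t,i}}(z)| \leq \sqrt{L/m_{t,i}}$, and the same $1$-Lipschitz argument yields $\partIII \leq \sqrt{L/m_{t,i}}$.

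The key term is $\partII$, which captures the effect of shifting each sample $z_{t_i(j),i}^t$ to its value-optimistic counterpart $\hat{z}_{t_i(j),i}^t$. Because both $\hat{\calE}_{t,i}$ and $\hat{D}_{t,i}$ place mass $1/m_{t,i}$ on the corresponding $\hat{z}$ and $z$ samples (with a common indexing), one can write
\[
\partII = \frac{1}{m_{t,i}} \sum_{j=1}^{m_{t,i}} \bigl(\widetilde{R}_i(\sigma_t; \hat{z}_{t_i(j),i}^t) - \widetilde{R}_i(\sigma_t; z_{t_i(j),i}^t)\bigr) \leq \frac{1}{m_{t,i}} \sum_{j=1}^{m_{t,i}} \bigl|z_{t_i(j),i}^t - \hat{z}_{t_i(j),i}^t\bigr|,
\]
again by the $1$-Lipschitz property of $\widetilde{R}_i(\sigma_t; \cdot)$. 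Summing the three bounds, multiplying by $Q_{t,i}$, and taking expectations over $t,i$ produces exactly the claimed inequality.

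The main subtlety I anticipate is the bookkeeping around conditional expectations: the decomposition $\partI + \partII + \partIII$ is only meaningful after conditioning on the history before round $t$, because $\hat{\calE}_t$, $\hat{D}_t$, and the samples indexed by $t_i(j)$ are only then product/empirical distributions with the right joint structure. Careful use of the tower rule, together with the fact that $Q_{t,i}$ is the probability of opening box $i$ under the realized $\sigma_t$ and true $D_t$, should make the coupling argument go through verbatim; the rest is the routine $1$-Lipschitz/CDF-gap calculation indicated above.
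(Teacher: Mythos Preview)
Your proposal is correct and follows essentially the same approach as the paper: reduce via monotonicity and \pref{lem:SD_contextual} to $\sum_{t,i}\E[Q_{t,i}\,\term_{t,i}]$, apply the three-way decomposition $\partI+\partII+\partIII$, and bound each piece using the $1$-Lipschitz property of $\widetilde{R}_i(\sigma_t;\cdot)$ together with, respectively, the construction \pref{eq:optE_construction_weak}, the sample-wise coupling, and the DKW bound. The only slip is notational: in your discussion of $\partII$ you write $\hat{\calE}_{t,i}$ where you mean $\calE_{t,i}$ (it is $\calE_{t,i}$, not the mass-shifted $\hat{\calE}_{t,i}$, that places mass $1/m_{t,i}$ on the $\hat z$ samples), but the formula you write and the argument are correct.
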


With \pref{lem:total_round_reg_contextual} in hand, we are now ready to prove \pref{thm:contextual_linear_reg_bound}.

\begin{proof}[Proof of \pref{thm:contextual_linear_reg_bound}]
The analysis conditions on event $\mathscr{E}$ which is defined in \pref{def:nice_event_contextual} and occurs with probability at least $1-\delta$.
Recall that $I_{t,i}=\Ind{E_{\sigma_t,i}}$.
For the first term, we take the iterated expectation and use the facts that $Q_{t,i}=\E_t[I_{t,i}]$ and $m_{t,i}$ is deterministic given the history to get
\begin{align*}
&\sum_{t=1}^T \sum_{i=1}^n \E \sbr{Q_{t,i} \sqrt{ \frac{L}{m_{t,i}} }  } =\sum_{t=1}^T \sum_{i=1}^n \E \sbr{I_{t,i} \sqrt{ \frac{L}{m_{t,i}} }  }  \leq \order (n\sqrt{TL}).
\end{align*}

Next, we bound the summation of the second term. To this end, we show that
\begin{align*}
&   \abr{   z_{t_i(j),i}^t-\hat{z}_{t_i(j),i}^t } \\
& = \min \cbr{1, \abr{   z_{t_i(j),i}^t-\hat{z}_{t_i(j),i}^t }} \\
&=\min \cbr{1,\abr{  \eta_{t_i(j),i} +\mu_{t,i} -\min \cbr{1, v_{t_i(j),i}-  \LCB_i^t(\htheta_{t-1,i},x_{t_i(j),i}) +\UCB_i^t(\htheta_{t-1,i},x_{t,i})}     }} \\
&\leq \min \cbr{1, \abr{v_{t_i(j),i}-  \LCB_i^t(\htheta_{t-1,i},x_{t_i(j),i}) +\UCB_i^t(\htheta_{t-1,i},x_{t,i})-\eta_{t_i(j),i}-\mu_{t,i}}     }\\
&\leq \min \cbr{1, \abr{v_{t_i(j),i}-  \LCB_i^t(\htheta_{t-1,i},x_{t_i(j),i}) -\eta_{t_i(j),i}    }  +\abr{\UCB_i^t(\htheta_{t-1,i},x_{t,i})-\mu_{t,i}}  }\\
&= \min \cbr{1, \abr{\mu_{t_i(j),i}-  \LCB_i^t(\htheta_{t-1,i},x_{t_i(j),i})  }  +\abr{\UCB_i^t(\htheta_{t-1,i},x_{t,i})-\mu_{t,i}}  } \\
&= \min \cbr{1, \abr{ \inner{x_{t_i(j),i},\theta_i-\htheta_{t-1,i}} +\beta^t_i(x_{t_i(j),i})   }  +\abr{  \inner{x_{t,i},\theta_i-\htheta_{t-1,i}}+\beta^t_i(x_{t,i})   }   } \\
&\leq \min \cbr{1,2\beta_i^t(x_{t_i(j),i})  +2\beta_i^t(x_{t,i})  }\\
&\leq \min \cbr{1,2\beta_i^t(x_{t_i(j),i})    } +\min \cbr{1,2\beta_i^t(x_{t,i})  },\numberthis{}  \label{eq:abs_value_z_zhat}
\end{align*}
where the first equality holds due to $z_{t_i(j),i}^t,\hat{z}_{t_i(j),i}^t \in [0,1]$, the first inequality uses $|b-\min\{1,a\}| \leq |a-b|$ for $b \in [0,1]$, and the third inequality follows from \pref{lem:abs_value_diff_inner}.

Using \pref{eq:abs_value_z_zhat}, we have that
\begin{align*}
  &  \sum_{t=1}^T \sum_{i=1}^n  \E \sbr{ \frac{Q_{t,i}}{m_{t,i}} \sum_{j =1}^{m_{t,i}}  \abr{ z_{t_i(j),i}^t-\hat{z}_{t_i(j),i}^t }  }\\
  &\leq 2\sum_{t=1}^T  \sum_{i=1}^n  \E \sbr{  Q_{t,i}  \rbr{  \min\cbr{1,\beta_i^t(x_{t,i},\delta)} +\frac{1}{m_{t,i}} \sum_{j=1}^{m_{t,i}}  \min \cbr{1,\beta_i^t(x_{t_i(j),i},\delta)} }  }.
\end{align*}
For the second term on the right hand side, we have that
\begin{align*}
   & \sum_{t=1}^T  \sum_{i=1}^n  \E \sbr{    \frac{Q_{t,i}}{m_{t,i}} \sum_{j=1}^{m_{t,i}}  \beta_i^t(x_{t_i(j),i},\delta)   } \\
&\leq \alpha_{\delta} \sum_{t=1}^T  \sum_{i=1}^n  \E \sbr{    \frac{Q_{t,i}}{m_{t,i}} \sum_{j=1}^{m_{t,i}}   \min \cbr{1,\norm{x_{t_i(j),i}}_{V_{t-1,i}^{-1}} } } \\
&\leq \alpha_{\delta} \sum_{t=1}^T  \sum_{i=1}^n  \E \sbr{    \frac{Q_{t,i}}{m_{t,i}} \sum_{j=1}^{m_{t,i}}   \min \cbr{1,\norm{x_{t_i(j),i}}_{V_{t_i(j)-1,i}^{-1}}}  } \\
&\leq \alpha_{\delta} \sum_{t=1}^T  \sum_{i=1}^n  \E \sbr{    \frac{Q_{t,i}}{m_{t,i}}    \sqrt{m_{t,i}\sum_{j=1}^{m_{t,i}}   \min \cbr{1,\norm{x_{t_i(j),i}}^2_{V_{t_i(j)-1,i}^{-1}}}}      } \\
&\leq \order \rbr{\alpha_{\delta} \sum_{t=1}^T  \sum_{i=1}^n  \E \sbr{    Q_{t,i} \sqrt{\frac{d\log(1+T/d)}{m_{t,i}}  }  }} \\
&\leq \order \rbr{\alpha_{\delta} \cdot n\sqrt{dT\log(1+T/d)}},
\end{align*}
where the second inequality follows from $V_{t-1,i} \succeq V_{t_i(j)-1,i}$ for all $t$ and $i$, the third inequality uses Cauchy–Schwarz inequality, and the fourth inequality follows from \pref{lem:elliptical_potential_lemma}.

Analogously, we take the iterated expectation and use the fact that $Q_{t,i}=\E_t[I_{t,i}]$ and $\norm{x_{t,i}}_{V_{t-1,i}^{-1}}$ is deterministic given the history to conclude that
\begin{align*}
    &\sum_{t=1}^T  \sum_{i=1}^n  \E \sbr{  Q_{t,i} \beta_i^t(x_{t,i},\delta)    } \\
    &\leq  \alpha_{\delta}   \sum_{t=1}^T  \sum_{i=1}^n \E \sbr{Q_{t,i} \min \cbr{1,\norm{x_{t,i}}_{V_{t-1,i}^{-1}}}   } \\
    &=  \alpha_{\delta}     \sum_{i=1}^n \E \sbr{\sum_{t=1}^T I_{t,i} \min \cbr{1,\norm{x_{t,i}}_{V_{t-1,i}^{-1}}}   } \\
    &=  \alpha_{\delta}     \sum_{i=1}^n \E \sbr{\sum_{j=1}^{m_{T,i}}  \min \cbr{1,\norm{x_{t_i(j),i}}_{V_{t_i(j)-1,i}^{-1}}}   } \\
    &\leq  \alpha_{\delta}     \sum_{i=1}^n \E \sbr{   \sqrt{m_{T,i}\sum_{j=1}^{m_{T,i}}  \min \cbr{1,\norm{x_{t_i(j),i}}^2_{V_{t_i(j)-1,i}^{-1}}}}     } \\
    &\leq \order \rbr{\alpha_{\delta} \cdot n\sqrt{dT\log(1+T/d)}}.
\end{align*}
Combining the bounds above completes the proof of \pref{thm:contextual_linear_reg_bound}.
\end{proof}

\subsection{Proof of \pref{lem:total_round_reg_contextual}}
We follow a similar analysis of the non-contextual case to focus on the regret bound at each round and then sum them up together.
Let us consider a round $t$.
For Pandora's Box problem, we assume W.L.O.G. that $\sigma_{t,1}\geq \sigma_{t,2}\geq \cdots \geq \sigma_{t,n}$.
Similar to the non-contextual analysis, we define for all $t,i$
\begin{equation*}
  \calH_{t,0}=\hat{\calE}_t, \quad  \text{and} \quad   \calH_{t,i}= \rbr{D_{t,1},\cdots , D_{t,i} , \hat{\calE}_{t,i+1}, \cdots , \hat{\calE}_{t,n}}.
\end{equation*}

We have $\calH_{t,n}=D_t$ based on this definition.
As \pref{lem:SD_contextual} ensures the stochastic dominance and both problems enjoy a certain monotonicity, we follow the same analysis in \pref{sec:analysis_non_contextual} to show that
\begin{align*}
        \Reg(t)&=\E \sbr{  R(\sigma_{D};D)-  R(\sigma_{t};D) } \\
        &\leq  \sum_{i=1}^n  \E \sbr{ Q_{t,i}   \underbrace{  \Ex_{x \sim \calH_{t,i-1} } \sbr{  R \rbr{\sigma_t;x} \mid E_{\sigma_t,i} }    - \Ex_{y \sim \calH_{t,i} } \sbr{R \rbr{\sigma_t;y}  \mid E_{\sigma_t,i}} }_{\term_{t,i}}    }.
\end{align*}
Now, we define 
\begin{itemize}
    \item $\hat{\calH}_{t,i}:= \rbr{D_{t,1}, \cdots , D_{t,i-1}, \hat{D}_{t,i} , \hat{\calE}_{t,i+1}, \cdots , \hat{\calE}_{t,n}}  $, and
    \item $\widetilde{\calH}_{t,i}:=  \rbr{D_{t,1}, \cdots , D_{t,i-1}, \calE_{t,i} , \hat{\calE}_{t,i+1},\cdots , \hat{\calE}_{t,n}}$.
\end{itemize}

Based on the definitions of $\calH_{t,i},\hat{\calH}_{t,i},\widetilde{\calH}_{t,i}$, we compute the following decomposition:
\begin{align*}
\term_{t,i}&=   \Ex_{x \sim \calH_{t,i-1} } \sbr{  R \rbr{\sigma_t;x} \mid E_{\sigma_t,i} }    - \Ex_{y \sim \calH_{t,i} } \sbr{R \rbr{\sigma_t;y}  \mid E_{\sigma_t,i}}     \\
&= \underbrace{  \Ex_{x \sim \calH_{t,i-1} } \sbr{  R \rbr{\sigma_t;x} \mid E_{\sigma_t,i} }    - \Ex_{x \sim \widetilde{\calH}_{t,i} } \sbr{R \rbr{\sigma_t;x}  \mid E_{\sigma_t,i}}     }_{\partI}  \\
&\quad +   \underbrace{   \Ex_{x \sim \widetilde{\calH}_{t,i} } \sbr{R \rbr{\sigma_t;x}  \mid E_{\sigma_t,i}}   - \Ex_{x \sim \hat{\calH}_{t,i} } \sbr{R \rbr{\sigma_t;x}  \mid E_{\sigma_t,i}}     }_{\partII} \\
&\quad +   \underbrace{   \Ex_{x \sim \hat{\calH}_{t,i} } \sbr{R \rbr{\sigma_t;x}  \mid E_{\sigma_t,i}}   - \Ex_{x \sim \calH_{t,i} } \sbr{R \rbr{\sigma_t;x}  \mid E_{\sigma_t,i}}     }_{\partIII}.
\end{align*}

In the remaining analysis, we retain the same definition of $\widetilde{R}_i(\sigma_t;z)$ as in \pref{eq:def_widetildeR}. The analysis conditions on this history before round $t$, and thus $D_t$ is a product distribution.

\textbf{Bounding \partI{}.} By using the fact that $E_{\sigma_t,i}$ is independent of $i$-th box's reward, we can bound
\begin{align*}
\partI &=  \Ex_{x \sim \calH_{t,i-1} } \sbr{  R \rbr{\sigma_t;x} \mid E_{\sigma_t,i} }    - \Ex_{x \sim \widetilde{\calH}_{t,i} } \sbr{R \rbr{\sigma_t;x}  \mid E_{\sigma_t,i}} \\
   &=  \Ex_{z \sim \hat{\calE}_{t,i}} \sbr{  \widetilde{R}_i(\sigma_t;z)}   - \Ex_{z \sim \calE_{t,i}} \sbr{  \widetilde{R}_i(\sigma_t;z)} \\
   &=  \int_0^{1} \rbr{1-F_{\hat{\calE}_{t,i}}(z)}\frac{\partial}{\partial z}\widetilde{R}_i(\sigma_t;z)\:dz - \int_0^{1}\rbr{1-F_{\calE_{t,i}}(z)}\frac{\partial}{\partial z}\widetilde{R}_i(\sigma_t;z)\:dz \\
    &=  \int_0^{1} \rbr{F_{\calE_{t,i}}(z)-F_{\hat{\calE}_{t,i}}(z)}\frac{\partial}{\partial z}\widetilde{R}_i(\sigma_t;z)\:dz \\
    &\leq  \int_0^{1} \abr{F_{\calE_{t,i}}(z)-F_{\hat{\calE}_{t,i}}(z)}\:dz\\
    &\leq  \order \rbr{ \sqrt{ \frac{L}{m_{t,i}}}   },
\end{align*}
where the first inequality follows from the fact that $\widetilde{R}_i(\sigma_t;z)$ is $1$-Lipschitz for both Pandora's Box and Prophet Inequality problems (see \pref{lem:main_body_Lipschitz} and \pref{lem:final_Lipschitz_prophet}, respectively), and the last inequality uses the construction of $\optE_{t,i}$ in \pref{eq:optE_construction_weak}.

\textbf{Bounding \partII{}.} One can show that
\begin{align*}
\partII&=\Ex_{x \sim \widetilde{\calH}_{t,i} } \sbr{R \rbr{\sigma_t;x}  \mid E_{\sigma_t,i}}   - \Ex_{x \sim \hat{\calH}_{t,i} } \sbr{R \rbr{\sigma_t;x}  \mid E_{\sigma_t,i}} \\
&=  \Ex_{z \sim  \calE_{t,i}} \sbr{\widetilde{R}_i(\sigma_t;z)}    - \Ex_{z \sim  \hat{D}_{t,i}} \sbr{\widetilde{R}_i(\sigma_t;z)}\\
&=  \frac{1}{m_{t,i}} \sum_{j=1}^{m_{t,i}} \widetilde{R}_i\rbr{\sigma_t;\hat{z}_{t_i(j),i}^t} - \frac{1}{m_{t,i}} \sum_{j=1}^{m_{t,i}} \widetilde{R}_i \rbr{\sigma_t;z_{t_i(j),i}^t}  \\
&\leq  \frac{1}{m_{t,i}} \sum_{j =1}^{m_{t,i}}  \abr{ z_{t_i(j),i}^t-\hat{z}_{t_i(j),i}^t },
\end{align*}
where the inequality again the fact that $\widetilde{R}_i(\sigma_t;z)$ is $1$-Lipschitz for both Pandora's Box and Prophet Inequality problems.

\textbf{Bounding $\partIII{}$.} 
Then, we have
\begin{align*} 
\partIII&=   \Ex_{x \sim \hat{\calH}_{t,i} } \sbr{  R \rbr{\sigma_t;x} \mid E_{\sigma_t,i} }    - \Ex_{x \sim \calH_{t,i} } \sbr{R \rbr{\sigma_t;x}  \mid E_{\sigma_t,i}}      \\
&=  \Ex_{z \sim \hat{D}_{t,i}} [\widetilde{R}_i(\sigma_t;z)]   - \Ex_{z \sim D_{t,i}} [\widetilde{R}_i(\sigma_t;z)]     \\
&=  \int_0^{1} \rbr{F_{D_{t,i}}(z)-F_{\hat{D}_{t,i}}(z)}\frac{\partial}{\partial z}\widetilde{R}_i(\sigma_t;z)\:dz \\
&\leq  \int_0^{1} \abr{F_{D_{t,i}}(z)-F_{\hat{D}_{t,i}}(z)} \:dz\\
&=  \int_0^{1} \abr{F_{D_{i}}(z-\mu_{t,i})-F_{\widetilde{D}_{t,i}}(z-\mu_{t,i})}  \:dz\\
&\leq  \order \rbr{   \sqrt{\frac{L}{m_{t,i}} }   },
\end{align*}
where the first inequality follows from the fact that $ \frac{\partial}{\partial z}\widetilde{R}_i(\sigma_t;z) \in [-1,1]$ due to $1$-Lipschitzness, and the last inequality follows from the concentration bound given by $\mathscr{E}$.

\textbf{Putting together.}
Combining all bounds above, we have
\begin{align*}
 \Reg(t)\leq  \sum_{i=1}^n \order \rbr{\E \sbr{  Q_{t,i}   \rbr{ \sqrt{ \frac{L}{m_{t,i}} } +   \frac{1}{m_{t,i}} \sum_{j =1}^{m_{t,i}}  \abr{ z_{t_i(j),i}^t-\hat{z}_{t_i(j),i}^t }   }    }}.
\end{align*}

Summing over all $t \in [T]$, we complete the proof of \pref{lem:total_round_reg_contextual}.

\end{document}